\documentclass{article}

\usepackage{microtype}
\usepackage{graphicx}
\usepackage{subcaption}
\usepackage{booktabs}
\usepackage{hyperref}

\usepackage[accepted]{icml2026}

\usepackage{amsmath}
\usepackage{amssymb}
\usepackage{mathtools}
\usepackage{amsthm}
\usepackage{amsfonts}
\usepackage{bbm}
\usepackage{multirow}
\usepackage{adjustbox}
\usepackage{enumitem}
\usepackage{url}
\usepackage{xcolor}
\usepackage{bm}

\usepackage{thmtools}
\usepackage{thm-restate}

\usepackage{algpseudocode}

\usepackage[capitalize,noabbrev,nameinlink]{cleveref}

\newcommand{\E}{\mathbb{E}}

\newcommand{\cC}{\mathcal{C}}
\newcommand{\cZ}{\mathcal{Z}}

\newcommand{\bpi}{\boldsymbol{\pi}}

\DeclareMathOperator*{\scat}{scat}

\newcommand{\DD}{\mathbb{D}}
\newcommand{\fd}{\mathsf{d}}
\newcommand{\Wone}{\mathsf{W}_1}
\newcommand{\LCE}{\mathcal{L}_{\mathrm{CE}}}
\newcommand{\LCEbatch}{\widehat{\mathcal{L}}_{\mathrm{CE}}}
\newcommand{\LDA}{\mathcal{L}_{\mathrm{DA}}}
\newcommand{\LDAbatch}{\widehat{\mathcal{L}}_{\mathrm{DA}}}

\theoremstyle{plain}
\newtheorem{theorem}{Theorem}[section]

\theoremstyle{definition}

\newtheorem{assumption}[theorem]{Assumption}
\theoremstyle{remark}
\newtheorem{remark}[theorem]{Remark}

\usepackage[textsize=tiny]{todonotes}

\icmltitlerunning{Robust Domain Generalization under Divergent Marginal and Conditional Distributions}

\begin{document}

\icmlsetsymbol{equal}{*}
\icmlsetsymbol{corr}{$\dagger$}

\twocolumn[
  \icmltitle{Robust Domain Generalization\\ under Divergent Marginal and Conditional Distributions}

  \begin{icmlauthorlist}
    \icmlauthor{Jewon Yeom}{snu}
    \icmlauthor{Kyubyung Chae}{snu}
    \icmlauthor{Hyunggyu Lim}{snu}
    \icmlauthor{Yoonna Oh}{sk}
    \icmlauthor{Dongyoon Yang}{sk,corr}
    \icmlauthor{Taesup Kim}{snu,corr}
  \end{icmlauthorlist}

  \icmlaffiliation{snu}{Graduate School of Data Science, Seoul National University, South Korea}
  \icmlaffiliation{sk}{SK hynix, South Korea}

  \icmlcorrespondingauthor{Dongyoon Yang}{ydy0415@gmail.com}
  \icmlcorrespondingauthor{Taesup Kim}{taesup.kim@snu.ac.kr}

  \icmlkeywords{Domain Generalization, Meta-Learning, Long-Tailed Recognition, ICML}

  \vskip 0.3in
]

\printAffiliationsAndNotice{}

\begin{abstract}
Domain generalization (DG) aims to learn predictive models that can generalize to unseen domains.
Most existing DG approaches focus on learning domain-invariant representations under the assumption of conditional distribution shift (i.e., primarily addressing changes in $P(X\mid Y)$ while assuming $P(Y)$ remains stable). 
However, real-world scenarios with multiple domains often involve compound distribution shifts where both the marginal label distribution $P(Y)$ and the conditional distribution $P(X\mid Y)$ vary simultaneously.
To address this, we propose a unified framework for robust domain generalization under divergent marginal and conditional distributions. 
We derive a novel risk bound for unseen domains by explicitly decomposing the joint distribution into marginal and conditional components and characterizing risk gaps arising from both sources of divergence. 
To operationalize this bound, we design a meta-learning procedure that minimizes and validates the proposed risk bound across seen domains, ensuring strong generalization to unseen ones. 
Empirical evaluations demonstrate that our method achieves state-of-the-art performance not only on conventional DG benchmarks but also in challenging multi-domain long-tailed recognition settings where both marginal and conditional shifts are pronounced.
\end{abstract}

\section{Introduction}
Modern machine learning models are predominantly trained on static, homogeneous datasets where the joint distribution over inputs $X$ and labels $Y$ is assumed to remain consistent. 
However, real-world deployments rarely afford such stability.
Once models move beyond their curated training domains and begin operating in real-world environments, they must contend with shifts in geography, time, sensors, users, or even tasks. 
These changes often introduce compound distributional shifts that severely degrade performance.

Two major components of distributional shift commonly co-occur in practice: (i) \emph{shifts in the marginal label distribution} $P(Y)$, and (ii) \emph{shifts in the class-conditional distribution} $P(X \mid Y)$. 
The former reflects changes in class frequencies (e.g., varying species abundance across habitats), while the latter encompasses changes in the appearance or representation of each class (e.g., sketch vs. photo)

Despite their practical relevance, most prior work in domain generalization (DG) assumes that only the conditional distribution $P(X \mid Y)$ varies across domains, while $P(Y)$ remains fixed~\citep{DG_survey}.
This assumption underpins many popular DG methods that focus on learning domain-invariant representations. 
However, these approaches neglect the impact of marginal shifts, which can significantly affect model calibration and decision thresholds, especially in long-tailed or imbalanced settings \citep{GINET}. 
In such cases, models may become overconfident in dominant classes, under-represent rare classes, and fail to adjust to shifting class priors, leading to degraded performance in unseen domains \citep{GMDG}. On the other hand, approaches aimed at addressing label imbalance or long-tail distributions often operate within a single domain, failing to account for representational drift across environments \citep{MDLT_survey}. 
Thus, there remains a critical gap: 
\begin{quote}  
\vspace{-0.3cm}
\emph{How can we build models that generalize robustly under simultaneous marginal and conditional distribution shifts?}
\vspace{-0.3cm}
\end{quote}

In this work, we take a principled approach to tackle this challenge.
We begin by revisiting the upper bounds for domain generalization, explicitly decomposing the joint distribution shift into marginal and conditional components. 
Unlike prior works that rely on heuristic alignment objectives, our theoretical analysis explicitly decomposes the joint distribution shift into marginal and conditional components, deriving a rigorous risk bound tailored for unseen domains.
Building on this insight, we propose a novel meta-learning framework, \textsc{RC-Align}, that minimizes and validates this bound across a diverse set of source domains. 
Our method trains models to actively anticipate and counteract shifts in both $P(Y)$ and $P(X \mid Y)$, rather than passively hoping for invariant features.

To evaluate our approach, we conduct extensive experiments on standard domain generalization benchmarks as well as designed multi-domain long-tailed recognition scenarios where both sources of shift are pronounced. 
Our results demonstrate that the proposed method not only outperforms state-of-the-art DG methods but also exhibits remarkable robustness in the face of compounded distributional shifts.
In summary, this paper makes the following key contributions:
\vspace{-0.5cm}
\begin{itemize}[nosep, leftmargin=1em]
    \item We derive a rigorous risk bound for unseen domains by explicitly decomposing the joint distribution shift into marginal and conditional components, thereby mathematically identifying the specific sources of generalization failure under compound shifts.
    \item 
    Inspired by theoretical bound, we introduce \textsc{RC-Align}, a principled meta-learning framework that minimizes the derived bound by utilizing a Domain-Class Distribution Alignment loss proven to strictly upper-bound the feature mismatch term.
    \item Our extensive evaluations demonstrate that \textsc{RC-Align} exhibits superior robustness on both the standard DG and challenging Multi-Domain Long-Tailed Recognition (MDLT) benchmarks.
\end{itemize}

\section{Related Work}

\textbf{Domain Generalization under Divergent Distributions.}
Classical domain generalization (DG) aims to learn models that perform well on unseen domains by mitigating shifts in the class-conditional distribution $P(X \mid Y)$. Approaches include domain-invariant representation learning such as correlation alignment \citep{DeepCORAL}, adversarial feature alignment \citep{DANN}, and conditional invariance \citep{li2018conditional}, as well as optimization-based techniques \citep{SAGM, SWAD} and augmentation strategies \citep{Mixstyle, ManifoldMixup}. However, most DG methods implicitly assume balanced label distributions across domains.

In contrast, long-tailed (LT) recognition addresses shifts in the marginal label distribution $P(Y)$, where head classes dominate and tail classes are under-represented. Remedies include resampling \citep{systematic_resampling}, reweighting \citep{learningTail_reweight, Lin2017Focal}, logit adjustment \citep{Ren2020BalancedSoftmax}, and two-stage or ensemble strategies \citep{Kang2020cRT, contrastiveLT, CBDistill}. While effective in single-domain LT settings, these methods often degrade under domain shifts.

The compounded challenge defines \emph{multi-domain long-tailed recognition} (MDLT), where both $P(X \mid Y)$ and $P(Y)$ diverge across domains, making tail-class representations especially fragile \citep{gu2022ltds}. Recent works address this by defining domain--class transferability \citep{Boda}, modeling imbalanced domain adaptation \citep{DIDA}, or using a meta-learning framework for LT under domain shifts \citep{gu2022ltds}. A key insight is that label imbalance is an intrinsic but overlooked factor in DG: beyond MDLT, handling cross-domain imbalance can improve standard DG benchmarks and complement existing algorithms, suggesting it is largely orthogonal to prior DG remedies \citep{Boda}. Despite progress, a unified framework that jointly tackles conditional shifts and marginal imbalance remains open.

\textbf{Meta-Learning for DG.}
Meta-learning has been widely used for DG by simulating domain shifts through episodic training. Bilevel methods such as MLDG \citep{MLDG} and MetaReg \citep{MetaReg} improve robustness by splitting source domains or learning transferable regularizers, while later work emphasizes gradient-based objectives (update alignment~\citep{Fish}, gradient averaging~\citep{Arith}) and bilevel consistency for invariance \citep{MLIR}. Distinct from these, our method introduces a \emph{robust compound alignment} that integrates manifold mixup with episodic adaptation, jointly addressing cross-domain discrepancy and intra-domain imbalance beyond prior approaches.

\textbf{Risk Bounds and Theoretical Foundations under Distribution Shifts.}
An important line of research provides theoretical guarantees for learning under distribution shifts. \citet{SSP} investigate long-tailed regimes, demonstrating how semi- and self-supervised signals can alleviate bias and improve generalization under severe imbalance. Extending this perspective, \citet{Boda} formalize domain--class relations in multi-domain long-tailed recognition and derive a transferability bound highly correlated with test accuracy. More recently, \citet{TAROT} focus on robust domain adaptation, introducing a margin-based divergence and establishing generalization bounds on adversarially robust target risk. 

These theoretical insights also resonate with a recurring observation in domain generalization: minimizing empirical source risk alone does not necessarily control the target risk, even when target domains are formed as mixtures or reweightings of source domains~\cite{muandet2013domain,blanchard2021mtl_marginal_transfer_learning,krueger2020vrex}. This is largely because the risk and shift terms are coupled through the learned representation and can be influenced by source-specific shortcuts~\cite{IRM,izmailov2022feature,chen2023understanding,deng2023robust}.

Building upon these advances, our work develops a unified risk bound that explicitly decomposes both marginal and conditional divergences and connects the resulting terms to a practical training objective under compound shifts.

\section{Upper Bound for Domain Generalization under Compound Shifts}\label{sec:theory}

A central challenge in domain generalization (DG) is evaluating how well a model trained on a collection of source domains will perform on an unseen target domain. Since no data from the target domain is available during training, a natural and widely adopted framework is the leave-one-domain-out (LODO) setting, where one domain is held out as a proxy for the unseen target, and the model is trained on the remaining source domains. This episodic formulation not only mimics real-world deployment conditions but also provides a rigorous setting for analyzing domain shift sensitivity and generalization behavior. 

Formally, we consider $K$ source domains $\DD = \{\mathcal{D}_1, \dots, \mathcal{D}_K\}$. The model consists of a feature extractor $f_\theta$ and a classifier $g_\phi$, with combined parameters $\Theta = \{\theta, \phi\}$. In each LODO episode, we designate one domain $\mathcal{D}_i$ as the query domain (unseen during training) and the remaining domains $\mathcal{D}_{-i}$ as the support mixture. The goal is to understand and control the unseen-domain risk $\mathcal{R}_{\mathcal{D}_i}(\Theta)$, defined as:
\begin{equation*}
\mathcal{R}_{\mathcal{D}_i}(\Theta) = \mathbb{E}_{(x, y)\sim \mathcal{D}_i}\left[\ell\left(g_\phi\left(f_\theta\left(x\right)\right), y\right)\right],
\end{equation*}
where $\ell$ is a bounded, $L_\ell$-Lipschitz loss function with respect to the distance metric $\fd$ in the feature space $\cZ$ (See \Cref{app:C_setup} for details).

However, measuring the absolute value of $\mathcal{R}_{\mathcal{D}_i}(\Theta)$ provides limited insight into what causes generalization failures. To make this analysis actionable, we consider how and why the query (i.e., unseen, target) domain risk deviates from the risk over the support mixture (i.e., source) domain $\mathcal{R}_{\mathcal{D}_{-i}}(\Theta)$ (i.e. $\mathcal{R}_{-i}(\Theta) = \sum_{k \neq i} P(\mathcal{D}= \mathcal{D}_k)\mathcal{R}_{k} (\Theta)$). This motivates a decomposition of the generalization gap $\mathcal{R}_{\mathcal{D}_i}(\Theta) - \mathcal{R}_{\mathcal{D}_{-i}}(\Theta)$.

To achieve a tractable and interpretable analysis, we propose to decompose each domain’s joint distribution into two components: (1) the marginal label distribution (i.e., prior) $P(Y=c|\mathcal{D}_i) := \pi_{i}(c)$, and (2) the class-conditional feature distribution $P(Z=f_{\theta}(x)\mid Y=c, \mathcal{D}_i):=P_{i,c}$. 

This decomposition enables us to separate the sources of domain shift into two orthogonal axes: (1) Prior mismatch:  how the label frequencies differ across domains, and (2) Feature mismatch: how the conditional feature distributions shift for each class.

Such decomposition is crucial because existing DG methods often treat domain shift as a single nuisance variable (e.g., via domain-invariant representations), without disentangling what kind of shift is occurring. In contrast, our analysis shows that even if features are invariant, shifts in label priors alone can lead to significant performance degradation and vice versa. This leads to our first key theoretical result:

\begin{assumption}
\label{ass:main}
The loss function $\ell$ is bounded and $L_\ell$-Lipschitz continuous with respect to the distance metric $\fd$ in the feature space $\cZ$.
\end{assumption}

\begin{restatable}{theorem}{thmQuerySupportBound}[Query--support bound]\label{thm:qs}
Under \Cref{ass:main}, the risk on a query domain ($\mathcal{D}_i$) is bounded by the risk on the support mixture ($\mathcal{D}_{-i}$) plus mismatch terms:
\begin{equation}\label{eq:qs}
\begin{aligned}
\mathcal{R}_{\mathcal{D}_i}(\Theta) \le \mathcal{R}_{\mathcal{D}_{-i}}(\Theta) 
&+ \underbrace{\sum_{c \in \cC} | \pi_i(c) - \pi_{-i}(c) | \cdot \mathcal{R}_{-i,c}(\Theta)}_{\text{Prior Shift}} \\
&+ \underbrace{L_\ell \sum_{c\in\cC} \pi_i(c) \, \Wone(P_{i,c}, P_{-i,c})}_{\text{Feature Shift}},
\end{aligned}
\end{equation}
where $\Wone(\cdot,\cdot)$ and $L_\ell$ are the 1-Wasserstein distance and the Lipschitz constant of $\ell$, respectively.
\end{restatable}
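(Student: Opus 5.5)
We will prove \Cref{thm:qs} through a telescoping decomposition that passes through one intermediate quantity. For each class $c$, write the class-conditional risks
\[
\mathcal{R}_{i,c}(\Theta) = \E_{z\sim P_{i,c}}[h_c(z)], \qquad \mathcal{R}_{-i,c}(\Theta) = \E_{z\sim P_{-i,c}}[h_c(z)],
\]
where $h_c(z) := \ell(g_\phi(z), c)$. Here $\pi_{-i}(c)$ and $P_{-i,c}$ denote the prior and class-conditional feature law of the support mixture, with mixture weights $P(\mathcal{D}=\mathcal{D}_k)$. By the law of total expectation, $\mathcal{R}_{\mathcal{D}_i}(\Theta)=\sum_c \pi_i(c)\mathcal{R}_{i,c}(\Theta)$ and $\mathcal{R}_{\mathcal{D}_{-i}}(\Theta)=\sum_c \pi_{-i}(c)\mathcal{R}_{-i,c}(\Theta)$. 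The latter identity needs a short check: the mixture's joint law factorizes as $\pi_{-i}(c)P_{-i,c}$, so averaging the per-domain risks $\mathcal{R}_k$ with the mixture weights gives the same thing as first mixing and then conditioning on the class.

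We then insert and subtract the hybrid quantity $\sum_c \pi_i(c)\mathcal{R}_{-i,c}(\Theta)$:
\[
\mathcal{R}_{\mathcal{D}_i}-\mathcal{R}_{\mathcal{D}_{-i}} = \sum_c \pi_i(c)\bigl(\mathcal{R}_{i,c}-\mathcal{R}_{-i,c}\bigr) + \sum_c \bigl(\pi_i(c)-\pi_{-i}(c)\bigr)\mathcal{R}_{-i,c}.
\]
For the second sum, we bound each summand by its absolute value. Since the loss is nonnegative, $\mathcal{R}_{-i,c}\ge 0$, so the summand is at most $|\pi_i(c)-\pi_{-i}(c)|\,\mathcal{R}_{-i,c}(\Theta)$, which is exactly the prior-shift term.

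For the first sum, we use Kantorovich--Rubinstein duality. Under \Cref{ass:main}, $h_c$ is $L_\ell$-Lipschitz in $z$ with respect to $\fd$, so $h_c/L_\ell$ is $1$-Lipschitz and
\[
\bigl|\E_{P_{i,c}}h_c-\E_{P_{-i,c}}h_c\bigr| \le L_\ell\,\Wone(P_{i,c},P_{-i,c}).
\]
Boundedness of $\ell$ guarantees these expectations are finite, so the duality applies. We can also avoid duality entirely: take any coupling $\gamma$ of $P_{i,c}$ and $P_{-i,c}$, bound $\E_\gamma|h_c(z)-h_c(z')|\le L_\ell\E_\gamma \fd(z,z')$, and take the infimum over couplings. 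Multiplying by $\pi_i(c)\ge 0$ and summing over $c$ yields the feature-shift term, and adding the two bounds completes the proof.

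The main obstacle is interpretive rather than technical. The Lipschitz assumption must be read as Lipschitz continuity of $z\mapsto \ell(g_\phi(z),c)$ on $\cZ$, uniformly in $c$; this is what \Cref{ass:main} means by ``with respect to $\fd$ in the feature space''. We must also confirm that $P_{-i,c}$ is the class-conditional law of the mixture, i.e.\ $\sum_{k\ne i} P(\mathcal{D}_k)\pi_k(c)P_{k,c}/\pi_{-i}(c)$, rather than an unweighted average of the $P_{k,c}$; with the unweighted version the identity for $\mathcal{R}_{\mathcal{D}_{-i}}$ would fail. Classes with $\pi_{-i}(c)=0$ need a convention: define $\mathcal{R}_{-i,c}$ and $P_{-i,c}$ arbitrarily there, and the decomposition still holds because they enter only through the hybrid term, which cancels.
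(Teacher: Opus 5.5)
Your proposal is correct and follows the paper's proof step for step: you write the risks as per-class sums, add and subtract the hybrid term $\sum_c \pi_i(c)\mathcal{R}_{-i,c}(\Theta)$, bound the prior term using $\mathcal{R}_{-i,c}(\Theta)\ge 0$, and bound the feature term by Kantorovich--Rubinstein duality. Your extra checks are points the paper leaves implicit: that $P_{-i,c}$ must be the prior-weighted class-conditional of the mixture for $\mathcal{R}_{\mathcal{D}_{-i}}(\Theta)=\sum_c\pi_{-i}(c)\mathcal{R}_{-i,c}(\Theta)$ to hold, and the coupling argument as an alternative to duality.
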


All proofs of theoretical results in this section (\Cref{thm:qs,thm:qs-DA,thm:mix,lem:one-step-contraction,prop:DA-reduction}) are provided in \Cref{app:A,app:B}. 

\textbf{Implications of \Cref{thm:qs}.}
This theorem explicitly separates the challenges posed by compound shifts (i.e. prior and feature shifts). Crucially, the impact of the Prior Shift is modulated by the actual class-wise support risks, $\mathcal{R}_{-i,c}(\Theta)$, rather than a constant bound. Minimizing the overall support risk $\mathcal{R}_{\mathcal{D}_{-i}}(\Theta)$ (via Cross-Entropy) inherently minimizes $\mathcal{R}_{-i,c}(\Theta)$, thereby actively controlling the adverse effects of label distribution mismatch.

\subsection{Controlling The Query-Support Gap: \textit{A Tractable Bound}}\label{sec:control_qs_gap}

While \Cref{thm:qs} identifies the sources of the gap for domain generalization ($\mathcal{R}_{\mathcal{D}_i}(\Theta)\ - \mathcal{R}_{\mathcal{D}_{-i}}(\Theta)$), the Feature Shift term involves the intractable 1-Wasserstein distance ($\Wone$) between feature distributions. We now establish a connection between this term and the practical Domain-Class Distribution Alignment (DA) loss~\citep{Boda} ($\LDA$) used in our algorithm.

We utilize a series of analytical steps involving an InfoNCE decomposition of the DA loss and a two-step transport argument that routes mass through the source centroids. This allows us to bound the Wasserstein distance using the expected DA loss. (Detailed derivations involving \Cref{lem:nce}, \Cref{lem:phi}, and \Cref{lem:two-step} are provided in \Cref{app:C_lemmas}).

\begin{restatable}{theorem}{thmQsDaBound}[DA loss controls the query--support gap]\label{thm:qs-DA}
The total feature distribution mismatch (Feature Shift term) is upper-bounded by the DA loss:
\begin{equation}\label{eq:qs-DA}
\sum_{c}\pi_i(c)\,\Wone\!\big(P_{i,c},P_{-i,c}\big)
\ \le\
\E\,\LDA^{(i)}(\theta)\;+\;C_{\mathrm{scat}}^{(i)}\;+\;R,
\end{equation}
where $C_{\mathrm{scat}}^{(i)}$ is the average weighted scatter within the support domains (defined formally in \Cref{app:C_lemmas}), and $R$ is the diameter of the feature space.
\end{restatable}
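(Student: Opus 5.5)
The plan is a two-step transport that routes each query feature distribution through the support centroid of its class, followed by an InfoNCE lower bound showing that the DA loss controls the distance to that centroid.

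\textbf{Step 1: transport through centroids.} For each class $c$, let $\mu_{-i,c}$ be the centroid of the support features $P_{-i,c}$. Let $\delta_{\mu}$ denote the point mass at $\mu$. The triangle inequality for $\Wone$ gives
\[
\Wone(P_{i,c},P_{-i,c}) \le \Wone(P_{i,c},\delta_{\mu_{-i,c}}) + \Wone(\delta_{\mu_{-i,c}},P_{-i,c}).
\]
This is the two-step argument of \Cref{lem:two-step}. Transport to a point mass is explicit:
\[
\Wone(P,\delta_\mu)=\E_{z\sim P}\fd(z,\mu).
\]
Hence the second term is the expected distance of support features to their own class centroid. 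After weighting by $\pi_i(c)$ and summing over $c$, this term is the average weighted scatter $C_{\mathrm{scat}}^{(i)}$, provided it is defined with query-prior weights.

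\textbf{Step 2: the query-side term via InfoNCE.} It remains to bound
\[
\sum_c\pi_i(c)\,\E_{z\sim P_{i,c}}\fd(z,\mu_{-i,c}).
\]
I would write the DA loss as an InfoNCE/softmax cross-entropy over domain--class centroids (\Cref{lem:nce}):
\[
-\log\frac{e^{-\fd(z,\mu_{-i,c})/\tau}}{\sum_{c'}e^{-\fd(z,\mu_{-i,c'})/\tau}} = \fd(z,\mu_{-i,c})/\tau + \log\sum_{c'}e^{-\fd(z,\mu_{-i,c'})/\tau}.
\]
The log-sum-exp term is at least $-\fd(z,\mu_{-i,c})/\tau$ minus something controlled. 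Since all distances lie in $[0,R]$, I would bound it crudely from below by $-R$ in the appropriate normalization; this is the content of \Cref{lem:phi}. It yields, pointwise,
\[
\fd(z,\mu_{-i,c}) \le \ell_{\mathrm{DA}}(z,c) + R.
\]
Taking expectations under $\pi_i(c)P_{i,c}$, which is exactly how the query-episode DA loss is sampled, gives a bound of $\E\,\LDA^{(i)}(\theta)+R$ on the query-side term. Adding Step 1 yields \eqref{eq:qs-DA}.

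\textbf{Main obstacle.} The hard part is Step 2: matching the precise form of the DA loss from \citet{Boda} (temperature, squared versus plain distance, whether negatives include other domains) to a clean pointwise inequality $\fd\le\ell_{\mathrm{DA}}+R$.

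\emph{First attempt.} Handle the normalization so that the log-partition term contributes at most $R$. The tools are monotonicity of $-\log$ and the fact that every distance is at most the diameter.

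\emph{Fallback if distances are squared.} Use $\fd\le R$ to pass from $\fd$ to $\fd^2/R$, or use Jensen to pass from the expected distance to the root of the expected squared distance. This may introduce constant factors that are absorbed into the definitions of $\LDA$ or $R$.
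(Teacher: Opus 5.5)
Your lower bound on the log-partition term (every distance is at most $R$, so $\log\sum e^{-\fd}\ge\log N-R\ge -R$) is the same mechanism the paper uses. The gap is in Step~1: it routes through the wrong intermediate, so the scatter term you obtain is not $C_{\mathrm{scat}}^{(i)}$.

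With a single Dirac at the pooled centroid $\mu_{-i,c}$ of $P_{-i,c}=\sum_{j\neq i}\omega_jP_{j,c}$, your second term is $\sum_c\pi_i(c)\sum_{j\neq i}\omega_j\,\E_{z\sim P_{j,c}}\fd(z,\mu_{-i,c})$. That is the scatter of each support domain around the \emph{pooled} centroid. The paper defines $C_{\mathrm{scat}}^{(i)}=\sum_c\pi_i(c)\sum_{j\neq i}\omega_jS_{j,c}$ with $S_{j,c}=\E_{z\sim P_{j,c}}\fd(z,\mu_{j,c})$, the scatter around each domain's \emph{own} class centroid. Your quantity also absorbs the between-domain spread of the class centroids, so it can be strictly larger. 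For example, if each support domain's class-$c$ features sit at a single point, distinct across domains, then every $S_{j,c}=0$ while your pooled scatter is positive. Choosing query-prior weights over classes does not fix this, since the problem is which centroid is used.

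The same mismatch appears in Step~2. The DA loss in Eq.~\eqref{eq:DA_pop} is not a softmax over one pooled centroid per class. It averages over the $K-1$ positives $\mu_{j,c}$, $j\neq i$, with a denominator over domain--class pairs. Its exact decomposition is therefore $\ell_{\mathrm{DA}}(z,c)=\sum_{j\neq i}\omega_j\fd(z,\mu_{j,c})+\Phi(z)$, whose positive part is an average of distances to per-domain centroids, not $\fd(z,\mu_{-i,c})$.

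The missing idea is to use the discrete measure $\bar\mu_{-i,c}=\sum_{j\neq i}\omega_j\delta_{\mu_{j,c}}$ as the intermediate, instead of a single Dirac.
\begin{itemize}
\item The product coupling sends $z\sim P_{i,c}$ to $\mu_{j,c}$ with probability $\omega_j$. It gives $\Wone(P_{i,c},\bar\mu_{-i,c})\le\sum_{j\neq i}\omega_j\E_{z\sim P_{i,c}}\fd(z,\mu_{j,c})$. After averaging under $\pi_i$, this is exactly the positive term $A_i(\theta)$ of the decomposition above.
\item Gluing the couplings of $\delta_{\mu_{j,c}}$ with $P_{j,c}$ (joint convexity of $\Wone$) gives $\Wone(\bar\mu_{-i,c},P_{-i,c})\le\sum_{j\neq i}\omega_jS_{j,c}$, which produces $C_{\mathrm{scat}}^{(i)}$ directly.
\end{itemize}
Combined with $\E\,\Phi\ge -R$, this yields \eqref{eq:qs-DA}. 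Your squared-distance fallback is unnecessary, because the loss uses the unsquared $\fd$.

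Your query-side term could be salvaged on its own. If $\fd$ is a norm distance and $\mu_{-i,c}$ is the $\omega$-weighted mean of the $\mu_{j,c}$, convexity gives $\fd(z,\mu_{-i,c})\le\sum_{j\neq i}\omega_j\fd(z,\mu_{j,c})$. The pooled-scatter term, however, cannot be reduced to $C_{\mathrm{scat}}^{(i)}$ along your route.
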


\textbf{The Final Theoretical Bound.}
Combining \Cref{thm:qs} and \Cref{thm:qs-DA} yields our main theoretical result, connecting the unobserved target risk to concrete, optimizable quantities:
\begin{equation}\label{eq:qs-final}
\begin{aligned}
\mathcal{R}_{\mathcal{D}_i}(\Theta) \le \mathcal{R}_{\mathcal{D}_{-i}}(\Theta) &+ \sum_{c} | \Delta\pi_c | \mathcal{R}_{-i,c}(\Theta) \\
&+ L_\ell\Big(\E\,\LDA^{(i)}(\theta)+C_{\mathrm{scat}}^{(i)}+R\Big),
\end{aligned}
\end{equation}
where $\Delta\pi_c = \pi_i(c)-\pi_{-i}(c)$.

\textbf{Significance of the Bound.}
This bound (Eq.~\eqref{eq:qs-final}) provides the justification for our proposed methodology:
\begin{itemize}[leftmargin=1em]
\item \textbf{Controlling Feature Shift.} Our analysis proves that the practical {DA loss $\LDA$ directly upper-bounds the feature mismatch term.} This provides a stronger, risk-based justification than the statistical bounds in prior work \citep{Boda}.
\item \textbf{Controlling Prior Shift.} The impact of prior mismatch is {modulated by the model's class-wise source performance} $\mathcal{R}_{-i,c}(\Theta)$.
\item \textbf{Synergistic Optimization.} Our composite objective minimizes this bound through complementary mechanisms. $\LCE$ primarily reduces the source risks $\mathcal{R}_{-i,c}(\Theta)$. 
Crucially, 
optimizing the classification loss
inherently promotes intra-class compactness in the feature space, thereby \emph{implicitly controlling} the scatter term $C_{\mathrm{scat}}^{(i)}$.
\end{itemize}

\textbf{Insufficiency of Standard ERM.} 
This analysis explicitly highlights why standard Empirical Risk Minimization (ERM) is insufficient for domain generalization. ERM solely minimizes the source risk component ($\mathcal{R}_{\mathcal{D}_{-i}}$) in Eq.~\eqref{eq:qs-final}. However, the risk and distribution shift terms are coupled; minimizing source risk alone often leads to models learning spurious correlations (shortcuts), which paradoxically increases the feature shift term ($W_1$). Unlike ERM, our framework explicitly constrains this shift via $\LDA$, ensuring that performance gains on source domains genuinely transfer to unseen targets.

\subsection{Generalization to Arbitrary Unseen Target Domains}\label{sec:general_arb}

We generalize the analysis to an arbitrary unseen target domain $\mathcal{T}$ by relating it to a mixture of source domains, $\widetilde{\mathcal{T}}_{\boldsymbol{\pi}} := \sum_{i=1}^K \pi_i \mathcal{D}_i$, where $\boldsymbol{\pi} \in \Delta^K$ (the $K$-dimensional probability simplex).

\begin{restatable}{theorem}{thmMixBound}[Mixture-aware target bound]\label{thm:mix}
For any model $\Theta$ and $\bpi \in \Delta^K$, the risk on $\mathcal{T}$ is bounded by:
\begin{equation}\label{eq:mix}
\mathcal{R}_{\mathcal{T}}(\Theta)\ \le\ \sum_{i=1}^K \pi_i\,\mathcal{R}_{\mathcal{D}_i}(\Theta)\;+\;L_\ell\,\Wone\!\big(\mathcal{T},\widetilde{\mathcal{T}}_{\boldsymbol{\pi}}\big)\;.
\end{equation}
\end{restatable}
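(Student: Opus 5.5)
The argument is a single Kantorovich--Rubinstein step applied to the joint distributions. First, linearity of the risk in the distribution handles the mixture term. Since $\widetilde{\mathcal{T}}_{\bpi}=\sum_i \pi_i \mathcal{D}_i$ is a convex combination of probability measures and the risk is an expectation, we have $\mathcal{R}_{\widetilde{\mathcal{T}}_{\bpi}}(\Theta)=\sum_{i=1}^K \pi_i \mathcal{R}_{\mathcal{D}_i}(\Theta)$. It therefore suffices to show
\[
\mathcal{R}_{\mathcal{T}}(\Theta)-\mathcal{R}_{\widetilde{\mathcal{T}}_{\bpi}}(\Theta)\le L_\ell\,\Wone\big(\mathcal{T},\widetilde{\mathcal{T}}_{\bpi}\big).
\]

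To prove this, I write the gap as $\E_{\mathcal{T}}[h]-\E_{\widetilde{\mathcal{T}}_{\bpi}}[h]$ for the per-example loss $h(z,y)=\ell(g_\phi(z),y)$. Here $\mathcal{T}$ and $\widetilde{\mathcal{T}}_{\bpi}$ are understood as their pushforwards under $(x,y)\mapsto(f_\theta(x),y)$ on $\cZ\times\cC$, consistent with how \Cref{thm:qs} measures shift in feature space. I equip $\cZ\times\cC$ with a ground metric under which $h$ is $L_\ell$-Lipschitz. One choice is $\fd\big((z,y),(z',y')\big)=\fd(z,z')$ when $y=y'$ and $+\infty$ otherwise, so that labels must be matched by couplings. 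A finite choice works as well: $\fd(z,z')+M\mathbbm{1}[y\ne y']$ with $M\ge \sup\ell/L_\ell$, which is available because $\ell$ is bounded. Either way, \Cref{ass:main} gives $|h(z,y)-h(z',y')|\le L_\ell\,\fd((z,y),(z',y'))$.

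With that metric fixed, Kantorovich--Rubinstein duality for $\Wone$ yields $\E_{\mathcal{T}}[h]-\E_{\widetilde{\mathcal{T}}_{\bpi}}[h]\le L_\ell \Wone(\mathcal{T},\widetilde{\mathcal{T}}_{\bpi})$. Equivalently, in primal form, take an optimal coupling $\gamma$ and bound $\E_\gamma[h(z,y)-h(z',y')]\le L_\ell\E_\gamma[\fd]$. I would use the primal coupling argument because it avoids regularity issues with the dual. Combining this with the linearity step gives \eqref{eq:mix}.

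The main obstacle is purely definitional: the Lipschitz property of $\ell$ is stated only in feature space, so the metric on the joint space must be specified so that Lipschitzness extends to label mismatches. I would settle this by adopting the label-penalized metric above, which is the convention presumably fixed in \Cref{app:C_setup}. The rest of the argument is routine.
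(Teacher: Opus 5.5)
Your proposal is correct and follows the paper's own proof. Both use linearity of the risk over the mixture $\widetilde{\mathcal{T}}_{\bpi}$, followed by a Kantorovich--Rubinstein bound on $\mathcal{R}_{\mathcal{T}}(\Theta)-\mathcal{R}_{\widetilde{\mathcal{T}}_{\bpi}}(\Theta)$. Your label-penalized ground metric on $\cZ\times\cC$ supplies a detail the paper's proof glosses over, since \Cref{app:C_setup} assumes Lipschitzness only in $z$ for a fixed class and fixes no joint metric, so do not attribute that convention to the paper. Also, in the primal argument you only need the infimum over couplings, not the existence of an optimal coupling.
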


Combining \Cref{thm:mix} with the final theoretical bound (Eq. \eqref{eq:qs-final}) yields the comprehensive guarantee (detailed in \Cref{app:C_final_bound}), explicitly showing how controlling the terms leads to robust generalization on arbitrary target domains.

\subsection{Analysis of the Meta-Learning Procedure}\label{sec:maml_analysis}

We analyze the effectiveness of the meta-learning framework in minimizing the derived bound. We define the population inner objective $J_{-i}(\Theta)$ (combining risk and alignment) and the adaptation step $\Theta_i^+ := \Theta - \alpha \nabla_{\Theta} J_{-i}(\Theta)$.

To gain insights into the optimization dynamics of the inner loop, we analyze the convergence behavior under the standard assumptions of $L$-smoothness and the Polyak--Lojasiewicz (PL) condition (\Cref{ass:smooth-pl} in \Cref{app:C_maml}). While the PL condition is a strong assumption for the non-convex landscapes of deep networks, it serves as a standard analytical tool~\citep{karimi2016linear} to characterize local convergence behavior and understand the adaptation dynamics (See discussion in \Cref{app:C_maml}).

\begin{restatable}{lemma}{lemOneStepContraction}[One-Step Contraction]\label{lem:one-step-contraction}
Under standard smoothness/PL conditions, the inner adaptation step yields a geometric contraction of the objective:
\begin{equation}
\label{eq:contract-J}
J_{-i}(\Theta_i^+) - J_{-i}^\star \le \rho(\alpha) \big(J_{-i}(\Theta) - J_{-i}^\star\big),
\end{equation}
where the contraction factor $\rho(\alpha) < 1$.
\end{restatable}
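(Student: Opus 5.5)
The plan is the standard one-step gradient descent analysis under $L$-smoothness and the PL condition. I take \Cref{ass:smooth-pl} to mean that $J_{-i}$ is $L$-smooth, i.e.\ $\nabla J_{-i}$ is $L$-Lipschitz. I also take it to give a PL constant $\mu>0$, so that $\tfrac12\|\nabla J_{-i}(\Theta)\|^2 \ge \mu\,(J_{-i}(\Theta)-J_{-i}^\star)$, where $J_{-i}^\star=\inf_\Theta J_{-i}(\Theta)>-\infty$.

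The first step is the descent lemma. Smoothness gives the quadratic upper bound
\begin{equation*}
J_{-i}(\Theta') \le J_{-i}(\Theta) + \langle \nabla J_{-i}(\Theta), \Theta'-\Theta\rangle + \tfrac{L}{2}\|\Theta'-\Theta\|^2 .
\end{equation*}
I substitute $\Theta'=\Theta_i^+=\Theta-\alpha\nabla J_{-i}(\Theta)$ and obtain
\begin{equation*}
J_{-i}(\Theta_i^+) \le J_{-i}(\Theta) - \alpha\big(1-\tfrac{L\alpha}{2}\big)\|\nabla J_{-i}(\Theta)\|^2 .
\end{equation*}
For $0<\alpha<2/L$ the coefficient $\alpha(1-L\alpha/2)$ is positive.

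The second step converts the gradient norm into suboptimality using PL, which gives $\|\nabla J_{-i}(\Theta)\|^2\ge 2\mu\,(J_{-i}(\Theta)-J_{-i}^\star)$. I subtract $J_{-i}^\star$ from both sides of the descent inequality and get \eqref{eq:contract-J} with
\begin{equation*}
\rho(\alpha)=1-2\mu\alpha\big(1-\tfrac{L\alpha}{2}\big)=1-2\mu\alpha+\mu L\alpha^2 .
\end{equation*}

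The last step checks that $0\le\rho(\alpha)<1$. The bound $\rho(\alpha)<1$ holds exactly when $0<\alpha<2/L$. For nonnegativity, note that smoothness combined with PL forces $\mu\le L$. Then $\rho$ attains its minimum $1-\mu/L\ge 0$ at $\alpha=1/L$, and that choice is the canonical one for the statement.

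The only delicate point is bookkeeping rather than substance. The PL inequality and smoothness must apply to the same population objective $J_{-i}$, which combines the risk and $\LDA$. The statement should also make the step-size restriction $\alpha\in(0,2/L)$ explicit, since the contraction factor is below one only on that range. Everything else is routine.
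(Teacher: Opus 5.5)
Your proposal is correct and follows the paper's proof step for step. Both arguments apply the descent lemma to $\Theta_i^+=\Theta-\alpha\nabla J_{-i}(\Theta)$, then the PL bound $\|\nabla J_{-i}(\Theta)\|^2\ge 2\mu(J_{-i}(\Theta)-J_{-i}^\star)$, giving $\rho(\alpha)=1-2\mu\alpha(1-L\alpha/2)<1$ for $0<\alpha<2/L$; your additional check that $\rho(\alpha)\ge 1-\mu/L\ge 0$ via $\mu\le L$ is a harmless refinement the paper omits.
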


This contraction suggests that the inner loop makes consistent progress within this idealized setting. We further analyze how this impacts the DA alignment term specifically.

\begin{restatable}{proposition}{propDaReduction}[Reduction of the DA alignment term]
\label{prop:DA-reduction}
The DA loss on the source domains after adaptation ($\Theta_i^+$) is bounded as:
\begin{equation}
\label{eq:DA-reduction}
\E\LDA^{(-i)}(\Theta_i^+)
\le \frac{\rho(\alpha)}{\lambda_{\mathrm{DA}}} J_{-i}(\Theta) + \frac{1-\rho(\alpha)}{\lambda_{\mathrm{DA}}} J_{-i}^\star.
\end{equation}
\end{restatable}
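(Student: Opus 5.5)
The plan is to combine the structure of the inner objective with the one-step contraction of \Cref{lem:one-step-contraction}. I take the population inner objective to be
\[
J_{-i}(\Theta) = \mathcal{R}_{\mathcal{D}_{-i}}(\Theta) + \lambda_{\mathrm{DA}}\,\E\LDA^{(-i)}(\Theta),
\]
possibly with further nonnegative terms such as a mixup-based risk. Here $\lambda_{\mathrm{DA}}>0$, and every summand other than the DA term is nonnegative because $\ell$ is a nonnegative bounded loss. This sign convention is essential. If the paper's appendix uses a different weighting, the plan adapts by the corresponding constant.

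\textbf{Step 1 (isolate the DA term).} Evaluate the objective at the adapted parameters $\Theta_i^+$. Nonnegativity of the risk component gives
\[
\lambda_{\mathrm{DA}}\,\E\LDA^{(-i)}(\Theta_i^+) \le J_{-i}(\Theta_i^+).
\]
For this step the DA loss itself need not be nonnegative; only the other summands must be. The bound therefore holds whether $\LDA$ is written as an InfoNCE-type quantity (nonnegative) or as a shifted version of one.

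\textbf{Step 2 (apply contraction).} Rearrange \eqref{eq:contract-J} as
\[
J_{-i}(\Theta_i^+) \le \rho(\alpha)J_{-i}(\Theta) + (1-\rho(\alpha))J_{-i}^\star.
\]
This is a convex combination of the pre-adaptation objective and the optimal value, since $0\le\rho(\alpha)<1$ under the PL/smoothness assumptions.

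\textbf{Step 3 (combine).} Chain the two inequalities and divide by $\lambda_{\mathrm{DA}}>0$. This yields \eqref{eq:DA-reduction} directly.

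\textbf{Main obstacle.} The only delicate point is Step 1. There I must confirm that the complementary part of $J_{-i}$ is nonnegative, and that the DA term entering $J_{-i}$ is exactly the population quantity $\E\LDA^{(-i)}$, not a batch estimate. If $J_{-i}$ used empirical losses, I would first pass to expectations and then apply linearity. I would also check that $\rho(\alpha)\ge 0$ (e.g.\ $\rho(\alpha)=1-2\mu\alpha+L\mu\alpha^2$ for $\alpha\le 1/L$), so that the combination in Step 2 is valid. If $J^\star_{-i}$ were negative, the inequality still follows, since Step 2 does not require its sign.
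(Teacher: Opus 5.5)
Your proposal is correct and matches the paper's proof essentially step for step. You isolate $\lambda_{\mathrm{DA}}\E\LDA^{(-i)}(\Theta_i^+)\le J_{-i}(\Theta_i^+)$ using nonnegativity of the risk, rearrange \Cref{lem:one-step-contraction} into $J_{-i}(\Theta_i^+)\le\rho(\alpha)J_{-i}(\Theta)+(1-\rho(\alpha))J_{-i}^\star$, and divide by $\lambda_{\mathrm{DA}}$. (The check $\rho(\alpha)\ge 0$ that you flag is not needed, since the Step 2 rearrangement is pure algebra and holds for any sign of $\rho(\alpha)$.)
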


\Cref{prop:DA-reduction} confirms that the inner adaptation step actively reduces the alignment objective at a geometric rate. The MLDG~\citep{MLDG}-style outer loop ensures that this improved source alignment translates to better target generalization by minimizing the bound established in Eq.~\eqref{eq:qs-final}. A further discussion on optimization dynamics is provided in \Cref{app:C_maml}.
\section{Proposed Methodology: \textsc{RC-Align}}
\label{sec:methodology}

Based on the theoretical insights from \Cref{sec:theory}, we propose \textsc{RC-Align} (Robust Compound Alignment), a meta-learning framework designed to minimize the upper bound for domain generalization (Eq.~\eqref{eq:qs-final}) under simultaneous marginal and conditional distribution shifts.




\subsection{Controlling Prior and Feature Shifts}\label{sec:loss}

To operationalize the bound in Eq.~\eqref{eq:qs-final}, we use a
combined
objective function that addresses both \textit{the prior shift impact} and \textit{the feature shift} terms.

\textbf{Classification Loss (\textit{Controlling Prior Shift Impact}).}
\Cref{thm:qs} shows that the impact of prior shift is modulated by the class-wise support risks $\mathcal{R}_{-i,c}(\Theta)$. We use the standard empirical Cross-Entropy (CE) loss to minimize these risks. Given a batch $B$:
\begin{equation}
\label{eq:CE_batch}
\LCEbatch(B; \Theta) = \frac{1}{|B|} \sum_{(x,y) \in B} \mathrm{CE}(g_\phi(f_\theta(x)),y).
\end{equation}
By improving classification accuracy on the source domains, $\LCEbatch$ actively reduces the sensitivity of the model to changes in label priors.

\textbf{Domain-Class Distribution Alignment Loss (\textit{Controlling Feature Shift}).}
\Cref{thm:qs-DA} proves that the DA loss upper-bounds the feature mismatch term $W_1(P_{i,c}, P_{-i,c})$. We employ a contrastive loss to pull features closer to the centroids of the same class in other domains, thereby minimizing this mismatch. 

Given a batch $B$ and pre-computed support centroids $\{\mu\}$, the loss is defined as:
\begin{equation}
\label{eq:DA_batch}
\widehat{\mathcal{L}}_{\text{DA}}(B; \theta, \{\mu\}) = \frac{1}{|B|} \sum_{(x,y,i) \in B} \ell_{\text{align}}(x, y, i),
\end{equation}
where the sample-wise alignment loss $\ell_{\text{align}}(x, y, i)$ is given by:
\begin{equation}
\label{eq:DA_sample}
\begin{aligned}
\frac{-1}{K-1} \sum_{j \neq i} \log \frac{\exp(-\text{d}(f_\theta(x), \mu_{j, y}))}{\sum_{(j',c') \in \mathcal{S}_{i,y}} \exp(-\text{d}(f_\theta(x), \mu_{j',c'}))},
\end{aligned}
\end{equation}
Here, $\mathcal{S}_{i,y}$ denotes the set of all domain-class pairs in the support mixture excluding the query's identity.

%


\textbf{Total Training Objective.}
The objective used in the inner loop is the empirical weighted combination, directly targeting the optimizable terms of the upper bound for domain generalization:
\begin{equation}
\label{eq:total}
\resizebox{0.89\linewidth}{!}{$
\widehat{\mathcal{L}}_{\text{total}}(B; \Theta, \{\mu\}) =
\LCEbatch(B; \Theta) \;+\; \lambda_{\mathrm{DA}}\,
\LDAbatch(B; \theta, \{\mu\}),
$}
\end{equation}
where $\lambda_{\mathrm{DA}}$ is a regularization parameter that balances the two terms.

\subsection{The \textsc{RC-Align} with a Meta-learning Framework}\label{sec:meta}

Our proposed method, \textsc{RC-Align}, utilizes a two-stage, MAML~\citep{Finn2017MAML}-style procedure within the Leave-One-Domain-Out (LODO) protocol to ensure that the minimization of the composite objective translates to robust generalization on the unseen domain. The detailed training procedure is summarized in \Cref{alg:rc_align}.

\textbf{Inner Adaptation Loop.} 
In the inner loop, for a given held-out virtual target domain $\mathcal{D}_i$, the model's meta-parameters $\Theta$ are temporarily adapted using the remaining source domains $\mathcal{D}_{-i}$. This adaptation is guided by our composite objective (Eq.~\ref{eq:total}), which utilizes class centroids
$\{\mu_k\}_{k \neq i}$
estimated on-the-fly from current mini-batches for computational efficiency. 

While batch-based estimation can introduce noise, particularly for tail classes in the MDLT setting, the meta-learning process inherently learns to adapt robustly despite such perturbations. To further stabilize the optimization, we incorporate \textbf{Manifold Mixup}~\citep{ManifoldMixup}. Beyond its role as data augmentation, Mixup provides a theoretical advantage: by encouraging linearity in the feature space and smoothing the non-convex optimization landscape, it facilitates the \textit{geometric contraction} of the inner loop (as discussed in \Cref{sec:maml_analysis}). This adaptation step aims to minimize both source risk and feature misalignment simultaneously:
\begin{equation} \label{eq:inner_step}
\Theta' \leftarrow \Theta - \alpha \nabla_\Theta \widehat{\mathcal{L}}_{\text{total}}(\Theta; B_{-i}, \{\mu\}_{-i}).
\end{equation}
As proven in \Cref{prop:DA-reduction}, this step guarantees a formal reduction in the alignment objective.

\textbf{Outer Meta-Optimization Loop.} 
In the outer loop, we evaluate the performance of the adapted model $\Theta'$ on the previously held-out virtual target domain $\mathcal{D}_i$. The resulting meta-loss provides a gradient signal to update the original meta-parameters $\Theta$:
\begin{equation}
\Theta \leftarrow \Theta - \eta\,\nabla_\Theta \widehat{\mathcal{L}}_{\text{CE}}(B_i; \Theta').
\end{equation}
This meta-optimization process trains the model to find an initialization $\Theta$ that is primed for rapid and effective adaptation. By ensuring that the theoretical bound reduction achieved in the inner loop translates into improved performance on unseen target domains, the model achieves superior generalization. For computational efficiency, we employ the first-order MAML (FO-MAML) variant. For the full procedure, please refer to \Cref{app:algorithms}.

\begin{table*}[t!]
\centering
\caption{Summary of average domain generalization accuracy (\%) across four datasets.}
\label{tab:dg_avg_results}
\begin{adjustbox}{width=\textwidth}
\begin{tabular}{l|cccc|ccl|cccc|c}
\toprule
\multicolumn{6}{c}{\textbf{Baselines (Part 1)}} &\multicolumn{1}{c}{} & \multicolumn{6}{c}{\textbf{Baselines (Part 2)}} \\
\cmidrule(lr){1-6} \cmidrule(rl){8-13}
\textbf{Algorithm} & \textbf{PACS} & \textbf{VLCS} & \textbf{OfficeHome} & \textbf{TerraInc.} & \textbf{Avg.} & &\textbf{Algorithm} & \textbf{PACS} & \textbf{VLCS} & \textbf{OfficeHome} & \textbf{TerraInc.} & \textbf{Avg.}\\
\midrule\midrule
ERM & 85.5 & 77.5 & 66.5 & 46.1 & 68.9 & & ARM & 85.1 & 77.6 & 64.8 & 45.5 & 68.3 \\
IRM & 83.5 & 78.5 & 64.3 & 47.6 & 68.5 & & VREx & 84.9 & 78.3 & 66.4 & 46.4 & 69.0 \\
GroupDRO & 84.4 & 76.7 & 66.0 & 43.2 & 67.6 & & RSC & 85.2 & 77.1 & 65.5 & 46.6 & 68.6 \\
Mixup & 84.6 & 77.4 & 68.1 & 47.9 & 69.5 & & MetaReg & 83.6 & 76.7 & 67.6 & 48.2 & 69.0 \\
MLDG & 84.9 & 77.2 & 66.8 & 47.7 & 69.2 & & MLIR &86.8 & 80.7 & 69.8 & 51.0 & 72.1 \\
CORAL & 86.2 & 78.8 & 68.7 & 47.6 & 70.3 & & Mixstyle & 85.2 & 77.9 & 60.4 & 44.0 & 66.9 \\
MMD & 84.6 & 77.5 & 66.3 & 42.2 & 67.7 & & SWAD & 88.1 & 79.1 & 70.6 & 50.0 & 72.0 \\
DANN & 83.7 & 78.6 & 65.9 & 46.7 & 68.7 & & PCL & 88.7 & 78.0 & 71.6 & 52.1 & 72.6 \\
CDANN & 82.6 & 77.5 & 65.8 & 45.8 & 67.9 & & BoDA & 86.9 & 78.5 & 69.3 & 50.2 & 71.2 \\
MTL & 84.6 & 77.2 & 66.4 & 45.6 & 68.5 & & SAGM & 86.6 & 80.0 & 70.1 & 48.8 & 71.4 \\
SagNet & 86.3 & 77.8 & 68.1 & 48.6 & 70.2 & & iDAG & \textbf{88.8} & 76.9 & \textbf{71.8} & 46.1 & 70.9 \\
GMDG & 85.6 & 79.2 & 70.7 & 50.1 & 71.4 & & Arith & 86.5 & 79.4 & 69.4 & 48.1 & 70.9 \\
\midrule 
\textbf{Ours} & 87.5 & \textbf{81.0} & 70.9 & \textbf{52.6} & \textbf{73.0} & & \textbf{Ours} & 87.5 & \textbf{81.0} & 70.9 & \textbf{52.6} & \textbf{73.0} \\

\bottomrule
\end{tabular}
\end{adjustbox}
\end{table*}

\section{Experiments}

We evaluate \textsc{RC-Align} on two challenging settings: (1) standard Domain Generalization (DG) and (2) Multi-Domain Long-Tailed Recognition (MDLT).

\subsection{Experimental Setup}

\textbf{Datasets.}
We evaluate \textsc{RC-Align}across two distinct scenarios. For standard Domain Generalization (DG), we utilize four widely adopted benchmarks: PACS~\citep{PACS}, VLCS~\citep{VLCS}, OfficeHome~\citep{OfficeHome}, and TerraIncognita~\citep{TerraInc}. To assess performance under challenging compound shifts (where both $P(Y)$ and $P(X|Y)$ diverge), we employ five Multi-Domain Long-Tailed Recognition (MDLT) benchmarks. Following established protocols~\citep{Boda}, these include the long-tailed versions of the DG datasets (PACS-MLT, VLCS-MLT, OfficeHome-MLT, TerraIncognita-MLT) and the large-scale DomainNet-MLT~\citep{DomainNet}. These MDLT setups induce significant and divergent long-tail class distributions across domains.

\textbf{Metrics.}
We follow the standard Leave-One-Domain-Out (LODO) evaluation protocol. We report the average accuracy across all held-out target domains (Average). For the MDLT setting, where robustness is critical, we additionally report the accuracy on the worst-performing domain (Worst) to assess the model's resilience under severe compound shifts.

\textbf{Implementation Details.}
We implement \textsc{RC-Align} within the DomainBed framework~\citep{DomainBed} to ensure reproducibility and fair comparison. Following standard protocol, we utilize a ResNet-50~\citep{ResNet50} backbone pretrained on ImageNet~\citep{Imagenet}. Models are optimized using the Adam optimizer~\citep{kingma2015adam}. We follow the training-domain validation procedure for hyperparameter selection. All experiments were conducted using NVIDIA RTX 3090 GPUs. Comprehensive implementation details, including dataset statistics, training protocols, and full hyperparameter configurations, are provided in \Cref{app:implementation}.

\subsection{Standard Domain Generalization Results}

We compare our method against established DG algorithms. For more detailed results, refer to \Cref{app:dg_details}.

\textbf{Analysis.}
As shown in \Cref{tab:dg_avg_results}, \textsc{RC-Align} achieves state-of-the-art performance, securing the highest average accuracy across all four standard DG benchmarks. Notably, our method sets a new top score on VLCS, and TerraIncognita and demonstrates highly competitive results on PACS, and OfficeHome, consistently ranking among the top-performing methods. The substantial 3.8\% average accuracy improvement over MLDG, another meta-learning baseline, is particularly telling. This highlights the effectiveness of our core proposal: incorporating an explicit Domain-Class Distribution Alignment (DA) loss within the meta-adaptation phase. While MLDG learns a generalizable initialization, our approach ensures this initialization is also primed to align class-conditional feature distributions, leading to more robust generalization even when class priors are balanced.

\subsection{Multi-Domain Long-Tailed Recognition Results}

We evaluate the robustness of \textsc{RC-Align} under compound shifts, comparing against specialized LT/MDLT methods. In all MDLT experiments, baseline results are from \citet{Boda}. We defer detailed descriptions of each baseline to~\Cref{app:mdlt_baseline}, and detailed results in~\Cref{app:mdlt_details}.

\begin{table*}[h]
\centering
\caption{Consolidated Mean (Average) and Worst Domain Accuracy across MDLT Benchmarks.}
\label{tab:consolidated_avg_worst}
\begin{adjustbox}{max width=\linewidth}
\begin{tabular}{l|cc|cc|cc|cc|cc}
\toprule
\multicolumn{1}{c}{}& \multicolumn{2}{c}{\textbf{VLCS-MLT}} & \multicolumn{2}{c}{\textbf{PACS-MLT}} & \multicolumn{2}{c}{\textbf{OfficeHome-MLT}} & \multicolumn{2}{c}{\textbf{TerraInc-MLT}} & \multicolumn{2}{c}{\textbf{DomainNet-MLT}} \\
\cmidrule(lr){2-3}\cmidrule(lr){4-5}\cmidrule(lr){6-7}\cmidrule(lr){8-9} \cmidrule(lr){10-11} 
\multicolumn{1}{c}{\textbf{Algorithm}}& \multicolumn{1}{c}{\textbf{Average}} & \multicolumn{1}{c}{\textbf{Worst}} & \multicolumn{1}{c}{\textbf{Average}} & \multicolumn{1}{c}{\textbf{Worst}} & \multicolumn{1}{c}{\textbf{Average}} & \multicolumn{1}{c}{\textbf{Worst}} & \multicolumn{1}{c}{\textbf{Average}} & \multicolumn{1}{c}{\textbf{Worst}} & \multicolumn{1}{c}{\textbf{Average}} & \multicolumn{1}{c}{\textbf{Worst}}\\
\midrule\midrule
ERM & 76.3 $\pm$0.4 & 53.6 $\pm$1.1 & 97.1 $\pm$0.1 & 95.8 $\pm$0.2 & 80.7 $\pm$0.0 & 71.3 $\pm$0.1 & 75.3 $\pm$0.3 & 67.4 $\pm$0.3 & 58.6 $\pm$0.2 & 29.4 $\pm$0.3 \\
IRM & 76.5 $\pm$0.2 & 52.3 $\pm$0.7 & 96.7 $\pm$0.2 & 95.2 $\pm$0.4 & 80.6 $\pm$0.4 & 70.7 $\pm$0.2 & 73.3 $\pm$0.7 & 64.3 $\pm$1.3 & 57.1 $\pm$0.1 & 27.6 $\pm$0.1 \\
GroupDRO & 76.7 $\pm$0.4 & 54.1 $\pm$1.3 & 97.0 $\pm$0.1 & 95.3 $\pm$0.4 & 80.1 $\pm$0.3 & 68.7 $\pm$0.9 & 72.0 $\pm$0.4 & 66.6 $\pm$0.2 & 53.6 $\pm$0.1 & 25.9 $\pm$0.3 \\
Mixup & 75.9 $\pm$0.1 & 52.7 $\pm$1.3 & 96.7 $\pm$0.2 & 95.1 $\pm$0.2 & 81.2 $\pm$0.2 & 72.3 $\pm$0.6 & 71.1 $\pm$0.7 & 60.4 $\pm$1.1 & 57.6 $\pm$0.1 & 28.7 $\pm$1.0 \\
MLDG & 76.9 $\pm$0.2 & 53.6 $\pm$0.5 & 96.6 $\pm$0.1 & 94.1 $\pm$0.3 & 80.4 $\pm$0.2 & 70.2 $\pm$0.6 & 76.6 $\pm$0.2 & 66.9 $\pm$0.5 & 58.5 $\pm$0.0 & 28.7 $\pm$0.0 \\
CORAL & 75.9 $\pm$0.5 & 51.6 $\pm$0.7 & 96.6 $\pm$0.5 & 94.3 $\pm$0.7 & 81.9 $\pm$0.1 & 72.7 $\pm$0.6 & 76.4 $\pm$0.5 & 67.8 $\pm$0.9 & 59.4 $\pm$0.1 & 30.1 $\pm$0.4 \\
MMD & 76.3 $\pm$0.6 & 53.4 $\pm$0.3 & 96.9 $\pm$0.1 & 96.2 $\pm$0.2 & 78.4 $\pm$0.4 & 70.7 $\pm$0.8 & 73.3 $\pm$0.4 & 63.7 $\pm$1.1 & 56.7 $\pm$0.0 & 27.2 $\pm$0.2 \\
DANN & 77.5 $\pm$0.1 & 54.1 $\pm$0.3 & 96.5 $\pm$0.0 & 94.3 $\pm$0.1 & 79.2 $\pm$0.2 & 67.7 $\pm$0.9 & 68.7 $\pm$0.9 & 61.1 $\pm$1.0 & 55.8 $\pm$0.1 & 26.9 $\pm$0.4 \\
CDANN & 76.6 $\pm$0.4 & 53.6 $\pm$0.4 & 96.1 $\pm$0.1 & 94.5 $\pm$0.2 & 79.0 $\pm$0.2 & 69.4 $\pm$0.3 & 70.3 $\pm$0.5 & 65.9 $\pm$1.0 & 56.0 $\pm$0.1 & 27.7 $\pm$0.4 \\
MTL & 76.3 $\pm$0.3 & 52.9 $\pm$0.5 & 96.7 $\pm$0.2 & 94.5 $\pm$0.6 & 79.5 $\pm$0.2 & 69.8 $\pm$1.6 & 75.0 $\pm$0.7 & 67.7 $\pm$1.4 & 58.6 $\pm$0.1 & 29.3 $\pm$0.2 \\
SagNet & 76.3 $\pm$0.2 & 52.3 $\pm$0.2 & 97.2 $\pm$0.1 & 95.2 $\pm$0.3 & 80.9 $\pm$0.1 & 70.5 $\pm$0.5 & 75.1 $\pm$1.6 & 66.5 $\pm$2.1 & 58.9 $\pm$0.0 & 29.4 $\pm$0.2 \\
Fish & 77.5 $\pm$0.3 & 54.3 $\pm$0.4 & 96.9 $\pm$0.2 & 95.2 $\pm$0.2 & 81.3 $\pm$0.3 & 71.3 $\pm$0.5 & 75.3 $\pm$0.5 & 66.3 $\pm$0.5 & 59.6 $\pm$0.1 & 29.1 $\pm$0.1 \\
Focal & 75.6 $\pm$0.4 & 52.3 $\pm$0.2 & 96.5 $\pm$0.2 & 94.6 $\pm$0.7 & 77.9 $\pm$0.0 & 67.6 $\pm$0.4 & 75.7 $\pm$0.4 & 65.3 $\pm$1.1 & 57.8 $\pm$0.2 & 27.5 $\pm$0.1 \\
CBLoss & 76.8 $\pm$0.3 & 52.5 $\pm$0.5 & 96.9 $\pm$0.1 & 95.1 $\pm$0.4 & 79.8 $\pm$0.2 & 69.5 $\pm$0.7 & 78.0 $\pm$0.4 & 68.3 $\pm$2.0 & 58.9 $\pm$0.1 & 30.1 $\pm$0.3 \\
LDAM & 77.5 $\pm$0.1 & 52.9 $\pm$0.2 & 96.5 $\pm$0.2 & 94.7 $\pm$0.2 & 80.3 $\pm$0.2 & 69.9 $\pm$0.5 & 74.7 $\pm$0.9 & 64.1 $\pm$1.4 & 59.2 $\pm$0.0 & 29.9 $\pm$0.2 \\
BSoftmax & 76.7 $\pm$0.5 & 52.9 $\pm$0.9 & 96.9 $\pm$0.3 & 95.6 $\pm$0.3 & 80.4 $\pm$0.2 & 70.9 $\pm$0.5 & 76.7 $\pm$1.0 & 65.6 $\pm$1.3 & 58.9 $\pm$0.1 & 29.9 $\pm$0.1 \\
SSP & 76.1 $\pm$0.3 & 52.3 $\pm$1.0 & 96.9 $\pm$0.2 & 95.4 $\pm$0.4 & 81.1 $\pm$0.3 & 71.1 $\pm$0.3 & 78.5 $\pm$0.7 & 67.3 $\pm$0.4 & 59.7 $\pm$0.0 & 31.6 $\pm$0.2 \\
CRT & 76.3 $\pm$0.2 & 51.4 $\pm$0.3 & 96.3 $\pm$0.1 & 94.9 $\pm$0.1 & 81.2 $\pm$0.0 & 72.5 $\pm$0.2 & 81.6 $\pm$0.1 & 70.0 $\pm$0.4 & 60.4 $\pm$0.2 & 32.6 $\pm$0.1 \\
BoDA$_r$ & 76.9 $\pm$0.5 & 51.4 $\pm$0.3 & 97.0 $\pm$0.1 & 95.1 $\pm$0.4 & 81.5 $\pm$0.1 & 71.8 $\pm$0.1 & 78.6 $\pm$0.4 & 68.5 $\pm$0.3 & 60.1 $\pm$0.2 & 32.2 $\pm$0.1 \\
BoDA-M$_r$ & 77.5 $\pm$0.3 & 53.4 $\pm$0.3 & 97.1 $\pm$0.1 & 94.9 $\pm$0.1 & 81.9 $\pm$0.2 & 71.6 $\pm$0.2 & 79.4 $\pm$0.6 & 71.3 $\pm$0.4 & 60.1 $\pm$0.2 & 32.2 $\pm$0.2 \\
BoDA$_{r,c}$ & 77.3 $\pm$0.2 & 53.4 $\pm$0.5 & 97.2 $\pm$0.1 & 95.7 $\pm$0.3 & 82.3 $\pm$0.1 & 72.3 $\pm$0.2 & 82.3 $\pm$0.3 & 68.5 $\pm$0.6 & 61.7 $\pm$0.1 & 33.4 $\pm$0.1 \\
BoDA-M$_{r,c}$ & 78.2 $\pm$0.4 & 55.4 $\pm$0.5 & 97.1 $\pm$0.2 & 96.3 $\pm$0.1 & 82.4 $\pm$0.2 & 72.3 $\pm$0.3 & 83.0 $\pm$0.4 & \textbf{74.6} $\pm$0.7 & 61.7 $\pm$0.2 & 33.3 $\pm$0.1 \\
\midrule
\textbf{Ours} & \textbf{79.1} $\pm$0.2 & \textbf{57.4} $\pm$0.8 & \textbf{98.4} $\pm$0.3 & \textbf{97.4} $\pm$0.5 & \textbf{83.3} $\pm$0.0 & \textbf{75.4} $\pm$0.1 & \textbf{83.9} $\pm$0.1 & 73.5 $\pm$0.4 & \textbf{63.8} $\pm$0.1 & \textbf{33.8} $\pm$0.1 \\
\bottomrule
\end{tabular}
\end{adjustbox}
\end{table*}


\textbf{Analysis.}
\Cref{tab:consolidated_avg_worst} demonstrates the effectiveness of \textsc{RC-Align} in the challenging MDLT setting. Our method consistently outperforms specialized LT approaches. Specifically, \textsc{RC-Align} sets a new state-of-the-art on all benchmarks for average, and 4 out of 5 for worst-domain accuracy. The particularly strong performance in the \textbf{Worst} metric highlights the robustness of our framework; for instance, it achieves a notable gain of 2.9\% on OfficeHome-MLT over the next best method and demonstrates superior robustness on the large-scale DomainNet-MLT as well. This empirical success confirms that \textsc{RC-Align} successfully learns representations that are robust not only to feature shifts ($P(X|Y)$) but also to severe and varying class imbalances (prior shifts). This validates our theoretical motivation: by directly optimizing a bound that accounts for both sources of divergence, \textsc{RC-Align} strikes a more effective balance than methods that primarily focus on either feature invariance or class re-balancing alone.

\subsection{Analysis and Validation}

We conduct further analysis to validate the components of our framework and confirm our theoretical findings.

\paragraph{Ablation Study.}

We analyze the contribution of each component within our framework on the VLCS dataset. For brevity in our reporting, we denote the Model-Agnostic Meta-Learning (MAML) framework as \textbf{M}, the Domain-Class Distribution Alignment loss as \textbf{DA}, and Manifold Mixup as \textbf{Mix}.
We evaluate performance using two key metrics: Average Accuracy (\textbf{Avg.}) and Worst-domain Accuracy (\textbf{Worst}).

\begin{table}[t]
\centering
\caption{Ablation study of RC-Align components on VLCS. Accuracies are reported as percentages (\%).}
\label{tab:ablation_vlcs}
\setlength{\tabcolsep}{5pt} 
\small 
\begin{tabular}{l|ccc|cc}
\toprule
\textbf{Method} & \textbf{M} & \textbf{DA} & \textbf{Mix} & \textbf{Avg.} & \textbf{Worst} \\
\midrule\midrule
ERM (Baseline)   &            &            &            & 77.5 & 64.3 \\
MLDG             & \checkmark &            &            & 77.2 & 65.2 \\
ERM + DA         &            & \checkmark &            & 79.6 & 66.9 \\
RC-Align (w/o Mix) & \checkmark & \checkmark &            & 80.2 & 67.4 \\
\midrule
\textbf{RC-Align (Full)} & \checkmark & \checkmark & \checkmark & \textbf{81.0} & \textbf{68.5} \\
\bottomrule
\end{tabular}
\end{table}

\begin{figure*}[t]
    \centering
    \includegraphics[width=\linewidth]{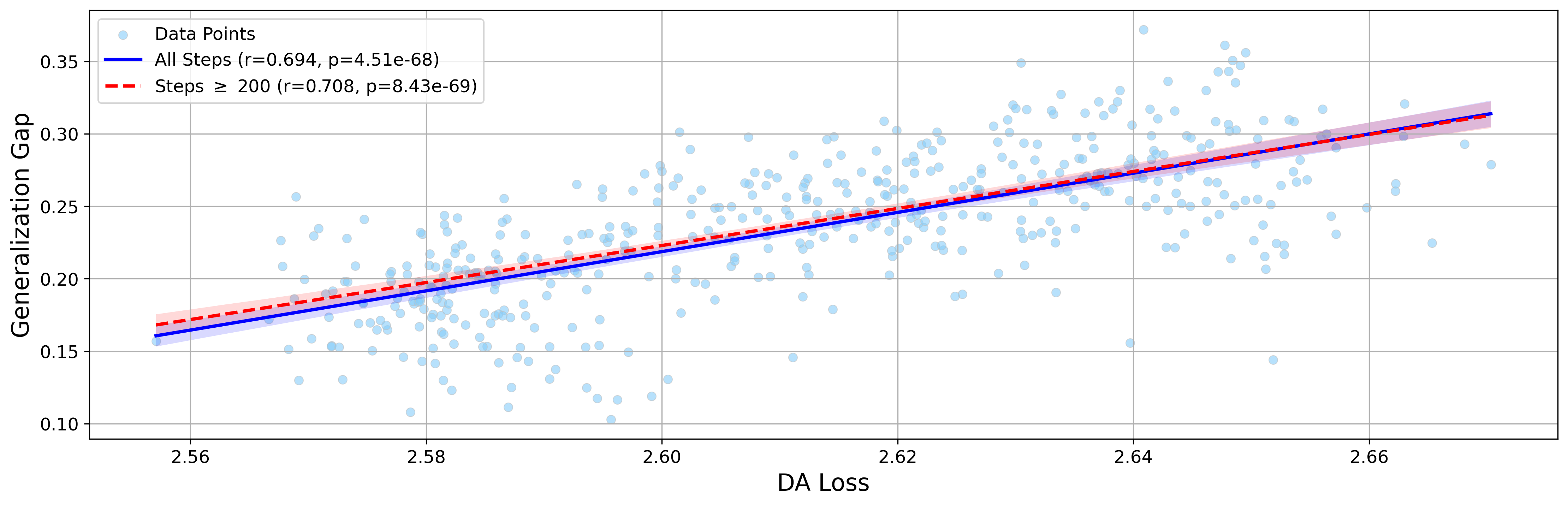}
    \caption{Empirical validation of Theorem~\ref{thm:qs-DA} on VLCS. The scatter plot of DA Loss vs. Generalization Gap reveals a strong positive correlation across all steps (blue line; $r=0.694, p<10^{-67}$), which strengthens during the stabilized phase (steps $\geq 200$, red dashed line; $r=0.708, p<10^{-68}$).}
    \label{fig:theory_validation_vlcs}
\end{figure*}

The results demonstrate the individual and synergistic contributions of each component within the \textsc{RC-Align} framework. As shown in Table \ref{tab:ablation_vlcs}, both the DA loss and the MAML framework (\textbf{M}) independently improve performance over the ERM baseline, particularly in the \textbf{Worst} metric, highlighting their effectiveness in mitigating feature and domain shifts. 

Combining these two elements in \textsc{RC-Align} (w/o Mix) yields a significant performance boost to 80.2\% Average accuracy, confirming their complementary roles in minimizing the theoretical generalization gap. Finally, the integration of Manifold Mixup (\textbf{Mix}) provides essential regularization that further stabilizes the adaptation process, leading to our best overall performance. This validates that the comprehensive design of our full \textsc{RC-Align} model is necessary for robust domain generalization.

\paragraph{Verification of Theoretical Findings.}
Our theoretical analysis (Theorem~\ref{thm:qs-DA}) posits that the Domain-Class Distribution Alignment (DA) loss upper-bounds the feature mismatch term, implying that minimizing the DA loss should reduce the generalization gap. To empirically validate this, we performed experiments on the VLCS dataset using a Leave-One-Domain-Out (LODO) setup. For each scenario, we designated one domain as the target and the rest as source domains, then recorded the target's DA loss and the generalization gap (difference between target error and average source error) every 10 training steps.

\cref{fig:theory_validation_vlcs}
empirically validates our theory, revealing a strong positive correlation between the DA loss and the generalization gap. This relationship notably strengthens during the stabilized training phase (steps $\geq 200$), suggesting that the DA loss becomes an even more precise indicator as features mature. This provides compelling evidence that the DA loss serves as a robust proxy for the feature mismatch term in our generalization bound (Eq.~\eqref{eq:qs-final}).

\begin{table}[t]
\centering
\caption{Computational cost comparison on OfficeHome. \textsc{RC-Align} achieves superior performance with a manageable increase in training time.}
\label{tab:computation_cost}
\small
\begin{tabular}{l|ccc}
\toprule
\bf Method & \bf Relative Time & \bf Max Memory & \bf Avg. Acc. \\
\midrule\midrule
ERM & 1.00$\times$ & 1.91 GB & 68.9 \\
MLDG & 1.16$\times$ & 1.88 GB & 69.2 \\
\midrule
\textbf{\textsc{RC-Align}} & \textbf{1.45$\times$} & \textbf{2.31 GB} & \textbf{73.0} \\
\bottomrule
\end{tabular}
\end{table}

\paragraph{Computational Efficiency.}
\label{sec:comp_efficiency}
To evaluate potential computational overhead, we compare the runtime and memory usage of \textsc{RC-Align} against ERM and MLDG on the OfficeHome dataset.
As shown in \Cref{tab:computation_cost}, \textsc{RC-Align} requires $1.45\times$ the training time of ERM. By utilizing FO-MAML to avoid expensive second-order derivatives, it achieves a $+4.1\%$ performance gain—a highly favorable trade-off for the improved robustness.

\section{Conclusion}

In this work, we addressed the critical challenge of domain generalization under compound distribution shifts, where both marginal label distributions ($P(Y)$) and conditional feature distributions ($P(X|Y)$) diverge across domains. We introduced \textsc{RC-Align}, a novel meta-learning framework designed to explicitly counteract these dual challenges. The principled design of \textsc{RC-Align} is grounded in a new upper bound for domain generalization that decomposes the risk from unseen domains into distinct terms for \textit{prior shift} and \textit{feature shift}. 

Our key theoretical contribution lies in proving that the practical Domain-Class Distribution Alignment (DA) loss, a core component of our method, serves as an \textit{upper bound on the 1-Wasserstein distance}, which quantifies feature mismatch in our bound.
This establishes a principled relationship between our training objective and the theoretical generalization gap. Furthermore, our analysis suggests that the meta-learning procedure helps minimize this bound by guiding source-domain alignment in a manner that improves performance on the target domain.

This theoretical foundation translates to strong empirical performance. \textsc{RC-Align} achieves state-of-the-art results not only on standard DG benchmarks but, more importantly, demonstrates superior robustness on challenging \textit{multi-domain long-tailed recognition} (MDLT) settings where compound shifts are most severe. Our analyses further validate this success, empirically confirming the correlation between the DA loss and the generalization gap predicted by our theory. By tightly integrating a practical meta-learning algorithm with a rigorous theoretical justification, \textsc{RC-Align} provides a principled approach for developing models that are robust to the complex, compound distribution shifts encountered in real-world deployments.

\section*{Impact Statement}
This paper presents a principled framework, \textsc{RC-Align}, to enhance the robustness of machine learning models under simultaneous shifts in label distributions $P(Y)$ and class-conditional distributions $P(X|Y)$. The societal implications of this work are two-fold:
First, it significantly improves the reliability of AI systems in high-stakes deployment scenarios, such as autonomous driving and medical imaging, where environmental conditions and class frequencies vary across locations. 
Second, by addressing the Multi-Domain Long-Tailed (MDLT) challenge, our method promotes algorithmic fairness. It ensures that the model performs robustly even for under-represented groups or rare events (the "tail" of the distribution), which often constitute the most critical cases in clinical or legal decision-making. 
While this work advances the field towards more trustworthy AI, users should remain cautious of residual errors in extreme out-of-distribution scenarios and employ these models as robust decision-support tools rather than autonomous authorities.

\bibliography{example_paper}
\bibliographystyle{icml2026}

\clearpage
\appendix
\onecolumn
\section{Detailed Proofs for upper bound for domain generalization}\label{app:A}

This appendix contains the detailed proofs for the theoretical results presented in Section \ref{sec:theory}, related to the upper bound for domain generalization and the DA loss analysis. We use the notation $\Theta = (\theta, \phi)$, $\fd$ for the distance metric, and $\mathcal{D}_i$ for domains.

\subsection{Proof of Theorem \ref{thm:qs}}\label{app:proof_thm_qs}

\thmQuerySupportBound*

\begin{proof}
Our goal is to bound the target risk $\mathcal{R}_{\mathcal{D}_i}(\Theta)$ using quantities related to the source mixture $\mathcal{R}_{\mathcal{D}_{-i}}(\Theta)$.

\textbf{Step 1: Decompose the Risk.}
First, we express the total risk as the sum of per-class expected losses.
We use the notation $\mathcal{R}_{\mathcal{D},c}(\Theta) = \E_{\mathbf{Z} \sim P_{\mathcal{D},c}}[\ell(g_\phi(\mathbf{Z}),c)]$.
\begin{align*}
\mathcal{R}_{\mathcal{D}_i}(\Theta) &= \sum_{c \in \cC} \pi_i(c) \, \mathcal{R}_{i,c}(\Theta) \\
\mathcal{R}_{\mathcal{D}_{-i}}(\Theta) &= \sum_{c \in \cC} \pi_{-i}(c) \, \mathcal{R}_{-i,c}(\Theta)
\end{align*}

\textbf{Step 2: Isolate Distributional Shifts.}
We add and subtract a hybrid term, $\sum_c \pi_i(c) \mathcal{R}_{-i,c}(\Theta)$, to separate the prior shift from the feature shift.
\begin{align*}
& \mathcal{R}_{\mathcal{D}_i}(\Theta) - \mathcal{R}_{\mathcal{D}_{-i}}(\Theta) \\
&= \sum_{c \in \cC} \left( \pi_i(c) \mathcal{R}_{i,c}(\Theta) - \pi_{-i}(c) \mathcal{R}_{-i,c}(\Theta) \right) \\
&= \underbrace{\sum_{c \in \cC} \pi_i(c) \left( \mathcal{R}_{i,c}(\Theta) - \mathcal{R}_{-i,c}(\Theta) \right)}_{\text{Term A: Feature Shift}} + \underbrace{\sum_{c \in \cC} \left( \pi_i(c) - \pi_{-i}(c) \right) \mathcal{R}_{-i,c}(\Theta)}_{\text{Term B: Prior Shift}}
\end{align*}

\textbf{Step 3: Analyze the Prior Shift Term (Term B).}
We bound Term B using the triangle inequality.
\begin{align*}
|\text{Term B}| &\le \sum_{c \in \cC} | \pi_i(c) - \pi_{-i}(c) | \cdot \mathcal{R}_{-i,c}(\Theta)
\end{align*}

\textbf{Step 4: Bound the Feature Shift Term (Term A).}
To bound this term, we use the Kantorovich-Rubinstein Duality theorem. Since $\ell(g_\phi(z),c)$ is $L_\ell$-Lipschitz with respect to $\fd$, we can apply a scaled version of the theorem:
$$
|\mathcal{R}_{i,c}(\Theta) - \mathcal{R}_{-i,c}(\Theta)| \le L_\ell \cdot \Wone(P_{i,c}, P_{-i,c})
$$
Now we can bound Term A:
\begin{align*}
|\text{Term A}| &\le \sum_{c \in \cC} \pi_i(c) \cdot L_\ell \cdot \Wone(P_{i,c}, P_{-i,c})
\end{align*}

\textbf{Step 5: Combine the Bounds.}
Combining the bounds for Term A and Term B using the triangle inequality yields the final result.
\end{proof}

\subsection{Proof of Theorem \ref{thm:qs-DA}}\label{app:proof_thm_qs-DA}

\thmQsDaBound*

\begin{proof}
This proof combines the results from the three lemmas detailed in Appendix \ref{app:C_lemmas}.

\textbf{Step 1: Average Over Classes.}
We take the expectation of the inequality from Lemma \ref{lem:two-step} over all classes $c$ according to $\pi_i(c)$.
\begin{align*}
\sum_{c} \pi_i(c) \Wone(P_{i,c}, P_{-i,c}) &\le \underbrace{\sum_{c} \pi_i(c) \sum_{j\neq i}\omega_j\,\E_{\mathbf{Z}\sim P_{i,c}} \fd(\mathbf{Z},\mu_{j,c})}_{\text{Term 1}} \\
&+ \underbrace{\sum_{c} \pi_i(c) \sum_{j\neq i} \omega_j S_{j,c}}_{\text{Term 2}}.
\end{align*}

\textbf{Step 2: Relate Terms to Definitions.}
\begin{itemize}
\item \textbf{Term 1} is the definition of the positive transport cost $A_i(\theta)$ (Lemma \ref{lem:nce}).
\item \textbf{Term 2} is the definition of $C_{\mathrm{scat}}^{(i)}$.
\end{itemize}
The inequality simplifies to:
\[
\sum_{c} \pi_i(c) \Wone(P_{i,c}, P_{-i,c}) \le A_i(\theta) + C_{\mathrm{scat}}^{(i)}.
\]

\textbf{Step 3: Substitute the Bound on $A_i$.}
We use the result from Lemma \ref{lem:phi}: $A_i(\theta) \le \E\,\LDA^{(i)}(\theta) + R$.
Substituting this gives the final result:
\[
\sum_{c} \pi_i(c) \Wone(P_{i,c}, P_{-i,c}) \le \E\,\LDA^{(i)}(\theta) + C_{\mathrm{scat}}^{(i)} + R.
\]
\end{proof}

\subsection{Proof of Theorem \ref{thm:mix}}\label{app:proof_thm_mix}

\thmMixBound*

\begin{proof}
The proof relies on the Lipschitz continuity of the risk functional and the linearity of expectation over mixtures.

\textbf{Step 1: Risk as a Lipschitz Functional.}
The expected risk $\mathcal{R}_{\mathcal{D}}(\Theta)$ is a functional of the distribution $\mathcal{D}$. Because the loss $\ell$ is $L_\ell$-Lipschitz w.r.t. $\fd$, the risk functional is $L_\ell$-Lipschitz w.r.t. $\Wone$. By the Kantorovich-Rubinstein duality:
\[
|\mathcal{R}_{\mathcal{T}}(\Theta) - \mathcal{R}_{\widetilde{\mathcal{T}}_{\boldsymbol{\pi}}}(\Theta)| \le L_\ell\,\Wone(\mathcal{T}, \widetilde{\mathcal{T}}_{\boldsymbol{\pi}}).
\]
Rearranging to isolate the target risk:
\[
\mathcal{R}_{\mathcal{T}}(\Theta) \le \mathcal{R}_{\widetilde{\mathcal{T}}_{\boldsymbol{\pi}}}(\Theta) + L_\ell\,\Wone(\mathcal{T}, \widetilde{\mathcal{T}}_{\boldsymbol{\pi}}).
\]

\textbf{Step 2: Linearity of Risk over Mixtures.}
By definition, the expectation over a mixture distribution is the weighted average of the expectations over its components.
\begin{align*}
\mathcal{R}_{\widetilde{\mathcal{T}}_{\boldsymbol{\pi}}}(\Theta) = \E_{(\mathbf{X},\mathbf{Y})\sim \sum_i \pi_i \mathcal{D}_i}[\ell(\dots)] = \sum_{i=1}^K \pi_i \mathcal{R}_{\mathcal{D}_i}(\Theta).
\end{align*}

\textbf{Step 3: Combining the Results.}
Substituting the result from Step 2 back into Step 1 completes the proof.
\end{proof}

\section{Detailed Proofs for MAML Analysis}\label{app:B}

This appendix contains the detailed proofs for the theoretical results presented in Section \ref{sec:maml_analysis}, focusing on the optimization dynamics.

\subsection{Proof of Lemma \ref{lem:one-step-contraction}}\label{app:proof_lem_one-step-contraction}

\lemOneStepContraction*

\begin{proof}
We aim to show that the inner objective function value contracts towards the optimum $J_{-i}^\star$ under smoothness and the PL condition (Assumption \ref{ass:smooth-pl}).

\textbf{Step 1: Apply the Descent Lemma (L-smoothness).}
Since $J_{-i}(\Theta)$ is $L$-smooth, it satisfies the Descent Lemma:
\[
J_{-i}(\Theta_i^+) \le J_{-i}(\Theta) + \langle \nabla J_{-i}(\Theta), \Theta_i^+ - \Theta \rangle + \frac{L}{2}\|\Theta_i^+ - \Theta\|^2.
\]

\textbf{Step 2: Substitute the Gradient Descent Update Rule.}
The inner adaptation step is $\Theta_i^+ = \Theta - \alpha \nabla J_{-i}(\Theta)$.
Substituting this into the inequality from Step 1:
\begin{align*}
J_{-i}(\Theta_i^+) &\le J_{-i}(\Theta) - \alpha\|\nabla J_{-i}(\Theta)\|^2 + \frac{L\alpha^2}{2}\|\nabla J_{-i}(\Theta)\|^2 \\
&= J_{-i}(\Theta) - \alpha\left(1-\frac{L\alpha}{2}\right)\|\nabla J_{-i}(\Theta)\|^2.
\end{align*}

\textbf{Step 3: Apply the Polyak--Lojasiewicz (PL) Condition.}
The learning rate condition $0<\alpha<2/L$ ensures the coefficient $\alpha(1-\frac{L\alpha}{2})$ is positive.
We utilize the PL condition:
\[
\|\nabla J_{-i}(\Theta)\|^2 \ge 2\mu\big(J_{-i}(\Theta)-J_{-i}^\star\big).
\]
Substituting this lower bound:
\begin{align*}
J_{-i}(\Theta_i^+) &\le J_{-i}(\Theta) - \alpha\left(1-\frac{L\alpha}{2}\right) \cdot 2\mu\big(J_{-i}(\Theta)-J_{-i}^\star\big).
\end{align*}

\textbf{Step 4: Rearrange to Show Contraction.}
We rearrange the terms to isolate the optimization gap at $\Theta_i^+$:
\begin{align*}
J_{-i}(\Theta_i^+) - J_{-i}^\star &\le J_{-i}(\Theta) - J_{-i}^\star - 2\mu\alpha\left(1-\frac{L\alpha}{2}\right)\big(J_{-i}(\Theta)-J_{-i}^\star\big) \\
&= \left(1 - 2\mu\alpha\left(1-\frac{L\alpha}{2}\right)\right) \big(J_{-i}(\Theta)-J_{-i}^\star\big).
\end{align*}
Defining the contraction factor $\rho(\alpha) := 1 - 2\mu\alpha(1-\frac{L\alpha}{2})$, we obtain the result. Since the subtracted term is positive, $\rho(\alpha) < 1$.
\end{proof}

\subsection{Proof of Proposition \ref{prop:DA-reduction}}\label{app:proof_prop_DA-reduction}

\propDaReduction*

\begin{proof}
We relate the contraction of the total objective $J_{-i}$ to the reduction of the DA loss component $\E\LDA^{(-i)}$.

\textbf{Step 1: Bound DA Loss by the Total Objective.}
Recall the definition of the inner objective $J_{-i}(\Theta)$:
\[
J_{-i}(\Theta) = \mathcal{R}_{\mathcal{D}_{-i}}(\Theta) + \lambda_{\mathrm{DA}} \E\LDA^{(-i)}(\theta).
\]
Assuming the classification risk $\mathcal{R}_{\mathcal{D}_{-i}}(\Theta)$ is non-negative, the DA loss term is upper-bounded by the total objective:
\[
\lambda_{\mathrm{DA}} \E\LDA^{(-i)}(\theta) \le J_{-i}(\Theta).
\]
This holds for the post-adaptation parameters $\Theta_i^+ = (\theta_i^+, \phi_i^+)$ as well:
\[
\E\LDA^{(-i)}(\theta_i^+) \le \frac{1}{\lambda_{\mathrm{DA}}} J_{-i}(\Theta_i^+).
\]

\textbf{Step 2: Apply the Contraction Result.}
From Lemma~\ref{lem:one-step-contraction}:
\[
J_{-i}(\Theta_i^+) - J_{-i}^\star \le \rho(\alpha) \big(J_{-i}(\Theta) - J_{-i}^\star\big).
\]
Rearranging this inequality to bound $J_{-i}(\Theta_i^+)$:
\begin{align*}
J_{-i}(\Theta_i^+) &\le \rho(\alpha) J_{-i}(\Theta) - \rho(\alpha) J_{-i}^\star + J_{-i}^\star \\
&= \rho(\alpha) J_{-i}(\Theta) + (1-\rho(\alpha)) J_{-i}^\star.
\end{align*}

\textbf{Step 3: Combine the Bounds.}
Substituting the upper bound for $J_{-i}(\Theta_i^+)$ from Step 2 into the inequality from Step 1:
\[
\E\LDA^{(-i)}(\theta_i^+) \le \frac{1}{\lambda_{\mathrm{DA}}} \left( \rho(\alpha) J_{-i}(\Theta) + (1-\rho(\alpha)) J_{-i}^\star \right).
\]
\[
\E\LDA^{(-i)}(\theta_i^+) \le \frac{\rho(\alpha)}{\lambda_{\mathrm{DA}}} J_{-i}(\Theta) + \frac{1-\rho(\alpha)}{\lambda_{\mathrm{DA}}} J_{-i}^\star.
\]
\end{proof}

\section{Extended Details on Theory and Methodology}\label{app:C}

This appendix provides the detailed definitions, assumptions, lemmas, and discussions that were condensed in the main paper (Sections \ref{sec:theory} and \ref{sec:methodology}).

\subsection{Detailed Setup and Assumptions (Section \ref{sec:theory})}\label{app:C_setup}

\textbf{Lipschitz Continuity and Boundedness.}
We assume the loss function $\ell$ is bounded and Lipschitz continuous with respect to the distance metric $\fd$ in the feature space $\cZ$:
\begin{equation}\label{eq:lip}
0\le \ell(g_\phi(z),c)\le B_\ell,\qquad
\big|\ell(g_\phi(z),c)-\ell(g_\phi(z'),c)\big|\le L_\ell\,\fd(z,z')\quad\forall z,z'\in\cZ.
\end{equation}

\textbf{Feature Space Diameter.}
We assume the feature space $\cZ$ is compact, implying a finite diameter bound:
\begin{equation}\label{eq:diam}
R = \sup_{z,z' \in \cZ} \fd(z,z') < \infty.
\end{equation}

\textbf{Class-Conditional Risk.}
We define the class-conditional expected risk for a domain $\mathcal{D}$ and class $c$ as $\mathcal{R}_{\mathcal{D},c}(\Theta) = \E_{\mathbf{Z} \sim P_{\mathcal{D},c}}[\ell(g_\phi(\mathbf{Z}),c)]$.

\subsection{Detailed Lemmas for Theorem \ref{thm:qs-DA} (Section \ref{sec:control_qs_gap})}\label{app:C_lemmas}

The proof of Theorem \ref{thm:qs-DA} relies on the following three lemmas, which connect the Wasserstein distance to the DA loss via an InfoNCE decomposition and a two-step transport argument. All proofs in these lemmas are provided in~\Cref{lem:nce,lem:phi,lem:two-step}.

\begin{restatable}{lemma}{lemNceIdentity}[InfoNCE-style identity]\label{lem:nce}
The expected Domain-Class Distribution Alignment loss for the query domain $\mathcal{D}_i$, denoted $\E\,\LDA^{(i)}(\theta)$, can be decomposed as:
\begin{equation}\label{eq:iden}
\E\,\LDA^{(i)}(\theta) \;=\; A_{i}(\theta)\;+\;\E\Phi(\mathbf{Z};\theta),
\end{equation}
where $A_{i}(\theta)$ is the expected distance from a query feature to its corresponding positive centroids $\{\mu_{j,c}(\theta)\}$ in the support mixture (the "positive transport cost"):
\[
A_{i}(\theta):=\sum_{c}\pi_i(c)\sum_{j\neq i}\omega_j\,\E_{\mathbf{Z}\sim P_{i,c}} \fd(f_\theta(\mathbf{Z}),\mu_{j,c}(\theta)).
\]
and $\Phi(\mathbf{Z};\theta)$ is a term related to the negative centroids (the log-partition function of the negatives).
\end{restatable}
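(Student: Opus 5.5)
The identity follows by expanding the logarithm in the sample-wise alignment loss of Eq.~\eqref{eq:DA_sample} and taking expectations. For a query sample $(x,y=c)$ from $\mathcal{D}_i$, write $\mathbf{Z}=f_\theta(x)$. Because $-\log(e^{a}/\sum_b e^{b}) = -a + \log\sum_b e^{b}$, each summand with $j\neq i$ equals
\[
\fd(\mathbf{Z},\mu_{j,c}) + \log \sum_{(j',c')\in\mathcal{S}_{i,c}} \exp\bigl(-\fd(\mathbf{Z},\mu_{j',c'})\bigr).
\]
The second piece does not depend on $j$.

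Averaging over $j\neq i$ with weights $\omega_j$ then splits the sample loss into two terms. The first is a positive term $\sum_{j\neq i}\omega_j\,\fd(\mathbf{Z},\mu_{j,c})$. The second is the $j$-independent log-partition term. Because $\sum_{j\neq i}\omega_j=1$, this term survives the average unchanged, and we define it as
\[
\Phi(\mathbf{Z};\theta):=\log\sum_{(j',c')\in\mathcal{S}_{i,c}}\exp\bigl(-\fd(\mathbf{Z},\mu_{j',c'}(\theta))\bigr).
\]
In the empirical loss the weights are $\omega_j=1/(K-1)$. For the population version I would define $\E\,\LDA^{(i)}$ as the expectation of the sample loss with these weights $\omega_j$; any normalized choice works, including $\omega_j \propto P(\mathcal{D}=\mathcal{D}_j)$.

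Next, take expectations over $(x,y)\sim\mathcal{D}_i$ using the decomposition from Step 1 of the proof of \Cref{thm:qs}. We condition on $Y=c$ with probability $\pi_i(c)$ and take $\mathbf{Z}\sim P_{i,c}$. Linearity of expectation sends the positive term exactly to $A_i(\theta)$, since $\sum_c\pi_i(c)\sum_{j\neq i}\omega_j\E_{P_{i,c}}\fd(\mathbf{Z},\mu_{j,c})$ is its definition. The remaining term becomes $\E\,\Phi(\mathbf{Z};\theta)$, with the expectation taken jointly over $c$ and $\mathbf{Z}$. This gives \eqref{eq:iden}.

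The only real issue is bookkeeping rather than analysis. The statement writes $\fd(f_\theta(\mathbf{Z}),\cdot)$ with $\mathbf{Z}$ already a feature, and I would read this as $\mathbf{Z}=f_\theta(x)$ so that it matches \Cref{lem:two-step}. The centroids $\mu_{j,c}(\theta)$ are population class means and are treated as fixed for a given $\theta$, so they pass through the expectation. All expectations are finite because $\fd\le R$ on the compact feature space by \eqref{eq:diam}, and $\mathcal{S}_{i,c}$ is a finite nonempty set, so $\Phi$ is bounded. Hence no integrability or interchange issues arise.
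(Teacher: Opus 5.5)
Your proposal is correct and follows essentially the same route as the paper's proof: expand the per-sample log-ratio into the positive distance plus a $j$-independent log-partition term, use $\sum_{j\neq i}\omega_j=1$ to pull that term out of the weighted average, then take the expectation over $\mathcal{D}_i$ by conditioning on the class so that the positive part is exactly $A_i(\theta)$. Your remark that $\Phi$ depends on $c$ through the index set of negatives, so that $\E\,\Phi$ is a joint expectation over $(c,\mathbf{Z})$, is a small precision the paper leaves implicit.
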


Lemma \ref{lem:nce} shows that minimizing the DA loss simultaneously minimizes the positive transport cost $A_i$ and the negative term $\Phi(\mathbf{Z})$. To isolate $A_i$, we need to bound $\Phi(\mathbf{Z})$.

\begin{restatable}{lemma}{lemPhiBound}[Lower bound on $\Phi$ by diameter]\label{lem:phi}
The term $\Phi(\mathbf{Z};\theta)$ is lower-bounded using the diameter $R$ (Eq. \ref{eq:diam}). This leads to a crucial consequence: the expected positive transport cost $A_i$ is upper-bounded by the measurable DA loss.
\[
A_i(\theta) \le \E\,\LDA^{(i)}(\theta) + R.
\]
\end{restatable}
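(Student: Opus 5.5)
The plan is to start from the InfoNCE-style identity of \Cref{lem:nce}, which writes $\E\,\LDA^{(i)}(\theta) = A_i(\theta) + \E\Phi(\mathbf{Z};\theta)$. Rearranged, it gives $A_i(\theta) = \E\,\LDA^{(i)}(\theta) - \E\Phi(\mathbf{Z};\theta)$. The whole lemma therefore reduces to a pointwise lower bound $\Phi(\mathbf{Z};\theta) \ge -R$, after which we take expectations.

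\textbf{Step 1: make $\Phi$ explicit.} Expanding the sample loss \eqref{eq:DA_sample}, the negative log of the softmax splits into two pieces. The numerator contributes $+\fd(f_\theta(x),\mu_{j,y})$. The denominator contributes $\log \sum_{(j',c')\in\mathcal{S}_{i,y}} \exp(-\fd(f_\theta(x),\mu_{j',c'}))$. Averaging over $j \neq i$ with the weights $\omega_j$ (uniform $1/(K-1)$ in the algorithm) and over $(x,y)\sim\mathcal{D}_i$ produces $A_i(\theta)$ from the numerator. Hence
\[
\Phi(\mathbf{Z};\theta) = \log \sum_{(j',c')\in\mathcal{S}_{i,y}} \exp\bigl(-\fd(\mathbf{Z},\mu_{j',c'})\bigr).
\]
The weights sum to one and the denominator does not depend on $j$, so no $j$-weighting survives in $\Phi$.

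\textbf{Step 2: bound each exponent.} Every centroid $\mu_{j',c'}$ is an average of features in $\cZ$. If $\cZ$ is convex or closed under the relevant averaging, then $\mu_{j',c'}\in\cZ$ and \eqref{eq:diam} gives $\fd(\mathbf{Z},\mu_{j',c'})\le R$. Otherwise we treat this as part of the compactness/diameter assumption. Each exponential is then at least $e^{-R}$.

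\textbf{Step 3: bound the log-sum.} The sum has $|\mathcal{S}_{i,y}|\ge 1$ terms, since it contains at least one positive pair $(j,y)$ with $j\neq i$. Therefore
\[
\Phi(\mathbf{Z};\theta) \ge \log\bigl(|\mathcal{S}_{i,y}|\,e^{-R}\bigr) \ge -R .
\]
Taking expectations gives $\E\Phi \ge -R$. Substituting into the rearranged identity yields $A_i(\theta)\le \E\,\LDA^{(i)}(\theta)+R$.

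The main obstacle is the bookkeeping in Step 1. We must check that the decomposition of \Cref{lem:nce} really leaves $\Phi$ as a pure log-partition term, with no leftover sign or weighting: the $j$-average weights $\omega_j$ must match the uniform $1/(K-1)$ of the loss, and the denominator must be independent of $j$. We must also confirm that $\mathcal{S}_{i,y}$ is nonempty so the logarithm is well defined. Compared with this, the diameter step is routine.
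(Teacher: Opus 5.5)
Your proposal is correct and follows the paper's proof essentially step for step. You rearrange the identity of Lemma~\ref{lem:nce}, bound each $e^{-\fd(\mathbf{Z},\mu_{j',c'})}$ below by $e^{-R}$ to get $\Phi \ge \log N - R \ge -R$ for $N \ge 1$ terms, and substitute; your added care about the centroids lying in $\cZ$ and about nonemptiness of the index set (which holds whether one uses $\mathcal{S}_{i,y}$ or the appendix's $(j',c')\neq(i,c)$) only makes explicit what the paper takes for granted.
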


Now that we have connected the DA loss to the positive transport cost $A_i$ (which measures the distance from query features to support centroids), we must relate $A_i$ back to the Wasserstein distance between the underlying distributions ($P_{i,c}$ and $P_{-i,c}$).

Let $\bar{\mu}_{-i,c}:=\sum_{j\neq i}\omega_j\,\delta_{\mu_{j,c}}$ be the mixture of support-domain centroids for class $c$.
Let $S_{j,c}:=\E_{\mathbf{Z}\sim P_{j,c}} \fd(f_\theta(\mathbf{Z}),\mu_{j,c})$ be the average intra-domain spread (scatter) of features.

\begin{restatable}{lemma}{lemTwoStepTransport}[Two-step transport]\label{lem:two-step}
For any class $c$, the Wasserstein distance between $P_{i,c}$ and $P_{-i,c}$ can be bounded by applying the triangle inequality, using the centroids as intermediates:
\[
\Wone\!\big(P_{i,c},P_{-i,c}\big)
\ \le\
\underbrace{\Wone\!\big(P_{i,c},\bar{\mu}_{-i,c}\big)}_{\le \sum_{j\neq i}\omega_j\,\E_{\mathbf{Z}\sim P_{i,c}} \fd(f_\theta(\mathbf{Z}),\mu_{j,c})}
\ +\
\underbrace{\Wone\!\big(\bar{\mu}_{-i,c},P_{-i,c}\big)}_{\le \sum_{j\neq i}\omega_j\,S_{j,c}}.
\]
\end{restatable}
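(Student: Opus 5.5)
The plan is to use that $\Wone$ is a metric on probability measures over $\cZ$, with the mixture of centroid Dirac masses $\bar{\mu}_{-i,c}=\sum_{j\neq i}\omega_j\,\delta_{\mu_{j,c}}$ as the intermediate measure. Throughout I take $P_{-i,c}=\sum_{j\neq i}\omega_j P_{j,c}$, with weights $\omega_j$ equal to the (class-conditional) mixing weights of the support domains. The triangle inequality for $\Wone$ then gives
\[
\Wone(P_{i,c},P_{-i,c})\le \Wone(P_{i,c},\bar{\mu}_{-i,c})+\Wone(\bar{\mu}_{-i,c},P_{-i,c}).
\]
It remains to bound each of the two terms by exhibiting an explicit coupling. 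Since $\Wone$ is an infimum over couplings, the cost of any admissible coupling is an upper bound.

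For the first term, I will use the coupling $\gamma_1$ that samples $\mathbf{Z}\sim P_{i,c}$ and, independently, an index $J=j$ with probability $\omega_j$, and outputs the pair $(\mathbf{Z},\mu_{J,c})$. This is the product coupling. Its first marginal is $P_{i,c}$ and its second is $\bar{\mu}_{-i,c}$. Its cost is $\sum_{j\neq i}\omega_j\,\E_{\mathbf{Z}\sim P_{i,c}}\fd(f_\theta(\mathbf{Z}),\mu_{j,c})$, which is exactly the first under-brace.

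For the second term, I will use the coupling $\gamma_2$ that draws $J=j$ with probability $\omega_j$, then $\mathbf{Z}'\sim P_{j,c}$, and outputs $(\mu_{J,c},\mathbf{Z}')$. Its marginals are $\bar{\mu}_{-i,c}$ and $\sum_j\omega_jP_{j,c}=P_{-i,c}$. Its cost is $\sum_j\omega_j S_{j,c}$ by the definition of $S_{j,c}$. Equivalently, one can invoke the joint convexity of $\Wone$ under mixtures with matching weights to get $\Wone(\bar{\mu}_{-i,c},P_{-i,c})\le\sum_j\omega_j\Wone(\delta_{\mu_{j,c}},P_{j,c})$, and then use that transport from a Dirac is the unique coupling.

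The main obstacle is notational rather than mathematical. The paper writes $P_{i,c}$ as a distribution of features $Z=f_\theta(x)$, but the lemma writes $\fd(f_\theta(\mathbf{Z}),\cdot)$. I will read $\mathbf{Z}$ as the feature itself, so that $f_\theta$ is already applied, and keep everything in $\cZ$. I also need $P_{-i,c}$ to be precisely the $\omega$-mixture of the $P_{j,c}$. This requires choosing $\omega_j=P(\mathcal{D}_j\mid Y=c,\mathcal{D}_{-i})$, which in general depends on $c$. If $\omega_j$ is intended to be class-independent, the second coupling's marginal would not match and the bound could fail, so I will state that choice explicitly. Finiteness of all costs follows from compactness of $\cZ$ (finite diameter $R$), so every $\Wone$ is well defined.
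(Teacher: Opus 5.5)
Your proposal is correct and follows the paper's proof step for step. It uses the triangle inequality through $\bar{\mu}_{-i,c}$, the independent plan sending $\mathbf{Z}\sim P_{i,c}$ to $\mu_{j,c}$ with probability $\omega_j$ for the first term, and for the second term an explicit mixture coupling that is exactly what the paper's appeal to joint convexity of $\Wone$ together with $\Wone(\delta_{\mu_{j,c}},P_{j,c})=S_{j,c}$ amounts to.

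Your requirement that $P_{-i,c}=\sum_{j\neq i}\omega_j P_{j,c}$, with $\omega_j$ possibly depending on $c$, is not a deviation: the paper's convexity step uses it silently. It is worth stating, because it conflicts with the uniform choice $\omega_j=1/(K-1)$ made in Lemma~\ref{lem:nce} whenever class priors differ across the support domains.
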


\textbf{Definition of $C_{\mathrm{scat}}$.}
Theorem \ref{thm:qs-DA} combines these results. The term $C_{\mathrm{scat}}^{(i)}$ mentioned in the theorem is derived from the second term of Lemma \ref{lem:two-step}, averaged over the classes according to the query prior $\pi_i(c)$:
\[
C_{\mathrm{scat}}^{(i)} := \sum_{c \in \cC} \pi_i(c) \sum_{j\neq i} \omega_j S_{j,c}.
\]

\subsection{Proof of Lemma \ref{lem:nce}}\label{app:proof_lem_nce}

\lemNceIdentity*

\begin{proof}
The proof relies on analyzing the structure of the InfoNCE loss for a single sample and then taking the expectation. We use $s_{j,c}(z) = \fd(z, \mu_{j,c})$ for brevity, noting the dependency on $\theta$.

\textbf{Step 1: Single-Sample Identity.}
Let's analyze the DA loss for a single sample $(z,c)$ from domain $i$. We define the negative partition function $\Phi(z)$:
\[
\Phi(z) = \log\sum_{(j',c')\neq(i,c)}e^{-s_{j',c'}(z)}.
\]
Using the logarithm property $\log(a/b) = \log(a) - \log(b)$, we expand the loss (assuming $K-1$ support domains and uniform weights $\omega_j = \frac{1}{K-1}$):
\begin{align*}
\text{Loss}(z,c,i) &= \sum_{j\neq i} \omega_j \left( -\log\left(e^{-s_{j,c}(z)}\right) + \log\left(\sum_{(j',c')\neq(i,c)}e^{-s_{j',c'}(z)}\right) \right) \\
&= \sum_{j\neq i} \omega_j \left( s_{j,c}(z) + \Phi(z) \right)
\end{align*}
Since $\Phi(z)$ does not depend on $j$ and $\sum_{j\neq i} \omega_j = 1$:
\begin{align*}
\text{Loss}(z,c,i) &= \sum_{j\neq i} \omega_j s_{j,c}(z) + \Phi(z).
\end{align*}

\textbf{Step 2: Taking the Expectation.}
Now, we take the expectation over samples $(\mathbf{Z},\mathbf{C})$ drawn from the query domain distribution $\mathcal{D}_i$.
\[
\E\,\LDA^{(i)}(\theta) = \E \left[ \sum_{j\neq i}\omega_j s_{j,\mathbf{C}}(\mathbf{Z}) \right] + \E[\Phi(\mathbf{Z})].
\]
Let's analyze the first term on the RHS. By definition of expectation over $\mathcal{D}_i$:
\begin{align*}
\E \left[ \sum_{j\neq i}\omega_j s_{j,\mathbf{C}}(\mathbf{Z}) \right] &= \sum_{c \in \cC} \pi_i(c) \E_{\mathbf{Z} \sim P_{i,c}} \left[ \sum_{j \neq i} \omega_j \fd(\mathbf{Z}, \mu_{j,c}) \right] \\
&= \sum_{c \in \cC} \pi_i(c) \sum_{j \neq i} \omega_j \E_{\mathbf{Z} \sim P_{i,c}} [\fd(\mathbf{Z}, \mu_{j,c})]
\end{align*}
This expression matches the definition of $A_i(\theta)$. Substituting this back, we get the desired identity.
\end{proof}

\subsection{Proof of Lemma \ref{lem:phi}}\label{app:proof_lem_phi}

\lemPhiBound*

\begin{proof}
The proof involves bounding $\Phi(z)$ using the diameter $R$ and then relating $A_i$ back to $\LDA$.

\textbf{Step 1: Bounding $\Phi(z)$.}
\begin{enumerate}
\item \textbf{Bound the Distance:} Since $z$ and $\mu_{j',c'}$ are in $\cZ$, their distance is bounded by $R$.
\[
s_{j',c'}(z) = \fd(z, \mu_{j',c'}) \le R.
\]
\item \textbf{Apply Exponential and Sum:} $e^{-s_{j',c'}(z)} \ge e^{-R}$. Let $N$ be the number of terms in the sum (number of negative pairs).
\[
\sum_{(j',c') \neq (i,c)} e^{-s_{j',c'}(z)} \ge N \cdot e^{-R}.
\]
\item \textbf{Apply Logarithm:}
\begin{align*}
\Phi(z) = \log\left(\sum_{(j',c') \neq (i,c)} e^{-s_{j',c'}(z)}\right) &\ge \log(N \cdot e^{-R}) = \log N - R.
\end{align*}
\end{enumerate}

\textbf{Step 2: Bounding $A_i$.}
\begin{enumerate}
\item \textbf{Recall Identity (Lemma \ref{lem:nce}):}
\[
A_i(\theta) = \E\,\LDA^{(i)}(\theta) - \E[\Phi(\mathbf{Z};\theta)].
\]
\item \textbf{Apply the Bound on $\Phi(\mathbf{Z})$:} We have $\Phi(\mathbf{Z}) \ge \log N - R$, so $-\Phi(\mathbf{Z}) \le R - \log N$.
\item \textbf{Substitute and Conclude:}
\[
A_i(\theta) \le \E\,\LDA^{(i)}(\theta) + R - \log N.
\]
Since $N \ge 1$, $\log N \ge 0$. Removing this non-positive term yields the simpler bound:
\[
A_i(\theta) \le \E\,\LDA^{(i)}(\theta) + R.
\]
\end{enumerate}
\end{proof}

\subsection{Proof of Lemma \ref{lem:two-step}}\label{app:proof_lem_two-step}

\lemTwoStepTransport*

\begin{proof}
The proof proceeds by applying the triangle inequality and bounding the resulting terms.

\textbf{Step 1: Triangle Inequality.}
The 1-Wasserstein distance satisfies the triangle inequality. We introduce the mixture of centroids, $\bar{\mu}_{-i,c}$, as an intermediate point.
\[
\Wone(P_{i,c}, P_{-i,c}) \le \Wone(P_{i,c}, \bar{\mu}_{-i,c}) + \Wone(\bar{\mu}_{-i,c}, P_{-i,c}).
\]

\textbf{Step 2: Bounding the First Term ($P_{i,c} \to \bar{\mu}_{-i,c}$).}
The Wasserstein distance is the minimum cost over all transport plans. We construct a specific plan: transport $\mathbf{Z} \sim P_{i,c}$ to $\mu_{j,c}$ with probability $\omega_j$. The expected cost is:
\begin{align*}
\text{Cost} &= \E_{\mathbf{Z} \sim P_{i,c}} \left[ \sum_{j \neq i} \omega_j \fd(\mathbf{Z}, \mu_{j,c}) \right] = \sum_{j \neq i} \omega_j \E_{\mathbf{Z} \sim P_{i,c}} [\fd(\mathbf{Z}, \mu_{j,c})].
\end{align*}
$\Wone(P_{i,c}, \bar{\mu}_{-i,c})$ is upper bounded by this cost.

\textbf{Step 3: Bounding the Second Term ($\bar{\mu}_{-i,c} \to P_{-i,c}$).}
Using the convexity of the Wasserstein distance:
\[
\Wone(\bar{\mu}_{-i,c}, P_{-i,c}) \le \sum_{j\neq i} \omega_j \Wone(\delta_{\mu_{j,c}}, P_{j,c}).
\]
The term $\Wone(\delta_{\mu_{j,c}}, P_{j,c})$ is the transport cost from a point mass at $\mu_{j,c}$ to the distribution $P_{j,c}$, which is exactly the intra-domain spread $S_{j,c}$.
\[
\Wone(\delta_{\mu_{j,c}}, P_{j,c}) = \E_{\mathbf{Z} \sim P_{j,c}}[\fd(\mathbf{Z}, \mu_{j,c})] = S_{j,c}.
\]
Thus, $\Wone(\bar{\mu}_{-i,c}, P_{-i,c}) \le \sum_{j\neq i} \omega_j S_{j,c}$.

\textbf{Step 4: Combining the Bounds.}
Substituting the bounds from Step 2 and Step 3 back into Step 1 completes the proof.
\end{proof}

\subsection{The Final Combined Bound (Section \ref{sec:general_arb})}\label{app:C_final_bound}

We combine the Mixture-aware bound (Theorem \ref{thm:mix}) and the final theoretical bound (Eq. \eqref{eq:qs-final}) to yield a single, comprehensive guarantee for an arbitrary target domain $\mathcal{T}$.

\begin{align*}
\mathcal{R}_{\mathcal{T}}(\Theta) &\le \sum_i \pi_i \mathcal{R}_{\mathcal{D}_i}(\Theta) + L_\ell\,\Wone\!\big(\mathcal{T},\widetilde{\mathcal{T}}_{\boldsymbol{\pi}}\big) \\
&\le \sum_i \pi_i \left\{ \underbrace{\mathcal{R}_{\mathcal{D}_{-i}}(\Theta)}_{\text{Support Error}} + \underbrace{\sum_{c} | \pi_i(c) - \pi_{-i}(c) | \mathcal{R}_{-i,c}(\Theta)}_{\text{Prior Shift Impact}} + \underbrace{L_\ell\big(\E\,\LDA^{(i)}(\theta)+C_{\scat}^{(i)}+R\big)}_{\text{Feature Shift}} \right\} \\
&\qquad + L_\ell\,\Wone\!\big(\mathcal{T},\widetilde{\mathcal{T}}_{\boldsymbol{\pi}}\big).
\end{align*}

This final inequality connects the performance on an arbitrary, unseen target domain $\mathcal{T}$ to quantities measurable on the source domains, explicitly capturing how the algorithm controls both prior shift (by minimizing $\mathcal{R}_{-i,c}(\Theta)$) and feature shift (by minimizing the DA loss).

\subsection{Detailed MAML Analysis and Discussion (Section \ref{sec:maml_analysis})}\label{app:C_maml}

\textbf{Setup for Analysis.}
The population inner objective $J_{-i}(\Theta)$ is defined as:
\begin{equation}
\label{eq:inner_objective}
J_{-i}(\Theta) := \mathcal{R}{\mathcal{D}_{-i}}(\Theta) + \lambda_{\mathrm{DA}} \E\LDA^{(-i)}(\theta).
\end{equation}

\begin{assumption}[Smoothness and Polyak--Lojasiewicz (PL) Condition]
\label{ass:smooth-pl}
$J_{-i}(\Theta)$ is $L$-smooth and satisfies the PL inequality with parameter $\mu>0$. Furthermore, the learning rate satisfies $0<\alpha<2/L$.
\end{assumption}

\begin{remark}[Discussion on the PL Condition]
The PL condition is a strong assumption, particularly for highly non-convex landscapes typical of deep neural networks. While this assumption may not hold globally in practice, it is a standard tool in optimization analysis used to establish convergence rates~\citep{karimi2016linear}. Furthermore, recent studies suggest that overparameterized networks may satisfy conditions similar to PL in relevant regions of the parameter space~\citep{dnn_pl}. We adopt this assumption to provide theoretical insights into the local optimization dynamics of the inner loop.
\end{remark}

\begin{remark}[Manifold Mixup and Optimization Dynamics]
The integration of Manifold Mixup in the inner loop serves as a vital regularization technique that potentially smooths the optimization landscape. By encouraging linear behavior in the feature space between source domains, Mixup may facilitate the contraction analyzed in Lemma \ref{lem:one-step-contraction}, potentially improving the effective smoothness constant $L$ and leading to more stable adaptation.
\end{remark}

\begin{remark}[The Connection between Source Alignment and Target Generalization]
The structure of the MLDG-style meta-learning procedure is key to its effectiveness. The inner loop performs adaptation on the source domains ($-i$), actively reducing the source alignment objective $J_{-i}$ (Proposition \ref{prop:DA-reduction}). The outer loop then minimizes the meta-objective $\min_{\Theta} \E_i [\mathcal{R}{\mathcal{D}_i}(\Theta_i^+)]$, evaluating the adapted parameters on the held-out target domain ($i$). This optimization forces the model to find an initialization $\Theta$ such that the gradient step towards better source alignment simultaneously leads to better target generalization.
\end{remark}

\subsection{Population Loss Definitions (Section \ref{sec:loss})}\label{app:C_losses}

We provide the population definitions for the loss functions used in the theoretical analysis.

\textbf{Classification Loss (Population).}
\begin{equation}
\label{eq:CE_pop}
\LCE(\theta, \phi) = \mathbb{E}_{(\mathbf{X},\mathbf{Y})}\big[\mathrm{CE}(g_\phi(f_\theta(\mathbf{X})),\mathbf{Y})\big].
\end{equation}

\textbf{Domain-Class Distribution Alignment (DA) Loss (Population).}
Let $\mu_{j,c}(\theta)$ be the centroid of class $c$ in domain $j$. The objective for domain $i$ is:
\begin{equation}
\label{eq:DA_pop}
\LDA^{(i)}(\theta) = \E_{(\mathbf{X},c) \sim \mathcal{D}_i} \left[ \frac{-1}{K-1} \sum_{j \neq i}
\log \frac{\exp\big(-\fd(f_\theta(\mathbf{X}), \mu_{j, c}(\theta))\big)}
{\sum_{(j',c') \neq (i, c)} \exp\big(-\fd(f_\theta(\mathbf{X}), \mu_{j',c'}(\theta))\big)} \right].
\end{equation}

\subsection{ERM Insufficiency and Necessity of Bounds}

A crucial implication of this analysis is the insufficiency of standard Empirical Risk Minimization (ERM), even when the target domain lies within the convex hull of the source domains. Theorem~\ref{thm:mix} bounds the target risk $R_{T}(\Theta)$ by the Source Risk ($\sum\pi_{i}R_{D_{i}}(\Theta)$) and the Distribution Shift term ($L_{l}W_{1}(T,T_{\pi})$). ERM only minimizes the Source Risk.

Critically, these two terms are coupled via the learned representation $\Theta$. The Shift Term is measured in the feature space and is therefore optimizable. However, ERM often minimizes Source Risk by learning source-specific shortcuts (e.g., spurious correlations~\cite{izmailov2022feature, chen2023understanding, deng2023robust}), which paradoxically increases the Shift Term. Therefore, controlling only the risk does not guarantee low target risk.

Furthermore, the Shift Term in Theorem~\ref{thm:mix} is intractable because it depends on the unknown target distribution $T$. \Cref{thm:qs,thm:qs-DA} are essential as they derive the DA loss as a tractable surrogate for this intractable shift term. Our meta-learning framework operationalizes this bound by treating each source domain ($\mathcal{D}_{i}$) as a virtual target. Unlike ERM, which optimizes for the source mixture, our approach optimizes for generalization across simulated unseen tasks.

\clearpage

\section{Algorithm Details}\label{app:algorithms}

This appendix provides the detailed pseudocode for the \textsc{RC-Align} training procedure (Algorithm~\ref{alg:rc_align}) and the computation of the DA loss (Algorithm~\ref{alg:compute_da_loss_corrected}).

\begin{algorithm}[H]
\caption{\textsc{RC-Align} (Robust Compound Alignment) Training Procedure}
\label{alg:rc_align}
\begin{algorithmic}[1]
\State \textbf{Input:} Source domains $\DD=\{\mathcal{D}_1, \dots, \mathcal{D}_K\}$, learning rates $\alpha$ (inner), $\eta$ (outer), $\lambda_{\mathrm{DA}}$, Mixup parameter $\alpha_{\text{mixup}}$.
\State \textbf{Initialize:} Model parameters $\Theta = (\theta, \phi)$.
\While{not converged}
    \State Sample a mini-batch $B_k$ from each domain $\mathcal{D}_k$.
    \For{$i=1$ to $K$} \Comment{LODO Episode: $i$ is the target domain}
        \State Compute support centroids $\{\mu\}_{-i}$ using $B_{-i} = \{B_k\}_{k\neq i}$ (w/o grad).
        \State Initialize inner loss accumulator: $\mathcal{L}_{\text{inner}} = 0$.
        \State Initialize adapted parameters: $\Theta' = \Theta$.

        \State \textcolor{gray}{\# --- Inner (Adaptation) Step with Manifold Mixup ---}
        \State $N_{\text{pairs}} = 0$.
        \For{$(j_1, j_2)$ in RandomPairs($\{k\}_{k\neq i}$)}
            \State $N_{\text{pairs}} \leftarrow N_{\text{pairs}} + 1$.
            \State Get batches $(X_{j_1}, Y_{j_1}), (X_{j_2}, Y_{j_2})$.
            \State $Z_{j_1} \leftarrow f_\theta(X_{j_1})$
            \State $Z_{j_2} \leftarrow f_\theta(X_{j_2})$

            \State Sample mixing coefficient $\lambda \sim \text{Beta}(\alpha_{\text{mixup}}, \alpha_{\text{mixup}})$.

            \State $Z_{\text{mix}} \leftarrow \lambda Z_{j_1} + (1-\lambda) Z_{j_2}$.

            \State \textcolor{gray}{\# Calculate mixed composite loss}
            \State $\mathcal{L}_{\mathrm{CE}, j_1} \leftarrow \LCEbatch(Z_{\text{mix}}, Y_{j_1}; \phi)$
            \State $\mathcal{L}_{\mathrm{DA}, j_1} \leftarrow \LDAbatch(Z_{\text{mix}}, Y_{j_1}, d_{j_1}; \theta, \{\mu\}_{-i})$
            \State $\widehat{\mathcal{L}}_{j_1} \leftarrow \mathcal{L}_{\mathrm{CE}, j_1} + \lambda_{\mathrm{DA}} \mathcal{L}_{\mathrm{DA}, j_1}$

            \State $\mathcal{L}_{\mathrm{CE}, j_2} \leftarrow \LCEbatch(Z_{\text{mix}}, Y_{j_2}; \phi)$
            \State $\mathcal{L}_{\mathrm{DA}, j_2} \leftarrow \LDAbatch(Z_{\text{mix}}, Y_{j_2}, d_{j_2}; \theta, \{\mu\}_{-i})$
            \State $\widehat{\mathcal{L}}_{j_2} \leftarrow \mathcal{L}_{\mathrm{CE}, j_2} + \lambda_{\mathrm{DA}} \mathcal{L}_{\mathrm{DA}, j_2}$

            \State $\mathcal{L}_{\text{mix}} \leftarrow \lambda \widehat{\mathcal{L}}_{j_1} + (1-\lambda) \widehat{\mathcal{L}}_{j_2}$
            \State $\mathcal{L}_{\text{inner}} \leftarrow \mathcal{L}_{\text{inner}} + \mathcal{L}_{\text{mix}}$.
        \EndFor
        \State $\mathcal{L}_{\text{inner}} \leftarrow \mathcal{L}_{\text{inner}} / N_{\text{pairs}}$.
        \State $\Theta' \leftarrow \Theta - \alpha\,\nabla_\Theta \mathcal{L}_{\text{inner}}$ \Comment{Adaptation (FOMAML)}

        \State \textcolor{gray}{\# --- Outer (Meta) Step ---}
        \State $\mathcal{L}_{\text{outer}} \leftarrow \LCEbatch(B_i; \Theta')$ \Comment{Evaluate on target domain}
        \State Update $\Theta$ using $\nabla_\Theta \mathcal{L}_{\text{outer}}$ with optimizer step $\eta$.
    \EndFor
\EndWhile
\end{algorithmic}
\end{algorithm}

\begin{algorithm}
\caption{Function: ComputeDALoss (Aligned with Implementation)}
\label{alg:compute_da_loss_corrected}
\begin{algorithmic}[1]
\Require Features $Z$, Labels $Y$, Domain $D_{Z}$, Centroids $\mathcal{C}$ (pre-normalized), Temperature $T$.
    \State // Normalize input features (as implemented in the Python code)
    \State $Z^{norm} \leftarrow \text{Normalize}(Z)$
    \State // Use negative L2 distance as the similarity score (logit)
    \State $Logits \leftarrow -\text{PairwiseL2Distance}(\mathcal{C}, Z^{norm})$
    
    
    \State $Logits \leftarrow Logits / T$
    
    \State // Contrastive Loss (InfoNCE style)
    \State // Define Positive Mask $M_{pos}$
    \State $M_{class} \leftarrow (\text{Labels}(\mathcal{C}) == Y)$
    \State $M_{domain} \leftarrow (\text{Domains}(\mathcal{C}) == D_{Z})$
    \State $M_{pos} \leftarrow M_{class} \text{ AND } (\neg M_{domain})$ \Comment{Same class AND different domain}
    
    \State // Define Negative Logits (Mask out positives for the denominator)
    \State $L_{neg} \leftarrow Logits$. Set $L_{neg}[M_{pos}] = -\infty$.
    
    \State // Calculate LogSumExp over negative examples
    \State $LSE \leftarrow \text{LogSumExp}(L_{neg})$
    
    \State $LogProbs \leftarrow Logits - LSE$
    \State // Calculate mean negative log probability of positive examples
    \State $\mathcal{L}_{DA} \leftarrow -\text{Mean}(LogProbs[M_{pos}])$
    \State \Return $\mathcal{L}_{DA}$
\end{algorithmic}
\end{algorithm}


\clearpage
\section{Implementation Details}\label{app:implementation}

\subsection{Dataset Details}\label{app:dataset_details}
We provide a brief overview of the datasets used in our evaluation.
\begin{itemize}
    \item \textbf{PACS}~\citep{PACS}: 9,991 images, 7 classes, 4 domains (Photo, Art painting, Cartoon, Sketch).
    \item \textbf{VLCS}~\citep{VLCS}: 10,729 images, 5 classes, 4 domains (PASCAL VOC 2007, LabelMe, Caltech-101, SUN09).
    \item \textbf{OfficeHome}~\citep{OfficeHome}: 15,588 images, 65 classes, 4 domains (Artistic, Clipart, Product, Real-World).
    \item \textbf{TerraIncognita}~\citep{TerraInc}: 24,788 images, 10 classes, 4 camera trap locations (domains).
    \item \textbf{DomainNet}~\citep{DomainNet}: Approx. 0.6 million images, 345 classes, 6 domains.
\end{itemize}

\textbf{MDLT Datasets.} The MDLT versions (PACS-MLT, VLCS-MLT, OfficeHome-MLT, TerraIncognita-MLT, and DomainNet-MLT) are from~\citet{Boda}. Crucially, the imbalance factors and the identity of majority/minority classes intentionally diverge across different domains, simulating realistic compound shifts.

\subsection{Training Protocol and Hyperparameters}\label{app:training_details}

\textbf{Architecture and Initialization.}
We utilize a standard ResNet-50 architecture. The feature extractor $f_\theta$ is initialized with weights pre-trained on ImageNet-1K. The classifier head $g_\phi$ is initialized randomly.

\textbf{General Setup and Optimization.}
We adhere to the standard training protocols of the DomainBed framework. Models are typically trained for 5,000 iterations (15,000 for DomainNet-MLT, and DomainNet126-MLT). We use the Adam optimizer~\citep{kingma2015adam} with a weight decay typically set to $5\times 10^{-4}$. The batch size is set to 32 in DomainNet, and 24 in others, following~\citet{Boda}.

\textbf{Data Augmentation.}
We employ standard data augmentations: images are resized and cropped to $224 \times 224$ using Random Resized Crop, followed by Random Horizontal Flip, and Color Jitter (brightness, contrast, saturation=0.4; hue=0.1).

\textbf{Hyperparameters and Model Selection.}
We strictly adhered to the standardized DomainBed protocol~\citep{DomainBed} for hyperparameter tuning and model selection. We employed the "Training-domain validation" approach, where models are evaluated on validation sets composed of held-out samples from the training domains.

For all methods (including baselines and \textsc{RC-Align}), hyperparameters were selected by conducting a randomized search (typically 20-50 trials per algorithm) over a predefined joint distribution of hyperparameters, as specified by the DomainBed methodology. The configuration yielding the highest accuracy on the training-domain validation set was selected. This ensures a fair comparison across all methods.

\begin{itemize}
    \item \textbf{Learning Rates:} The base learning rate $\eta$ (used for the outer loop update) and the inner loop learning rate $\alpha$ (used in the adaptation step, Eq.~\eqref{eq:inner_step}) were searched within the standard DomainBed range (e.g., log-uniform sampling between $10^{-5}$ and $10^{-3}$). When using the FOMAML approximation, $\alpha$ is often searched jointly with or set equal to $\eta$.
    \item \textbf{\textsc{RC-Align} Specific:} The alignment weight $\lambda_{\mathrm{DA}}$ was selected via a search over $\{0.1, 0.5, 1.0, 2.0\}$. The Manifold Mixup parameter $\alpha_{\text{mixup}}$ (used for the Beta distribution sampling) was searched within the standard DomainBed mixup space (e.g., $[0.1, 1.0]$).
\end{itemize}

We report the average accuracy over three independent runs with random seed.

\textbf{Experimental Environment.}
The experiments were implemented using PyTorch (v1.10+) and executed on a system equipped with NVIDIA RTX 3090 GPUs (24GB VRAM).

\clearpage

\section{Baselines for MDLT Experiments}\label{app:mdlt_baseline}

\textbf{Source of baselines.}
Unless otherwise noted, all MDLT baselines and their hyperparameters follow the public setup curated in BoDA~\citep{Boda}. We adopted the same five MDLT benchmarks and the standard training protocol so that numbers are directly comparable.

\textbf{Vanilla / Empirical Risk Minimization.}
\textbf{ERM} is trained by minimizing the average cross-entropy over pooled domains. We report ERM as implemented in DomainBed~\citep{DomainBed}.

\textbf{Domain invariance.}
\textbf{IRM}~\citep{IRM} learns representations that admit an optimal invariant classifier across environments.
\textbf{CORAL} aligns second-order feature statistics across domains~\citep{DeepCORAL}.
\textbf{MMD} minimizes the maximum mean discrepancy between domain feature distributions~\citep{MMD}.
\textbf{DANN}~\citep{DANN} uses adversarial training to make features domain-indistinguishable; 
\textbf{CDANN} conditions the discriminator on class predictions to better preserve label information~\citep{CDANN}.

\textbf{Robust optimization.}
\textbf{GroupDRO} optimizes worst-group risk under group shift to improve worst-domain accuracy~\citep{GroupDRO}.

\textbf{Meta-learning and multi-tasking.}
\textbf{MLDG} meta-learns for cross-domain generalization via simulated train/test domain splits~\citep{MLDG}.
\textbf{MTL} (marginal transfer learning) augments inputs with domain marginals and learns a domain-aware classifier~\citep{blanchard2021mtl_marginal_transfer_learning}.
\textbf{Fish} maximizes inter-domain gradient alignment to encourage updates that help across domains~\citep{Fish}.

\textbf{Augmentation and style robustness.}
\textbf{Mixup} performs convex interpolations of inputs and labels, improving regularization under compound shifts~\citep{Mixup}.
\textbf{SagNet} reduces style bias via style/content disentanglement and adversarial training~\citep{SAGNET}.

\textbf{Long-tailed (LT/MDLT) baselines.}
\textbf{Focal}~\citep{Lin2017Focal} reweights hard examples.
\textbf{CBLoss} uses the effective number of samples as class weights~\citep{Cui2019CBLoss}.
\textbf{LDAM} enforces label-distribution-aware margins~\citep{Cao2019LDAM}.
\textbf{Balanced Softmax} corrects the softmax gradient bias under label shift~\citep{Ren2020BalancedSoftmax}.
\textbf{CRT (cRT)} decouples representation learning and re-trains a balanced classifier~\citep{Kang2020cRT}.
\textbf{SSP} denotes self-supervised pretraining on the same data followed by supervised fine-tuning, which is known to boost long-tailed recognition~\citep{SSP}.

\textbf{BoDA family.}
\textbf{BoDA}, \textbf{BoDA\_r}, \textbf{BoDA-M}, and \textbf{BoDA\_{r,c}} share the same objective family that balances cross-domain alignment and within-domain calibration under class imbalance, with suffixes indicating coupled/decoupled classifier training and margin variants~\citep{Boda}.

\textbf{Our method.}
\textbf{\textsc{RC-Align} (Ours)} is evaluated under the identical data splits, model backbones, and early-stopping rules as above to ensure fair comparison.

\clearpage

\section{Detailed Results for Domain Generalization}\label{app:dg_details}

This section provides the detailed results for domain generalization benchmark.

\begin{table}[h!]
\centering
\caption{Domain generalization accuracy (\%) on the PACS dataset. Results are sourced from their original papers unless otherwise noted. $\dagger$~\cite{DomainBed}; $\ddagger$~\cite{SWAD}; $\S$~\cite{MLIR}. The best average performance is highlighted in \textbf{bold}.}
\label{tab:pacs_results}
\begin{tabular}{l|cccc|c}
\toprule
\textbf{Algorithm} & \textbf{A}      & \textbf{C}      & \textbf{P}      & \textbf{S}      & \textbf{Avg}    \\ 
\midrule
\midrule
ERM\footnotemark[2]~\citep{ERM}& 84.7            & 80.8            & 97.2            & 79.3            & 85.5            \\
IRM\footnotemark[2]~\citep{IRM}                & 84.8            & 76.4            & 96.7            & 76.1            & 83.5            \\
GroupDRO\footnotemark[2]~\citep{GroupDRO}& 83.5            & 79.1            & 96.7            & 78.3            & 84.4            \\
Mixup\footnotemark[2]~\citep{Mixup}& 86.1            & 78.9            & 97.6            & 75.8            & 84.6            \\
MLDG\footnotemark[2]~\citep{MLDG}& 85.5            & 80.1            & 97.4            & 76.6            & 84.9            \\
CORAL\footnotemark[2]~\citep{DeepCORAL}& {88.3}   & 80.0            & 97.5            & 78.8            & 86.2            \\
MMD\footnotemark[2]~\citep{MMD}& 86.1            & 79.4            & 96.6            & 76.5            & 84.6            \\
DANN\footnotemark[2]~\citep{DANN}& 86.4            & 77.4            & 97.3            & 73.5            & 83.7            \\
CDANN\footnotemark[2]~\citep{CDANN}& 84.6            & 75.5            & 96.8            & 73.5            & 82.6            \\
MTL\footnotemark[2]~\citep{blanchard2021mtl_marginal_transfer_learning}& 87.5            & 77.1            & 96.4            & 77.3            & 84.6            \\
SagNet\footnotemark[2]~\citep{SAGNET}& 87.4            & 80.7            & 97.1            & 80.0            & 86.3            \\
ARM\footnotemark[2]~\citep{ARM}& 86.8            & 76.8            & 97.4            & 79.3            & 85.1            \\
VREx\footnotemark[2]~\citep{krueger2020vrex}& 86.0            & 79.1            & 96.9            & 77.7            & 84.9            \\
RSC\footnotemark[2]~\citep{huang2020rsc}                & 85.4            & 79.7            & 97.6            & 78.2            & 85.2            \\ 
MetaReg\footnotemark[4]~\citep{MetaReg}  & -& -  & -               & -               &  83.6            \\
MLIR\footnotemark[4]~\citep{MLIR} & -               & -               & -& -  &  86.8           \\
Mixstyle\footnotemark[3]~\citep{Mixstyle}           & 86.8            & 79.0            & 96.6            & 78.5            & 85.2            \\
SWAD\footnotemark[3]~\citep{SWAD}               & 89.3            & 83.4            & 97.3            & 82.5            & 88.1            \\
PCL~\cite{PCL}& 90.2            & \textbf{83.9}            & 98.1            & {82.6}            & 88.7            \\
BoDA~\citep{Boda} & {88.2}   & {81.7}   & {97.8}   & {80.2}   & {86.9}   \\ 
SAGM~\citep{SAGM}               & 87.4            & 80.2            & 98.0            & 80.8            & 86.6            \\
iDAG~\citep{iDAG}               & \textbf{90.8}            & 83.7            & 98.0            & \textbf{82.7}            & \textbf{88.8}            \\
GMDG~\citep{GMDG}               & 84.7            & 81.7            & 97.5            & 80.5            & 85.6            \\
Arith~\citep{Arith} & 85.9 & 81.3 & 97.1 & 81.8 & 86.5 \\
\midrule
\textbf{Ours}      & 89.8\small{$\pm$0.6} & 79.9\small{$\pm$0.2} & \textbf{98.8} \small{$\pm$0.2}& 81.3\small{$\pm$1.0}& 87.5            \\
\bottomrule
\end{tabular}
\end{table}

\begin{table}[h!]
\centering
\caption{Domain generalization accuracy (\%) on the VLCS dataset. Results are sourced from their original papers unless otherwise noted. $\dagger$~\cite{DomainBed}; $\ddagger$~\cite{SWAD}; $\S$~\cite{MLIR}. The best average performance is highlighted in \textbf{bold}.}
\label{tab:vlcs_results}
\begin{tabular}{l|cccc|c}
\toprule
\textbf{Algorithm} & \textbf{C}      & \textbf{L}      & \textbf{S}      & \textbf{V}      & \textbf{Avg}    \\ 
\midrule
\midrule
ERM\footnotemark[2]~\citep{ERM}&97.7 & 64.3 & 73.4 & 74.6 & 77.5 \\
IRM\footnotemark[2]~\citep{IRM}& 98.6 & 64.9 & 73.4 & {77.3} & 78.5 \\
GroupDRO\footnotemark[2]~\citep{GroupDRO}& 97.3 & 63.4 & 69.5 & 76.7 & 76.7 \\
Mixup\footnotemark[2]~\citep{Mixup}& 98.3 & 64.8 & 72.1 & {74.3} & 77.4 \\
MLDG\footnotemark[2]~\citep{MLDG}& 97.4 & {65.2} & 71.0 & {75.3} & {77.2} \\
CORAL\footnotemark[2]~\citep{DeepCORAL}& 98.3 & 66.1 & 73.4 & 77.5 & {78.8} \\
MMD\footnotemark[2]~\citep{MMD}& 97.7 & 64.0 & 72.8 & {75.3} & 77.5 \\
DANN\footnotemark[2]~\citep{DANN}& \textbf{99.0} & 65.1 & 73.1 & 77.2 & 78.6 \\
CDANN\footnotemark[2]~\citep{CDANN}& 97.1 & 65.1 & 70.7 & 77.1 & 77.5 \\
MTL\footnotemark[2]~\citep{blanchard2021mtl_marginal_transfer_learning}& 97.8 & {64.3} & 71.5 & {75.3} & {77.2} \\
SagNet\footnotemark[2]~\citep{SAGNET}& 97.9 & 64.5 & 71.4 & 77.5 & 77.8 \\
ARM\footnotemark[2]~\citep{ARM}& 98.7 & {63.6} & 71.3 & 76.7 & 77.6 \\
VREx\footnotemark[2]~\citep{krueger2020vrex}& 98.4 & 64.4 & 74.1 & 76.2 & 78.3 \\
RSC\footnotemark[2]~\citep{huang2020rsc}& 97.9 & {62.5} & 72.3 & 75.6 & 77.1 \\
MetaReg\footnotemark[4]~\citep{MetaReg}  & -& -  & -               & -               &  76.7         \\
MLIR\footnotemark[4]~\citep{MLIR} & -               & -               & -& -  &  80.7          \\
Mixstyle\footnotemark[3]~\citep{Mixstyle}           & 98.6 & 64.5   & 72.6   &  75.7            & 77.9   \\
SWAD\footnotemark[3]~\citep{SWAD}               & 98.8      & 63.3         & 75.3           &  79.2         & 79.1        \\
PCL~\cite{PCL}& \textbf{99.0}&  63.6  & 73.8     & 75.6& 78.0  \\

BoDA~\citep{Boda}  & 98.1 & 64.5 & {74.3} & {78.0} & {78.5} \\

SAGM~\citep{SAGM} & \textbf{99.0}    &  65.2 &  75.1  & \textbf{80.7}       & 80.0          \\
iDAG~\citep{iDAG} & 98.1&  62.7 & 69.9&  77.1 & 76.9   \\
GMDG~\citep{GMDG} & 98.3 & 65.9 &73.4 &79.3 &79.2        \\
Arith~\citep{Arith} & 98.7 & 64.6& 76.3 &77.8 &79.4\\
\midrule
\textbf{Ours}      & 97.5\small{$\pm$0.4} & \textbf{68.5}\small{$\pm$0.4} & \textbf{78.1}\small{$\pm$0.8}& 79.9\small{$\pm$0.9}& \textbf{81.0}           \\
\bottomrule
\end{tabular}
\end{table}

\begin{table}[h!]
\centering
\caption{Domain generalization accuracy (\%) on the OfficeHome dataset. Results are sourced from their original papers unless otherwise noted. $\dagger$~\cite{DomainBed}; $\ddagger$~\cite{SWAD}; $\S$~\cite{MLIR}. The best average performance is highlighted in \textbf{bold}.}
\label{tab:oh_results}
\begin{tabular}{l|cccc|c}
\toprule
\textbf{Algorithm} & \textbf{A}      & \textbf{C}      & \textbf{P}      & \textbf{R}      & \textbf{Avg}    \\ 
\midrule
\midrule
ERM\footnotemark[2]~\citep{ERM} & 61.3 & 52.4 & 75.8 & 76.6 & 66.5 \\
IRM\footnotemark[2]~\citep{IRM}& 58.9 & 52.2 & 72.1 & 74.0 & 64.3 \\
GroupDRO\footnotemark[2]~\citep{GroupDRO}& 60.4 & 52.7 & 75.0 & 76.0 & 66.0 \\
Mixup\footnotemark[2]~\citep{Mixup}&  62.4 & 54.8 & 76.9 & 78.3 & 68.1 \\
MLDG\footnotemark[2]~\citep{MLDG}& 61.5 & 53.2 & 75.0 & 77.5 & 66.8 \\
CORAL\footnotemark[2]~\citep{DeepCORAL}& 65.3 & 54.4 & 76.5 & 78.4 & 68.7 \\
MMD\footnotemark[2]~\citep{MMD} & 60.4 & 53.3 & 74.3 & 77.4 & 66.3 \\
DANN\footnotemark[2]~\citep{DANN}& 59.9 & 53.0 & 73.6 & 76.9 & 65.9 \\
CDANN\footnotemark[2]~\citep{CDANN}& 61.5 & 50.4 & 74.4 & 76.6 & 65.8 \\
MTL\footnotemark[2]~\citep{blanchard2021mtl_marginal_transfer_learning}& 61.5 & 52.4 & 74.9 & 76.8 & 66.4 \\
SagNet\footnotemark[2]~\citep{SAGNET}& 63.4 & 54.8 & 75.8 & 78.3 & 68.1 \\
ARM\footnotemark[2]~\citep{ARM}&58.9 & 51.0 & 74.1 & 75.2 & 64.8 \\
VREx\footnotemark[2]~\citep{krueger2020vrex}& 60.7 & 53.0 & 75.3 & 76.6 & 66.4 \\
RSC\footnotemark[2]~\citep{huang2020rsc}& 60.7 & 51.4 & 74.8 & 75.1 & 65.5 \\
MetaReg\footnotemark[4]~\citep{MetaReg}  & -& -  & - & - &  67.6    \\
MLIR\footnotemark[4]~\citep{MLIR} & - & - & -& -  & 69.8      \\
Mixstyle\footnotemark[3]~\citep{Mixstyle} &51.1  &53.2  & 68.2    & 69.2     & 60.4 \\
SWAD\footnotemark[3]~\citep{SWAD} & 66.1 &  57.7   & 78.4     & 80.2   & 70.6     \\
PCL~\cite{PCL}& 67.3& 59.9& 78.7& 80.7& 71.6  \\

BoDA~\citep{Boda}  &{65.4} & {55.4} & {77.1} &{79.5} & {69.3} \\

SAGM~\citep{SAGM} & 65.4 &  57.0  & 78.0  & 80.0  & 70.1       \\
iDAG~\citep{iDAG} & 68.2  & \textbf{57.9} &  \textbf{79.7}  & \textbf{81.4}  & \textbf{71.8} \\
GMDG~\citep{GMDG} & - &  - &   - &  - &  70.7  \\
Arith~\citep{Arith} & 64.7 &56.3 &77.5 &79.2 &69.4 \\
\midrule
\textbf{Ours}      & \textbf{69.4}\small{$\pm$0.1} & 54.4\small{$\pm$0.4} & 79.0\small{$\pm$0.2}& 80.9\small{$\pm$0.2}& 70.9           \\
\bottomrule
\end{tabular}
\end{table}

\begin{table}[h!]
\centering
\caption{Domain generalization accuracy (\%) on the TerraIncognita dataset. Results are sourced from their original papers unless otherwise noted. $\dagger$~\cite{DomainBed}; $\ddagger$~\cite{SWAD}; $\S$~\cite{MLIR}. The best average performance is highlighted in \textbf{bold}.}
\label{tab:terra_results}
\begin{tabular}{l|cccc|c}
\toprule
\textbf{Algorithm} & \textbf{L100}      & \textbf{L38}      & \textbf{L43}      & \textbf{L46}      & \textbf{Avg}    \\ 
\midrule
\midrule
ERM\footnotemark[2]~\citep{ERM} & 49.8 & 42.1 & 56.9 & 35.7 & 46.1 \\
IRM\footnotemark[2]~\citep{IRM}& 54.6 & 39.8 & 56.2 & 39.6 & 47.6 \\
GroupDRO\footnotemark[2]~\citep{GroupDRO}& 41.2 & 38.6 & 56.7 & 36.4 & 43.2 \\
Mixup\footnotemark[2]~\citep{Mixup}&  59.6 & 42.2 & 55.9 & 33.9 & 47.9 \\
MLDG\footnotemark[2]~\citep{MLDG}& 54.2 & 44.3 & 55.6 & 36.9 & 47.7 \\
CORAL\footnotemark[2]~\citep{DeepCORAL}& 51.6 & 42.2 & 57.0 & 39.8 & 47.6 \\
MMD\footnotemark[2]~\citep{MMD} & 41.9 & 34.8 & 57.0 & 35.2 & 42.2 \\
DANN\footnotemark[2]~\citep{DANN}& 51.1 & 40.6 & 57.4 & 37.7 & 46.7 \\
CDANN\footnotemark[2]~\citep{CDANN}& 47.0 & 41.3 & 54.9 & 39.8 & 45.8 \\
MTL\footnotemark[2]~\citep{blanchard2021mtl_marginal_transfer_learning}&49.3 & 39.6 & 55.6 & 37.8 & 45.6 \\
SagNet\footnotemark[2]~\citep{SAGNET}& 53.0 & 43.0 & 57.9 & 40.4 & 48.6 \\
ARM\footnotemark[2]~\citep{ARM}& 49.3 & 38.3 & 55.8 & 38.7 & 45.5 \\
VREx\footnotemark[2]~\citep{krueger2020vrex}& 48.2 & 41.7 & 56.8 & 38.7 & 46.4 \\
RSC\footnotemark[2]~\citep{huang2020rsc}& 50.2 & 39.2 & 56.3 & 40.8 & 46.6 \\
MetaReg\footnotemark[4]~\citep{MetaReg}  & -& -  & - & - &  48.2    \\
MLIR\footnotemark[4]~\citep{MLIR} & - & - & -& -  & 51.0    \\
Mixstyle\footnotemark[3]~\citep{Mixstyle} & 54.3 & 34.1 & 55.9 & 31.7 & 44.0  \\
SWAD\footnotemark[3]~\citep{SWAD} & 55.4 & 44.9 & 59.7 & 39.9 & 50.0  \\
PCL~\cite{PCL}& 58.7 &46.3 & \textbf{60.0} & \textbf{43.6} & 52.1  \\

BoDA~\citep{Boda}  & 54.0 & 46.5 & 59.5 & 41.0 & 50.2 \\

SAGM~\citep{SAGM} & 54.8 &41.4 &57.7 & {41.3} & 48.8    \\
iDAG~\citep{iDAG} & 58.7& 35.1 &57.5& 33.0 &46.1 \\
GMDG~\citep{GMDG} & 59.8 & 45.3& 57.1 &38.2 &50.1  \\
Arith~\citep{Arith} & - &- &- &- &48.1 \\
\midrule
\textbf{Ours}      & \textbf{63.2}\small{$\pm$0.5} & \textbf{50.7}\small{$\pm$0.2} & 54.9\small{$\pm$0.3}& 41.5\small{$\pm$0.5}& \textbf{52.6}           \\
\bottomrule
\end{tabular}
\end{table}

\clearpage
\section{Detailed Results for MDLT Benchmarks}\label{app:mdlt_details}

This section provides the detailed results for MDLT benchmarks. In this section, we add experimental results from DomainNet126 \citep{DN126}.

\textbf{DomainNet126-MLT.} 
We construct DomainNet126-MLT based on the DomainNet126 dataset~\citep{DN126}, a curated subset of the large-scale DomainNet benchmark~\citep{DomainNet}, to alleviate issues arising from noisy labels in specific domains and classes.
DomainNet126-MLT contains 122,505 images from 4 domains (\textit{clipart, painting, real, sketch}) with 126 classes. 
We partitioned the dataset into training, validation, and test subsets. The validation and test sets were constructed as balanced subsets, comprising approximately 5\% (8,132 images) and 10\% (16,377 images) of the full dataset, respectively, while the remaining 97,995 images were allocated for training. The procedure for constructing the validation and test sets follows the protocol outlined in~\citep{Boda}. 

DomainNet126-MLT exhibits significant class imbalance both within and across domains. 
For example, in the \textit{real} domain, class frequencies range from 134 to 802 images per class (mean $\approx$ 553), whereas in \textit{painting} the distribution is much sparser and skewed, ranging from only 1 to 249 images per class (mean $\approx$ 81). 
This imbalance makes DomainNet126-MLT a challenging and realistic benchmark for multi-domain long-tailed recognition. It should be noted that Table~\ref{tab:dn126mlt_results} excludes SSP \citep{SSP}, as the corresponding code and model have not been fully made available.
Figure~\ref{fig:dn126_classdist_all} shows the label distributions across domains in the DomainNet126-MLT.

\begin{figure}[h]
    \centering
    \includegraphics[width=\linewidth]{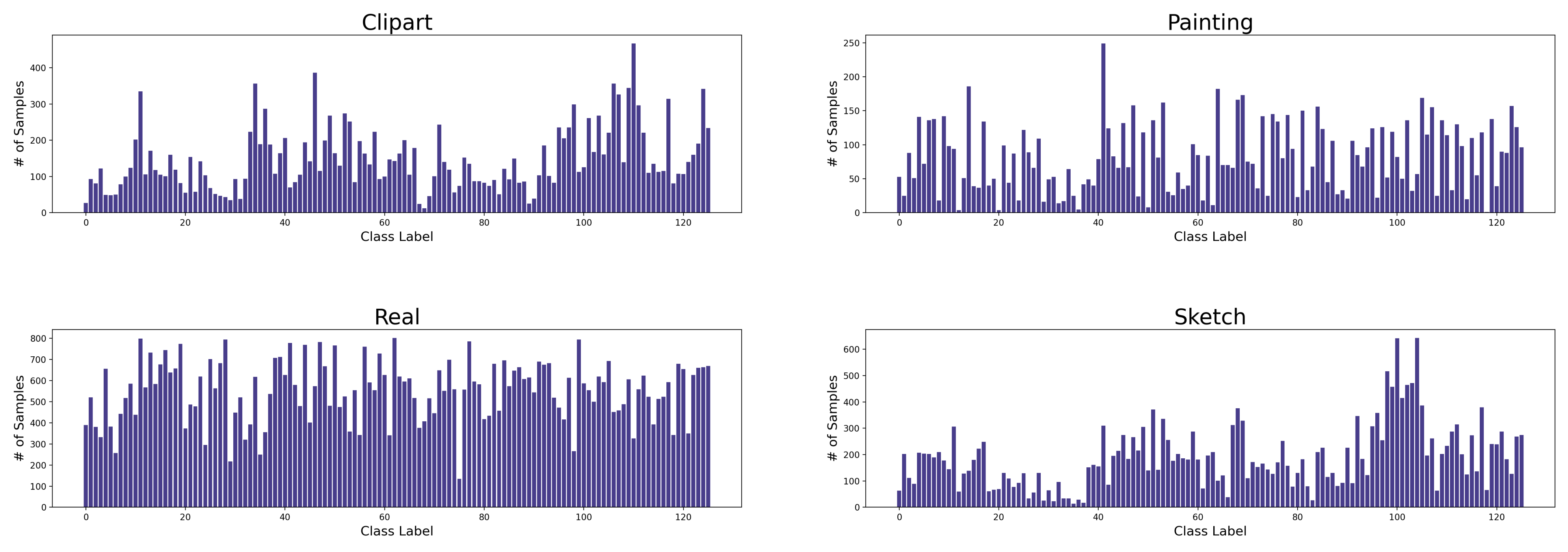}
    \caption{Class distributions across the four domains in DomainNet126-MLT. 
  The dataset exhibits a clear long-tailed distribution within and across domains.}
  \label{fig:dn126_classdist_all}
\end{figure}

\renewcommand{\thefootnote}{\fnsymbol{footnote}}
\begin{table}[h]
\centering
\setlength{\tabcolsep}{4pt}
\caption{Accuracy (\%) under the multi-domain long-tailed (MDLT) setting on \textbf{PACS-MLT}. Results are sourced from their original papers, and are reported by domain and by shot.}
\label{tab:pacsmlt_results}
\resizebox{\linewidth}{!}{%
\begin{tabular}{l|cccccc|cccc}
\toprule
\multicolumn{1}{c}{}& \multicolumn{6}{c}{\textbf{Accuracy (by domain)}} & \multicolumn{4}{c}{\textbf{Accuracy (by shot)}} \\
\cmidrule(lr){2-7}\cmidrule(lr){8-11}

\multicolumn{1}{c}{\textbf{Algorithm}} & \textbf{A} & \textbf{C} & \textbf{P} & \textbf{S} & \textbf{Average} & \multicolumn{1}{c}{\textbf{Worst}} & \textbf{Many} & \textbf{Medium} & \textbf{Few} & \textbf{Zero} \\
\midrule
\midrule
ERM~\citep{ERM} & 96.8$\pm$0.1 & 97.0$\pm$0.3 & 98.9$\pm$0.3 & 95.8$\pm$0.2 & 97.1$\pm$0.1 & 95.8$\pm$0.2 & 97.1$\pm$0.0 & 97.0$\pm$0.0 & 98.0$\pm$0.9 & - \\
IRM~\citep{IRM} & 96.8$\pm$0.1 & 96.3$\pm$0.7 & 98.7$\pm$0.2 & 95.2$\pm$0.4 & 96.7$\pm$0.2 & 95.2$\pm$0.4 & 96.8$\pm$0.2 & 96.7$\pm$0.7 & 94.7$\pm$1.4 & - \\
GroupDRO~\citep{GroupDRO} & 96.9$\pm$0.2 & 97.0$\pm$0.4 & 99.0$\pm$0.1 & 95.3$\pm$0.4 & 97.0$\pm$0.1 & 95.3$\pm$0.4 & 97.3$\pm$0.1 & 95.3$\pm$1.2 & 94.7$\pm$3.6 & - \\
Mixup~\citep{Mixup} & 96.5$\pm$0.3 & 96.9$\pm$0.7 & 98.5$\pm$0.2 & 95.1$\pm$0.2 & 96.7$\pm$0.2 & 95.1$\pm$0.2 & 97.0$\pm$0.1 & 96.7$\pm$0.3 & 91.3$\pm$2.7 & - \\
MLDG~\citep{MLDG} & 96.6$\pm$0.2 & 97.2$\pm$0.3 & 98.5$\pm$0.1 & 94.1$\pm$0.3 & 96.6$\pm$0.1 & 94.1$\pm$0.3 & 96.8$\pm$0.1 & 96.3$\pm$0.7 & 92.7$\pm$0.5 & - \\
CORAL~\citep{DeepCORAL} & 96.9$\pm$0.4 & 97.0$\pm$0.5 & 98.3$\pm$0.3 & 94.3$\pm$0.7 & 96.6$\pm$0.5 & 94.3$\pm$0.7 & 96.6$\pm$0.5 & 97.0$\pm$0.8 & 94.7$\pm$0.5 & - \\
MMD~\citep{MMD} & 96.8$\pm$0.2 & 97.1$\pm$0.4 & 97.4$\pm$0.3 & 96.3$\pm$0.3 & 96.9$\pm$0.1 & 96.2$\pm$0.2 & 96.9$\pm$0.2 & 97.0$\pm$0.0 & 96.7$\pm$0.5 & - \\
DANN~\citep{DANN} & 95.7$\pm$0.3 & 97.2$\pm$0.4 & 98.9$\pm$0.1 & 94.3$\pm$0.1 & 96.5$\pm$0.0 & 94.3$\pm$0.1 & 96.5$\pm$0.1 & 98.0$\pm$0.0 & 94.7$\pm$2.4 & - \\
CDANN~\citep{CDANN} & 95.5$\pm$0.5 & 96.7$\pm$0.2 & 97.2$\pm$0.3 & 94.9$\pm$0.5 & 96.1$\pm$0.1 & 94.5$\pm$0.2 & 96.1$\pm$0.1 & 96.3$\pm$0.5 & 94.0$\pm$0.9 & - \\
MTL~\citep{blanchard2021mtl_marginal_transfer_learning} & 96.3$\pm$0.4 & 97.9$\pm$0.3 & 98.2$\pm$0.3 & 94.6$\pm$0.7 & 96.7$\pm$0.2 & 94.5$\pm$0.6 & 96.8$\pm$0.1 & 95.3$\pm$1.7 & 97.3$\pm$1.1 & - \\
SagNet~\citep{SAGNET} & 97.0$\pm$0.2 & 97.8$\pm$0.4 & 98.9$\pm$0.1 & 95.2$\pm$0.3 & 97.2$\pm$0.1 & 95.2$\pm$0.3 & 97.4$\pm$0.1 & 96.7$\pm$0.5 & 95.3$\pm$0.5 & - \\
Fish~\citep{Fish} & 95.5$\pm$0.2 & 97.9$\pm$0.4 & 98.2$\pm$0.3 & 95.9$\pm$0.5 & 96.9$\pm$0.2 & 95.2$\pm$0.2 & 97.0$\pm$0.1 & 97.0$\pm$0.5 & 94.7$\pm$1.1 & - \\
Focal~\citep{Lin2017Focal} & 96.6$\pm$0.4 & 96.6$\pm$0.8 & 98.1$\pm$0.2 & 94.6$\pm$0.7 & 96.5$\pm$0.2 & 94.6$\pm$0.7 & 96.6$\pm$0.1 & 95.0$\pm$1.7 & 96.7$\pm$0.5 & - \\
CBLoss~\citep{Cui2019CBLoss} & 97.3$\pm$0.1 & 97.4$\pm$0.5 & 97.8$\pm$0.6 & 95.1$\pm$0.4 & 96.9$\pm$0.1 & 95.1$\pm$0.4 & 96.8$\pm$0.2 & 97.0$\pm$1.2 & \textbf{100.0}$\pm$0.0 & - \\
LDAM~\citep{Cao2019LDAM} & 96.9$\pm$0.1 & 96.6$\pm$0.6 & 97.9$\pm$0.1 & 94.7$\pm$0.2 & 96.5$\pm$0.2 & 94.7$\pm$0.2 & 96.6$\pm$0.1 & 95.7$\pm$1.4 & 96.0$\pm$0.0 & - \\
BSoftmax~\citep{Ren2020BalancedSoftmax} & 96.0$\pm$0.5 & 96.9$\pm$0.6 & 98.8$\pm$0.6 & 95.9$\pm$0.1 & 96.9$\pm$0.3 & 95.6$\pm$0.3 & 96.6$\pm$0.4 & 98.7$\pm$0.7 & 99.3$\pm$0.5 & - \\
SSP~\citep{SSP} & 96.2$\pm$0.5 & 96.8$\pm$0.2 & 98.9$\pm$0.1 & 95.7$\pm$0.3 & 96.9$\pm$0.2 & 95.4$\pm$0.4 & 96.7$\pm$0.2 & 98.3$\pm$0.5 & 98.0$\pm$0.9 & - \\
CRT~\citep{Kang2020cRT} & 95.3$\pm$0.2 & 96.7$\pm$0.1 & 98.5$\pm$0.1 & 94.9$\pm$0.1 & 96.3$\pm$0.1 & 94.9$\pm$0.1 & 96.3$\pm$0.1 & 97.3$\pm$0.3 & 94.0$\pm$0.9 & - \\
BoDAr~\citep{Boda} & 96.9$\pm$0.4 & 97.4$\pm$0.2 & 98.6$\pm$0.2 & 95.1$\pm$0.4 & 97.0$\pm$0.1 & 95.1$\pm$0.4 & 97.0$\pm$0.1 & 96.3$\pm$0.5 & 98.0$\pm$0.9 & - \\
BoDA-Mr~\citep{Boda} & 96.6$\pm$0.2 & 98.0$\pm$0.2 & 99.1$\pm$0.2 & 94.9$\pm$0.1 & 97.1$\pm$0.1 & 94.9$\pm$0.1 & 97.3$\pm$0.1 & 96.3$\pm$0.5 & 96.0$\pm$0.0 & - \\
BoDAr,c~\citep{Boda} & 96.3$\pm$0.1 & 97.4$\pm$0.5 & 99.4$\pm$0.3 & 95.7$\pm$0.3 & 97.2$\pm$0.1 & 95.7$\pm$0.3 & 97.4$\pm$0.1 & 97.0$\pm$0.0 & 94.7$\pm$1.1 & - \\
BoDA-Mr,c~\citep{Boda} & 96.3$\pm$0.4 & 97.7$\pm$0.2 & 98.1$\pm$0.4 & 96.4$\pm$0.2 & 97.1$\pm$0.2 & 96.3$\pm$0.1 & 97.1$\pm$0.0 & 97.0$\pm$0.8 & 96.0$\pm$0.0 & - \\
\midrule
\textbf{Ours} & \textbf{98.3}$\pm$0.3 & \textbf{98.2}$\pm$0.1 & \textbf{99.6}$\pm$0.3 & \textbf{97.4}$\pm$0.5 & \textbf{98.4}$\pm$0.3 & \textbf{97.4}$\pm$0.5 & \textbf{98.3}$\pm$0.3 & \textbf{99.7}$\pm$0.3 & 98.7$\pm$0.7 & - \\
\bottomrule
\end{tabular}}
\end{table}

\renewcommand{\thefootnote}{\fnsymbol{footnote}}
\begin{table}[h]
\centering
\setlength{\tabcolsep}{4pt}
\caption{Accuracy (\%) under the multi-domain long-tailed (MDLT) setting on \textbf{VLCS-MLT}. Results are sourced from their original papers, and are reported by domain and by shot.}
\label{tab:vlcsmlt_results}
\resizebox{\linewidth}{!}{%
\begin{tabular}{l|cccccc|cccc}
\toprule
\multicolumn{1}{c}{}& \multicolumn{6}{c}{\textbf{Accuracy (by domain)}} & \multicolumn{4}{c}{\textbf{Accuracy (by shot)}} \\
\cmidrule(lr){2-7}\cmidrule(lr){8-11}
\multicolumn{1}{c}{\textbf{Algorithm}} & \textbf{C} & \textbf{L} & \textbf{S} & \textbf{V} & \textbf{Average} & \multicolumn{1}{c}{\textbf{Worst}} & \textbf{Many} & \textbf{Medium} & \textbf{Few} & \textbf{Zero} \\
\midrule
\midrule
ERM~\citep{ERM} & 99.3$\pm$0.3 & 53.6$\pm$1.1 & 65.9$\pm$1.2 & 86.4$\pm$0.7 & 76.3$\pm$0.4 & 53.6$\pm$1.1 & 84.6$\pm$0.5 & 76.6$\pm$0.4 & - & 32.9$\pm$0.4 \\
IRM~\citep{IRM} & 99.1$\pm$0.4 & 52.3$\pm$0.7 & 68.8$\pm$1.4 & 86.0$\pm$0.3 & 76.5$\pm$0.2 & 52.3$\pm$0.7 & 85.3$\pm$0.6 & 75.5$\pm$1.0 & - & 33.5$\pm$1.0 \\
GroupDRO~\citep{GroupDRO} & 98.7$\pm$0.3 & 54.1$\pm$1.3 & 67.5$\pm$1.5 & 86.7$\pm$0.3 & 76.7$\pm$0.4 & 54.1$\pm$1.3 & 85.3$\pm$0.9 & 76.2$\pm$1.0 & - & 34.5$\pm$2.0 \\
Mixup~\citep{Mixup} & 99.3$\pm$0.3 & 52.7$\pm$1.3 & 66.1$\pm$0.0 & 85.3$\pm$1.1 & 75.9$\pm$0.1 & 52.7$\pm$1.3 & 84.4$\pm$0.2 & 77.1$\pm$0.6 & - & 29.2$\pm$1.4 \\
MLDG~\citep{MLDG} & 99.3$\pm$0.3 & 53.6$\pm$0.5 & 68.3$\pm$0.4 & 86.4$\pm$0.5 & 76.9$\pm$0.2 & 53.6$\pm$0.5 & 84.9$\pm$0.3 & 77.5$\pm$1.0 & - & 34.4$\pm$0.9 \\
CORAL~\citep{DeepCORAL} & 99.3$\pm$0.3 & 51.6$\pm$0.7 & 67.5$\pm$1.8 & 85.3$\pm$0.9 & 75.9$\pm$0.5 & 51.6$\pm$0.7 & 84.3$\pm$0.6 & 75.5$\pm$0.5 & - & 34.5$\pm$0.8 \\
MMD~\citep{MMD} & 99.6$\pm$0.2 & 53.4$\pm$0.3 & 65.6$\pm$0.8 & 86.7$\pm$1.1 & 76.3$\pm$0.6 & 53.4$\pm$0.3 & 84.5$\pm$0.8 & 77.1$\pm$0.5 & - & 32.7$\pm$0.3 \\
DANN~\citep{DANN} & 99.6$\pm$0.2 & 54.1$\pm$0.3 & 69.9$\pm$0.2 & 86.7$\pm$0.0 & 77.5$\pm$0.1 & 54.1$\pm$0.3 & 85.9$\pm$0.5 & 76.0$\pm$0.4 & - & 38.0$\pm$2.3 \\
CDANN~\citep{CDANN} & 99.6$\pm$0.4 & 53.6$\pm$0.4 & 67.5$\pm$0.6 & 85.8$\pm$0.8 & 76.6$\pm$0.4 & 53.6$\pm$0.4 & 84.4$\pm$0.7 & 77.3$\pm$0.8 & - & 35.0$\pm$0.8 \\
MTL~\citep{blanchard2021mtl_marginal_transfer_learning} & 99.1$\pm$0.2 & 52.9$\pm$0.5 & 66.7$\pm$0.4 & 86.7$\pm$0.6 & 76.3$\pm$0.3 & 52.9$\pm$0.5 & 84.8$\pm$0.9 & 76.2$\pm$0.6 & - & 33.3$\pm$1.4 \\
SagNet~\citep{SAGNET} & 99.6$\pm$0.4 & 52.3$\pm$0.2 & 67.2$\pm$0.2 & 86.2$\pm$1.0 & 76.3$\pm$0.2 & 52.3$\pm$0.2 & 85.3$\pm$0.3 & 75.1$\pm$0.2 & - & 32.9$\pm$0.3 \\
Fish~\citep{Fish} & 98.7$\pm$0.3 & 54.3$\pm$0.4 & 69.4$\pm$0.8 & 87.6$\pm$0.4 & 77.5$\pm$0.3 & 54.3$\pm$0.4 & 86.2$\pm$0.5 & 76.0$\pm$0.4 & - & 35.6$\pm$2.2 \\
Focal~\citep{Lin2017Focal} & 99.1$\pm$0.4 & 52.3$\pm$0.2 & 66.1$\pm$0.8 & 84.9$\pm$0.2 & 75.6$\pm$0.4 & 52.3$\pm$0.2 & 84.0$\pm$0.2 & 75.5$\pm$0.6 & - & 32.7$\pm$0.9 \\
CBLoss~\citep{Cui2019CBLoss} & 99.1$\pm$0.2 & 52.5$\pm$0.5 & 68.5$\pm$1.0 & 87.1$\pm$1.0 & 76.8$\pm$0.3 & 52.5$\pm$0.5 & 84.8$\pm$0.7 & 77.5$\pm$1.4 & - & 33.2$\pm$1.6 \\
LDAM~\citep{Cao2019LDAM} & 98.9$\pm$0.2 & 52.9$\pm$0.2 & 69.4$\pm$1.4 & 88.0$\pm$1.3 & 77.5$\pm$0.1 & 52.9$\pm$0.2 & 86.5$\pm$0.4 & 75.5$\pm$0.5 & - & 35.2$\pm$0.6 \\
BSoftmax~\citep{Ren2020BalancedSoftmax} & 99.3$\pm$0.3 & 52.9$\pm$0.9 & 68.0$\pm$0.2 & 86.7$\pm$0.8 & 76.7$\pm$0.5 & 52.9$\pm$0.9 & 84.4$\pm$0.9 & 78.2$\pm$0.6 & - & 34.3$\pm$0.9 \\
SSP~\citep{SSP} & 99.1$\pm$0.2 & 52.3$\pm$1.0 & 68.0$\pm$0.2 & 85.1$\pm$0.4 & 76.1$\pm$0.3 & 52.3$\pm$1.0 & 83.8$\pm$0.3 & 76.0$\pm$1.2 & - & 37.1$\pm$0.7 \\
CRT~\citep{Kang2020cRT} & 99.6$\pm$0.3 & 51.4$\pm$0.3 & 66.9$\pm$0.8 & 86.9$\pm$0.4 & 76.3$\pm$0.2 & 51.4$\pm$0.3 & 84.5$\pm$0.1 & 77.3$\pm$0.0 & - & 31.7$\pm$1.0 \\
BoDAr~\citep{Boda} & 99.3$\pm$0.3 & 51.4$\pm$0.3 & 70.2$\pm$0.4 & 86.7$\pm$0.3 & 76.9$\pm$0.5 & 51.4$\pm$0.3 & 85.3$\pm$0.3 & 77.3$\pm$0.2 & - & 33.3$\pm$0.5 \\
BoDA-Mr~\citep{Boda} & \textbf{100.0}$\pm$0.0 & 53.4$\pm$0.3 & 68.5$\pm$0.4 & 88.0$\pm$0.8 & 77.5$\pm$0.3 & 53.4$\pm$0.3 & 85.8$\pm$0.2 & 77.3$\pm$0.2 & - & 35.7$\pm$0.7 \\
BoDAr,c~\citep{Boda} & 99.3$\pm$0.3 & 53.4$\pm$0.3 & 68.5$\pm$0.4 & 88.0$\pm$0.4 & 77.3$\pm$0.2 & 53.4$\pm$0.3 & 85.3$\pm$0.3 & 78.0$\pm$0.2 & - & 38.6$\pm$0.7 \\
BoDA-Mr,c~\citep{Boda} & \textbf{100.0}$\pm$0.0 & 55.4$\pm$0.5 & \textbf{72.6}$\pm$0.3 & 84.7$\pm$0.5 & 78.2$\pm$0.4 & 55.4$\pm$0.5 & 85.3$\pm$0.3 & 79.3$\pm$0.6 & - & \textbf{43.3}$\pm$1.1 \\
\midrule
\textbf{Ours} & 99.8$\pm$0.2 & \textbf{57.4}$\pm$0.8 & 70.4$\pm$1.0 & \textbf{88.7}$\pm$0.8 & \textbf{79.1}$\pm$0.2 & \textbf{57.4}$\pm$0.8 & \textbf{86.9}$\pm$0.9 & \textbf{79.5}$\pm$1.2 & - & 39.4$\pm$1.6 \\
\bottomrule
\end{tabular}}
\end{table}

\renewcommand{\thefootnote}{\fnsymbol{footnote}}
\begin{table}[h]
\centering
\setlength{\tabcolsep}{4pt}
\caption{Accuracy (\%) under the multi-domain long-tailed (MDLT) setting on \textbf{OfficeHome}. Results are sourced from their original papers, and are reported by domain and by shot.}
\label{tab:ohmlt_results}
\resizebox{\linewidth}{!}{%
\begin{tabular}{l|cccccc|cccc}
\toprule
\multicolumn{1}{c}{}& \multicolumn{6}{c}{\textbf{Accuracy (by domain)}} & \multicolumn{4}{c}{\textbf{Accuracy (by shot)}} \\
\cmidrule(lr){2-7}\cmidrule(lr){8-11}
\multicolumn{1}{c}{\textbf{Algorithm}} & \textbf{A} & \textbf{C} & \textbf{P} & \textbf{R} & \textbf{Average} & \multicolumn{1}{c}{\textbf{Worst}} & \textbf{Many} & \textbf{Medium} & \textbf{Few} & \textbf{Zero} \\
\midrule
\midrule
ERM~\citep{ERM} & 71.3$\pm$0.1 & 78.4$\pm$0.2 & 89.6$\pm$0.3 & 83.3$\pm$0.2 & 80.7$\pm$0.0 & 71.3$\pm$0.1 & 87.8$\pm$0.2 & 81.0$\pm$0.2 & 63.1$\pm$0.1 & 63.3$\pm$7.2 \\
IRM~\citep{IRM} & 70.7$\pm$0.2 & 78.5$\pm$0.8 & 89.4$\pm$0.5 & 83.8$\pm$0.6 & 80.6$\pm$0.4 & 70.7$\pm$0.2 & 87.6$\pm$0.4 & 81.5$\pm$0.4 & 61.1$\pm$0.9 & 56.7$\pm$1.4 \\
GroupDRO~\citep{GroupDRO} & 68.7$\pm$0.9 & 79.0$\pm$0.2 & 89.4$\pm$0.4 & 83.3$\pm$0.5 & 80.1$\pm$0.3 & 68.7$\pm$0.9 & 88.1$\pm$0.2 & 80.8$\pm$0.4 & 59.8$\pm$1.2 & 51.7$\pm$3.6 \\
Mixup~\citep{Mixup} & 72.3$\pm$0.6 & 79.1$\pm$0.4 & 89.7$\pm$0.1 & 83.9$\pm$0.2 & 81.2$\pm$0.2 & 72.3$\pm$0.6 & 87.9$\pm$0.4 & 81.8$\pm$0.1 & 64.1$\pm$0.4 & 60.0$\pm$4.1 \\
MLDG~\citep{MLDG} & 70.2$\pm$0.6 & 78.2$\pm$0.5 & 89.4$\pm$0.4 & 83.7$\pm$0.3 & 80.4$\pm$0.2 & 70.2$\pm$0.6 & 87.1$\pm$0.1 & 81.3$\pm$0.3 & 61.3$\pm$1.0 & 61.7$\pm$1.4 \\
CORAL~\citep{DeepCORAL} & 72.7$\pm$0.6 & 80.9$\pm$0.3 & 89.9$\pm$0.2 & 84.2$\pm$0.4 & 81.9$\pm$0.1 & 72.7$\pm$0.6 & 87.9$\pm$0.1 & 83.0$\pm$0.1 & 63.5$\pm$0.7 & 65.0$\pm$2.4 \\
MMD~\citep{MMD} & 67.7$\pm$0.8 & 77.8$\pm$0.2 & 87.4$\pm$0.5 & 80.6$\pm$0.4 & 78.4$\pm$0.4 & 67.7$\pm$0.8 & 85.2$\pm$0.2 & 79.4$\pm$0.7 & 58.8$\pm$0.4 & 56.7$\pm$3.6 \\
DANN~\citep{DANN} & 70.2$\pm$0.9 & 77.3$\pm$0.3 & 87.3$\pm$0.5 & 82.1$\pm$0.4 & 79.2$\pm$0.2 & 70.2$\pm$0.9 & 86.2$\pm$0.1 & 80.0$\pm$0.1 & 60.3$\pm$1.1 & 61.7$\pm$5.9 \\
CDANN~\citep{CDANN} & 69.4$\pm$0.3 & 77.2$\pm$0.3 & 87.7$\pm$0.2 & 81.5$\pm$0.3 & 79.0$\pm$0.2 & 69.4$\pm$0.3 & 86.4$\pm$0.6 & 79.8$\pm$0.1 & 58.9$\pm$0.8 & 50.0$\pm$4.7 \\
MTL~\citep{blanchard2021mtl_marginal_transfer_learning} & 69.8$\pm$0.6 & 77.6$\pm$0.3 & 87.9$\pm$0.1 & 82.4$\pm$0.3 & 79.5$\pm$0.2 & 69.8$\pm$0.6 & 87.3$\pm$0.3 & 79.8$\pm$0.2 & 61.1$\pm$0.2 & 51.7$\pm$2.7 \\
SagNet~\citep{SAGNET} & 70.5$\pm$0.5 & 79.6$\pm$0.5 & 89.3$\pm$0.4 & 83.9$\pm$0.1 & 80.9$\pm$0.1 & 70.5$\pm$0.5 & 87.8$\pm$0.4 & 81.9$\pm$0.1 & 61.2$\pm$0.9 & 56.7$\pm$3.6 \\
Fish~\citep{Fish} & 71.3$\pm$0.7 & 79.1$\pm$0.1 & 90.2$\pm$0.6 & 84.7$\pm$0.4 & 81.3$\pm$0.3 & 71.3$\pm$0.7 & 88.2$\pm$0.2 & 81.9$\pm$0.3 & 63.2$\pm$0.8 & 61.7$\pm$1.4 \\
Focal~\citep{Lin2017Focal} & 67.6$\pm$0.4 & 76.6$\pm$0.8 & 87.1$\pm$0.5 & 80.2$\pm$0.3 & 77.9$\pm$0.0 & 67.6$\pm$0.4 & 86.5$\pm$0.3 & 78.3$\pm$0.1 & 57.4$\pm$0.3 & 46.7$\pm$3.6 \\
CBLoss~\citep{Cui2019CBLoss} & 69.5$\pm$0.7 & 78.7$\pm$0.3 & 88.9$\pm$0.4 & 82.2$\pm$0.1 & 79.8$\pm$0.2 & 69.5$\pm$0.7 & 86.6$\pm$0.4 & 80.6$\pm$0.2 & 61.1$\pm$1.4 & 65.0$\pm$2.4 \\
LDAM~\citep{Cao2019LDAM} & 69.9$\pm$0.5 & 78.9$\pm$0.4 & 89.4$\pm$0.3 & 83.0$\pm$0.4 & 80.3$\pm$0.2 & 69.9$\pm$0.5 & 87.1$\pm$0.2 & 81.3$\pm$0.3 & 61.1$\pm$0.2 & 51.7$\pm$2.7 \\
BSoftmax~\citep{Ren2020BalancedSoftmax} & 70.9$\pm$0.5 & 78.7$\pm$0.2 & 89.0$\pm$0.8 & 83.0$\pm$0.3 & 80.4$\pm$0.2 & 70.9$\pm$0.5 & 86.7$\pm$0.5 & 81.3$\pm$0.3 & 62.4$\pm$1.0 & 60.0$\pm$4.1 \\
SSP~\citep{SSP} & 71.1$\pm$0.3 & 79.6$\pm$0.8 & 89.4$\pm$0.3 & 84.2$\pm$0.2 & 81.1$\pm$0.3 & 71.1$\pm$0.3 & 87.3$\pm$0.6 & 82.3$\pm$0.3 & 61.6$\pm$0.7 & 63.3$\pm$1.4 \\
CRT~\citep{Kang2020cRT} & 72.5$\pm$0.2 & 79.6$\pm$0.2 & 88.9$\pm$0.1 & 83.6$\pm$0.2 & 81.2$\pm$0.0 & 72.5$\pm$0.2 & 87.7$\pm$0.1 & 81.8$\pm$0.1 & 64.0$\pm$0.1 & 65.0$\pm$2.4 \\
BoDAr~\citep{Boda} & 71.8$\pm$0.1 & 80.3$\pm$0.3 & 89.1$\pm$0.4 & 84.6$\pm$0.2 & 81.5$\pm$0.1 & 71.8$\pm$0.1 & 87.7$\pm$0.2 & 82.3$\pm$0.1 & 64.2$\pm$0.3 & 63.3$\pm$1.4 \\
BoDA-Mr~\citep{Boda} & 71.6$\pm$0.2 & 80.5$\pm$0.3 & 89.2$\pm$0.2 & 85.7$\pm$0.4 & 81.9$\pm$0.2 & 71.6$\pm$0.2 & 87.3$\pm$0.3 & 83.4$\pm$0.2 & 62.3$\pm$0.3 & 65.0$\pm$2.4 \\
BoDAr,c~\citep{Boda} & 72.3$\pm$0.3 & 80.8$\pm$0.2 & 89.4$\pm$0.4 & 86.3$\pm$0.3 & 82.3$\pm$0.1 & 72.3$\pm$0.3 & 87.1$\pm$0.2 & 83.9$\pm$0.3 & 63.2$\pm$0.2 & 65.0$\pm$2.4 \\
BoDA-Mr,c~\citep{Boda} & 72.3$\pm$0.3 & \textbf{81.5}$\pm$0.4 & 89.5$\pm$0.3 & 85.8$\pm$0.2 & 82.4$\pm$0.2 & 72.3$\pm$0.3 & 87.7$\pm$0.1 & 83.9$\pm$0.6 & 64.2$\pm$0.3 & \textbf{66.7}$\pm$2.7 \\
\midrule
\textbf{Ours} & \textbf{75.4}$\pm$0.1 & 79.2$\pm$0.2 & \textbf{92.1}$\pm$0.1 & \textbf{86.7}$\pm$0.3 & \textbf{83.3}$\pm$0.0 & \textbf{75.4}$\pm$0.1 & \textbf{89.4}$\pm$0.2 & \textbf{84.1}$\pm$0.1 & \textbf{67.1}$\pm$0.2 & 56.7$\pm$3.3 \\
\bottomrule
\end{tabular}}
\end{table}

\renewcommand{\thefootnote}{\fnsymbol{footnote}}
\begin{table}[h]
\centering
\setlength{\tabcolsep}{4pt}
\caption{Accuracy (\%) under the multi-domain long-tailed (MDLT) setting on \textbf{TerraInc-MLT}. Results are sourced from their original papers, and are reported by domain and by shot.}
\label{tab:terramlt_results}
\resizebox{\linewidth}{!}{%
\begin{tabular}{l|cccccc|cccc}
\toprule
\multicolumn{1}{c}{}& \multicolumn{6}{c}{\textbf{Accuracy (by domain)}} & \multicolumn{4}{c}{\textbf{Accuracy (by shot)}} \\
\cmidrule(lr){2-7}\cmidrule(lr){8-11}
\multicolumn{1}{c}{\textbf{Algorithm}} & \textbf{L100} & \textbf{L38} & \textbf{L43} & \textbf{L46} & \textbf{Average} & \multicolumn{1}{c}{\textbf{Worst}} & \textbf{Many} & \textbf{Medium} & \textbf{Few} & \textbf{Zero} \\
\midrule\midrule
ERM~\citep{ERM} & 80.3$\pm$1.3 & 71.2$\pm$0.7 & 82.2$\pm$0.3 & 67.4$\pm$0.3 & 75.3$\pm$0.3 & 67.4$\pm$0.3 & 85.6$\pm$0.8 & 69.6$\pm$3.2 & 66.1$\pm$2.4 & 14.4$\pm$2.8 \\
IRM~\citep{IRM} & 78.2$\pm$0.9 & 69.6$\pm$2.0 & 81.1$\pm$0.7 & 64.3$\pm$1.3 & 73.3$\pm$0.7 & 64.3$\pm$1.3 & 83.5$\pm$0.6 & 70.0$\pm$1.8 & 58.3$\pm$3.4 & 20.1$\pm$1.4 \\
GroupDRO~\citep{GroupDRO} & 68.3$\pm$1.0 & 68.8$\pm$1.3 & 82.6$\pm$0.2 & 68.1$\pm$0.8 & 72.0$\pm$0.4 & 66.6$\pm$0.2 & 84.7$\pm$1.1 & 64.6$\pm$4.7 & 38.9$\pm$1.2 & 13.5$\pm$1.1 \\
Mixup~\citep{Mixup} & 75.4$\pm$1.4 & 70.2$\pm$1.3 & 78.3$\pm$0.6 & 60.4$\pm$1.1 & 71.1$\pm$0.7 & 60.4$\pm$1.1 & 83.2$\pm$0.7 & 60.0$\pm$0.6 & 56.1$\pm$3.0 & 12.2$\pm$2.1 \\
MLDG~\citep{MLDG} & 82.3$\pm$0.9 & 73.5$\pm$2.0 & 83.8$\pm$1.4 & 66.9$\pm$0.5 & 76.6$\pm$0.2 & 66.9$\pm$0.5 & 86.1$\pm$0.6 & 73.8$\pm$3.9 & 70.6$\pm$3.7 & 18.8$\pm$2.4 \\
CORAL~\citep{DeepCORAL} & 81.6$\pm$1.0 & 72.0$\pm$0.6 & 84.2$\pm$0.2 & 67.8$\pm$0.9 & 76.4$\pm$0.5 & 67.8$\pm$0.9 & 86.3$\pm$0.3 & 77.5$\pm$3.1 & 66.1$\pm$2.0 & 11.0$\pm$1.4 \\
MMD~\citep{MMD} & 78.9$\pm$0.6 & 68.8$\pm$1.0 & 81.9$\pm$0.9 & 63.7$\pm$1.1 & 73.3$\pm$0.4 & 63.7$\pm$1.1 & 84.0$\pm$0.4 & 67.9$\pm$2.7 & 60.6$\pm$1.6 & 13.6$\pm$2.6 \\
DANN~\citep{DANN} & 74.1$\pm$0.8 & 63.1$\pm$1.9 & 75.9$\pm$0.2 & 61.5$\pm$0.9 & 68.7$\pm$0.9 & 61.1$\pm$1.0 & 79.6$\pm$1.2 & 62.5$\pm$8.1 & 48.9$\pm$2.8 & 13.3$\pm$1.1 \\
CDANN~\citep{CDANN} & 73.0$\pm$1.3 & 67.8$\pm$2.0 & 75.0$\pm$0.6 & 65.2$\pm$1.1 & 70.3$\pm$0.5 & 63.9$\pm$1.0 & 83.5$\pm$0.8 & 50.0$\pm$4.2 & 43.9$\pm$4.7 & 20.4$\pm$3.1 \\
MTL~\citep{blanchard2021mtl_marginal_transfer_learning} & 79.4$\pm$0.8 & 70.8$\pm$0.6 & 81.9$\pm$0.8 & 67.8$\pm$1.4 & 75.0$\pm$0.7 & 67.7$\pm$1.4 & 85.2$\pm$0.7 & 73.8$\pm$1.6 & 61.1$\pm$2.8 & 12.4$\pm$4.0 \\
SagNet~\citep{SAGNET} & 79.4$\pm$1.8 & 71.2$\pm$0.7 & 83.4$\pm$2.4 & 66.5$\pm$2.1 & 75.1$\pm$1.6 & 66.5$\pm$2.1 & 85.5$\pm$0.9 & 77.1$\pm$5.0 & 57.8$\pm$4.3 & 13.0$\pm$3.4 \\
Fish~\citep{Fish} & 80.1$\pm$1.9 & 70.2$\pm$0.2 & 84.4$\pm$0.9 & 66.3$\pm$0.5 & 75.3$\pm$0.5 & 66.3$\pm$0.5 & 85.8$\pm$0.2 & 73.3$\pm$3.9 & 61.1$\pm$3.0 & 13.7$\pm$3.3 \\
Focal~\citep{Lin2017Focal} & 80.9$\pm$0.7 & 71.6$\pm$1.6 & 84.4$\pm$1.3 & 66.1$\pm$1.7 & 75.7$\pm$0.4 & 65.3$\pm$1.1 & 85.7$\pm$0.3 & 76.2$\pm$3.9 & 68.9$\pm$3.2 & 12.6$\pm$1.9 \\
CBLoss~\citep{Cui2019CBLoss} & 84.9$\pm$0.6 & 78.0$\pm$1.2 & 80.7$\pm$0.3 & 68.3$\pm$2.0 & 78.0$\pm$0.4 & 68.3$\pm$2.0 & 85.0$\pm$0.1 & 89.2$\pm$1.2 & 83.9$\pm$2.5 & 9.3$\pm$3.9 \\
LDAM~\citep{Cao2019LDAM} & 83.0$\pm$0.9 & 70.6$\pm$0.6 & 81.3$\pm$1.1 & 64.1$\pm$1.4 & 74.7$\pm$0.9 & 64.1$\pm$1.4 & 85.1$\pm$0.6 & 70.8$\pm$3.5 & 67.8$\pm$1.2 & 11.1$\pm$2.4 \\
BSoftmax~\citep{Ren2020BalancedSoftmax} & 83.5$\pm$2.1 & 75.5$\pm$0.4 & 82.1$\pm$0.7 & 65.6$\pm$1.3 & 76.7$\pm$1.0 & 65.6$\pm$1.3 & 83.4$\pm$0.8 & 90.8$\pm$0.9 & 78.3$\pm$3.9 & 12.6$\pm$2.4 \\
SSP~\citep{SSP} & 82.6$\pm$1.3 & 80.7$\pm$1.8 & 83.2$\pm$0.6 & 67.3$\pm$0.4 & 78.5$\pm$0.7 & 67.3$\pm$0.4 & 85.5$\pm$1.0 & 87.8$\pm$0.9 & 82.6$\pm$1.2 & 13.2$\pm$2.8 \\
CRT~\citep{Kang2020cRT} & 89.0$\pm$0.1 & 81.8$\pm$0.3 & 85.8$\pm$0.3 & 70.0$\pm$0.4 & 81.6$\pm$0.1 & 70.0$\pm$0.4 & 89.7$\pm$0.2 & 90.4$\pm$0.3 & 83.9$\pm$0.5 & 12.9$\pm$0.0 \\
BoDAr~\citep{Boda} & 86.7$\pm$0.7 & 74.1$\pm$1.1 & 85.2$\pm$0.7 & 68.5$\pm$0.3 & 78.6$\pm$0.4 & 68.5$\pm$0.3 & 86.4$\pm$0.1 & 85.0$\pm$1.0 & 80.0$\pm$0.9 & 13.7$\pm$2.1 \\
BoDA-Mr~\citep{Boda} & 87.8$\pm$0.9 & 76.5$\pm$0.9 & 82.2$\pm$0.3 & 71.3$\pm$0.4 & 79.4$\pm$0.6 & 71.3$\pm$0.4 & 88.4$\pm$0.3 & 76.2$\pm$2.7 & 88.3$\pm$1.6 & 14.4$\pm$1.4 \\
BoDAr,c~\citep{Boda} & 88.3$\pm$0.6 & 82.9$\pm$0.5 & \textbf{89.3}$\pm$0.9 & 68.5$\pm$0.6 & 82.3$\pm$0.3 & 68.5$\pm$0.6 & 89.2$\pm$0.2 & \textbf{92.5}$\pm$0.9 & 88.3$\pm$1.2 & 21.3$\pm$0.7 \\
BoDA-Mr,c~\citep{Boda} & \textbf{90.4}$\pm$0.3 & 81.2$\pm$0.7 & 85.8$\pm$0.4 & \textbf{74.6}$\pm$0.7 & 83.0$\pm$0.4 & \textbf{74.6}$\pm$0.7 & 89.2$\pm$0.2 & 91.2$\pm$0.6 & \textbf{91.7}$\pm$2.0 & \textbf{21.7}$\pm$1.4 \\
\midrule
\textbf{Ours} & 89.9$\pm$0.5 & \textbf{84.7}$\pm$0.6 & 87.6$\pm$0.8 & 73.5$\pm$0.6 & \textbf{83.9}$\pm$0.2 & 73.5$\pm$0.6 & \textbf{91.4}$\pm$0.3 & \textbf{92.5}$\pm$0.6 & 88.3$\pm$3.4 & 21.4$\pm$1.2 \\
\bottomrule
\end{tabular}}
\end{table}

\renewcommand{\thefootnote}{\fnsymbol{footnote}}
\begin{table}[h]
\centering
\setlength{\tabcolsep}{4pt}
\caption{Accuracy (\%) under the multi-domain long-tailed (MDLT) setting on \textbf{DomainNet-MLT}. Results are sourced from their original papers, and are reported by domain and by shot.}
\label{tab:dnmlt_results}
\resizebox{\linewidth}{!}{%
\begin{tabular}{l|cccccccc|cccc}
\toprule
\multicolumn{1}{c}{}& \multicolumn{8}{c}{\textbf{Accuracy (by domain)}} & \multicolumn{4}{c}{\textbf{Accuracy (by shot)}} \\
\cmidrule(lr){2-9}\cmidrule(lr){10-13}
\multicolumn{1}{c}{\textbf{Algorithm}} & \textbf{Clip} & \textbf{Info} & \textbf{Paint} & \textbf{Quick} & \textbf{Real} & \textbf{Sketch} & \textbf{Average} & \multicolumn{1}{c}{\textbf{Worst}} & \textbf{Many} & \textbf{Medium} & \textbf{Few} & \textbf{Zero} \\
\midrule\midrule
ERM~\citep{ERM} & 68.6$\pm$0.1 & 29.4$\pm$0.3 & 57.1$\pm$0.2 & 62.8$\pm$0.3 & 72.1$\pm$0.2 & 61.7$\pm$0.2 & 58.6$\pm$0.2 & 29.4$\pm$0.3 & 66.0$\pm$0.1 & 56.1$\pm$0.1 & 35.9$\pm$0.5 & 27.6$\pm$0.3 \\
IRM~\citep{IRM} & 66.7$\pm$0.2 & 27.6$\pm$0.1 & 56.0$\pm$0.2 & 60.1$\pm$0.1 & 72.0$\pm$0.0 & 60.2$\pm$0.2 & 57.1$\pm$0.1 & 27.6$\pm$0.1 & 64.7$\pm$0.1 & 54.3$\pm$0.3 & 33.5$\pm$0.3 & 25.8$\pm$0.3 \\
GroupDRO~\citep{GroupDRO} & 60.1$\pm$0.2 & 25.9$\pm$0.2 & 50.3$\pm$0.1 & 63.9$\pm$0.2 & 64.9$\pm$0.2 & 56.7$\pm$0.3 & 53.6$\pm$0.1 & 25.9$\pm$0.2 & 61.8$\pm$0.1 & 49.1$\pm$0.3 & 30.7$\pm$0.7 & 22.0$\pm$0.1 \\
Mixup~\citep{Mixup} & 67.6$\pm$0.2 & 28.7$\pm$0.0 & 56.4$\pm$0.2 & 60.0$\pm$0.4 & 72.1$\pm$0.1 & 60.9$\pm$0.1 & 57.6$\pm$0.1 & 28.7$\pm$0.0 & 64.9$\pm$0.2 & 54.5$\pm$0.1 & 35.6$\pm$0.2 & 27.3$\pm$0.3 \\
MLDG~\citep{MLDG} & 68.0$\pm$0.2 & 28.7$\pm$0.1 & 57.2$\pm$0.1 & 61.6$\pm$0.2 & 73.3$\pm$0.1 & 61.9$\pm$0.2 & 58.5$\pm$0.0 & 28.7$\pm$0.1 & 66.0$\pm$0.1 & 55.7$\pm$0.1 & 35.3$\pm$0.2 & 26.9$\pm$0.3 \\
CORAL~\citep{DeepCORAL} & 69.1$\pm$0.3 & 30.1$\pm$0.4 & 57.8$\pm$0.2 & 63.4$\pm$0.2 & 72.8$\pm$0.2 & 63.3$\pm$0.3 & 59.4$\pm$0.1 & 30.1$\pm$0.4 & 66.4$\pm$0.1 & 57.1$\pm$0.0 & 37.7$\pm$0.6 & 29.9$\pm$0.2 \\
MMD~\citep{MMD} & 66.1$\pm$0.1 & 27.2$\pm$0.2 & 55.9$\pm$0.1 & 59.3$\pm$0.2 & 71.9$\pm$0.1 & 60.0$\pm$0.2 & 56.7$\pm$0.0 & 27.2$\pm$0.2 & 64.2$\pm$0.1 & 54.0$\pm$0.0 & 33.9$\pm$0.2 & 25.4$\pm$0.2 \\
DANN~\citep{DANN} & 65.5$\pm$0.3 & 26.9$\pm$0.4 & 55.2$\pm$0.1 & 57.4$\pm$0.2 & 70.6$\pm$0.1 & 59.0$\pm$0.2 & 55.8$\pm$0.1 & 26.9$\pm$0.4 & 63.0$\pm$0.1 & 52.7$\pm$0.1 & 34.2$\pm$0.4 & 26.8$\pm$0.4 \\
CDANN~\citep{CDANN} & 65.9$\pm$0.1 & 27.7$\pm$0.1 & 55.3$\pm$0.1 & 57.6$\pm$0.2 & 70.9$\pm$0.2 & 58.7$\pm$0.1 & 56.0$\pm$0.1 & 27.7$\pm$0.1 & 63.2$\pm$0.0 & 52.7$\pm$0.2 & 34.3$\pm$0.5 & 27.6$\pm$0.1 \\
MTL~\citep{blanchard2021mtl_marginal_transfer_learning} & 68.2$\pm$0.2 & 29.3$\pm$0.2 & 57.3$\pm$0.1 & 62.1$\pm$0.1 & 72.9$\pm$0.1 & 61.8$\pm$0.2 & 58.6$\pm$0.1 & 29.3$\pm$0.2 & 65.9$\pm$0.1 & 56.0$\pm$0.4 & 35.4$\pm$0.1 & 28.2$\pm$0.3 \\
SagNet~\citep{SAGNET} & 68.5$\pm$0.1 & 29.4$\pm$0.2 & 57.8$\pm$0.2 & 62.1$\pm$0.2 & 73.3$\pm$0.1 & 62.4$\pm$0.1 & 58.9$\pm$0.0 & 29.4$\pm$0.2 & 66.3$\pm$0.1 & 56.4$\pm$0.0 & 36.2$\pm$0.3 & 27.2$\pm$0.4 \\
Fish~\citep{Fish} & 68.7$\pm$0.1 & 29.1$\pm$0.1 & 58.4$\pm$0.1 & 64.1$\pm$0.1 & 73.9$\pm$0.1 & 63.7$\pm$0.1 & 59.6$\pm$0.1 & 29.1$\pm$0.1 & 67.1$\pm$0.1 & 57.2$\pm$0.1 & 36.8$\pm$0.4 & 27.8$\pm$0.3 \\
Focal~\citep{Lin2017Focal} & 67.6$\pm$0.1 & 27.5$\pm$0.1 & 56.5$\pm$0.3 & 62.3$\pm$0.3 & 71.7$\pm$0.3 & 61.4$\pm$0.3 & 57.8$\pm$0.2 & 27.5$\pm$0.1 & 65.2$\pm$0.2 & 55.1$\pm$0.2 & 35.8$\pm$0.1 & 26.3$\pm$0.1 \\
CBLoss~\citep{Cui2019CBLoss} & 68.3$\pm$0.2 & 30.1$\pm$0.1 & 57.8$\pm$0.1 & 60.8$\pm$0.1 & 73.3$\pm$0.2 & 63.3$\pm$0.1 & 58.9$\pm$0.1 & 30.1$\pm$0.1 & 64.3$\pm$0.0 & 61.0$\pm$0.3 & 42.5$\pm$0.4 & 28.1$\pm$0.2 \\
LDAM~\citep{Cao2019LDAM} & 68.8$\pm$0.2 & 29.2$\pm$0.2 & 57.1$\pm$0.1 & 65.0$\pm$0.0 & 72.3$\pm$0.1 & 63.1$\pm$0.1 & 59.2$\pm$0.0 & 29.2$\pm$0.2 & 66.6$\pm$0.0 & 57.0$\pm$0.0 & 37.1$\pm$0.2 & 27.8$\pm$0.3 \\
BSoftmax~\citep{Ren2020BalancedSoftmax} & 68.5$\pm$0.1 & 29.9$\pm$0.1 & 57.8$\pm$0.1 & 60.5$\pm$0.3 & 73.4$\pm$0.1 & 63.3$\pm$0.0 & 58.9$\pm$0.1 & 29.9$\pm$0.1 & 64.3$\pm$0.1 & 60.9$\pm$0.3 & 42.4$\pm$0.6 & 28.2$\pm$0.1 \\
SSP~\citep{SSP} & 69.7$\pm$0.1 & 31.6$\pm$0.2 & 58.8$\pm$0.1 & 59.7$\pm$0.3 & 73.9$\pm$0.1 & 64.2$\pm$0.1 & 59.7$\pm$0.0 & 31.6$\pm$0.2 & 64.3$\pm$0.1 & 62.6$\pm$0.1 & 45.0$\pm$0.3 & 30.5$\pm$0.0 \\
CRT~\citep{Kang2020cRT} & 70.0$\pm$0.1 & 31.6$\pm$0.1 & 59.2$\pm$0.2 & 64.0$\pm$0.1 & 73.4$\pm$0.1 & 64.4$\pm$0.1 & 60.4$\pm$0.2 & 31.6$\pm$0.1 & 66.8$\pm$0.0 & 61.6$\pm$0.1 & 45.7$\pm$0.1 & 29.7$\pm$0.1 \\
BoDAr~\citep{Boda} & 70.0$\pm$0.1 & 32.6$\pm$0.1 & 59.1$\pm$0.1 & 61.2$\pm$0.4 & 73.3$\pm$0.1 & 64.1$\pm$0.1 & 60.1$\pm$0.2 & 32.6$\pm$0.1 & 65.7$\pm$0.2 & 60.6$\pm$0.1 & 42.6$\pm$0.3 & 30.5$\pm$0.2 \\
BoDA-Mr~\citep{Boda} & 70.6$\pm$0.1 & 32.2$\pm$0.2 & 57.7$\pm$0.3 & 65.5$\pm$0.3 & 70.2$\pm$0.1 & 64.5$\pm$0.1 & 60.1$\pm$0.2 & 32.2$\pm$0.2 & 65.9$\pm$0.2 & 60.7$\pm$0.1 & 42.9$\pm$0.3 & 30.0$\pm$0.1 \\
BoDAr,c~\citep{Boda} & 72.0$\pm$0.2 & 33.4$\pm$0.1 & 60.7$\pm$0.2 & 63.6$\pm$0.2 & 74.6$\pm$0.1 & 65.5$\pm$0.2 & 61.7$\pm$0.1 & 33.4$\pm$0.1 & 67.0$\pm$0.1 & 62.7$\pm$0.1 & 46.0$\pm$0.2 & \textbf{32.2}$\pm$0.3 \\
BoDA-Mr,c~\citep{Boda} & 71.8$\pm$0.1 & 33.3$\pm$0.1 & 60.8$\pm$0.1 & 63.7$\pm$0.3 & 74.6$\pm$0.1 & 65.8$\pm$0.2 & 61.7$\pm$0.2 & 33.3$\pm$0.1 & 67.0$\pm$0.1 & \textbf{63.0}$\pm$0.3 & \textbf{46.6}$\pm$0.4 & 31.8$\pm$0.2 \\
\midrule
\textbf{Ours} & \textbf{72.8}$\pm$0.0 & \textbf{33.8}$\pm$0.1 & \textbf{63.4}$\pm$0.1 & \textbf{67.0}$\pm$0.2 & \textbf{78.9}$\pm$0.1 & \textbf{67.2}$\pm$0.0 & \textbf{63.8}$\pm$0.1 & \textbf{33.8}$\pm$0.1 & \textbf{71.0}$\pm$0.1 & 62.3$\pm$0.1 & 41.9$\pm$0.2 & 31.6$\pm$0.3 \\
\bottomrule
\end{tabular}}
\end{table}

\renewcommand{\thefootnote}{\fnsymbol{footnote}}
\begin{table}[h]
\centering
\setlength{\tabcolsep}{4pt}
\caption{Accuracy (\%) under the multi-domain long-tailed (MDLT) setting on \textbf{DomainNet126-MLT}. Results are sourced from their original papers, and are reported by domain and by shot.}
\label{tab:dn126mlt_results}
\resizebox{\linewidth}{!}{%
\begin{tabular}{l|cccccc|cccc}
\toprule
\multicolumn{1}{c}{}& \multicolumn{6}{c}{\textbf{Accuracy (by domain)}} & \multicolumn{4}{c}{\textbf{Accuracy (by shot)}} \\
\cmidrule(lr){2-7}\cmidrule(lr){8-11}
\multicolumn{1}{c}{\textbf{Algorithm}} & \textbf{C} & \textbf{P} & \textbf{R} & \textbf{S} & \textbf{Average} & \multicolumn{1}{c}{\textbf{Worst}} & \textbf{Many} & \textbf{Medium} & \textbf{Few} & \textbf{Zero} \\
\midrule\midrule
ERM~\citep{ERM}       & 78.9$\pm$0.4 & 71.1$\pm$0.3 & 85.6$\pm$0.1 & 78.6$\pm$0.3 & 78.6$\pm$0.1 & 71.1$\pm$0.3 & 84.1$\pm$0.1 & 78.5$\pm$0.2 & 62.7$\pm$0.6 & 48.3$\pm$0.5 \\
IRM~\citep{IRM}       & 78.4$\pm$0.0 & 72.9$\pm$0.0 & 86.5$\pm$0.0 & 78.7$\pm$0.0 & 79.1$\pm$0.0 & 72.9$\pm$0.0 & 84.6$\pm$0.0 & 79.4$\pm$0.0 & 63.0$\pm$0.0 & 47.0$\pm$0.0 \\
GroupDRO~\citep{GroupDRO}  & 77.4$\pm$0.3 & 70.0$\pm$0.5 & 86.6$\pm$0.1 & 77.8$\pm$0.2 & 77.9$\pm$0.3 & 70.0$\pm$0.5 & 83.9$\pm$0.2 & 77.5$\pm$0.2 & 60.5$\pm$1.7 & 50.0$\pm$1.1 \\
Mixup~\citep{Mixup}     & 80.6$\pm$0.2 & 73.1$\pm$0.3 & 86.9$\pm$0.1 & 80.2$\pm$0.2 & 80.2$\pm$0.1 & 73.1$\pm$0.3 & 85.3$\pm$0.0 & 80.6$\pm$0.1 & 65.6$\pm$1.2 & 49.5$\pm$0.1 \\
MLDG~\citep{MLDG}      & 79.0$\pm$0.1 & 70.9$\pm$0.2 & 85.9$\pm$0.1 & 78.2$\pm$0.1 & 78.5$\pm$0.0 & 70.9$\pm$0.2 & 84.1$\pm$0.1 & 78.7$\pm$0.2 & 62.4$\pm$0.3 & 47.8$\pm$0.6 \\
CORAL~\citep{DeepCORAL}     & 79.4$\pm$0.2 & 73.0$\pm$0.1 & 86.6$\pm$0.2 & 79.3$\pm$0.1 & 79.6$\pm$0.1 & 73.0$\pm$0.1 & 84.9$\pm$0.1 & 79.6$\pm$0.1 & 64.0$\pm$0.2 & 51.7$\pm$0.9 \\
MMD~\citep{MMD}       & 79.2$\pm$0.0 & 72.3$\pm$0.0 & 86.4$\pm$0.0 & 79.5$\pm$0.0 & 79.3$\pm$0.0 & 72.3$\pm$0.0 & 84.5$\pm$0.0 & 79.8$\pm$0.0 & 63.9$\pm$0.0 & 49.5$\pm$0.0 \\
DANN~\citep{DANN}      & 74.7$\pm$0.5 & 68.8$\pm$0.2 & 82.1$\pm$0.3 & 75.2$\pm$0.1 & 75.2$\pm$0.2 & 68.8$\pm$0.2 & 80.7$\pm$0.3 & 74.5$\pm$0.3 & 60.5$\pm$0.8 & 47.7$\pm$0.9 \\
CDANN~\citep{CDANN}     & 75.8$\pm$0.5 & 69.1$\pm$0.3 & 82.0$\pm$0.3 & 75.4$\pm$0.3 & 75.6$\pm$0.2 & 69.1$\pm$0.3 & 80.6$\pm$0.2 & 75.5$\pm$0.2 & 61.3$\pm$0.7 & 48.0$\pm$0.7 \\
MTL~\citep{blanchard2021mtl_marginal_transfer_learning} & 78.1$\pm$0.1 & 70.6$\pm$0.4 & 85.9$\pm$0.2 & 77.8$\pm$0.2 & 78.1$\pm$0.2 & 70.6$\pm$0.4 & 83.9$\pm$0.1 & 78.2$\pm$0.4 & 61.4$\pm$0.7 & 45.5$\pm$0.6 \\
SagNet~\citep{SAGNET}  & 79.0$\pm$0.1 & 71.5$\pm$0.4 & 85.5$\pm$0.3 & 79.0$\pm$0.2 & 78.8$\pm$0.2 & 71.5$\pm$0.4 & 84.1$\pm$0.2 & 79.2$\pm$0.3 & 61.3$\pm$0.8 & 49.4$\pm$1.0 \\
Fish~\citep{Fish}      & 79.0$\pm$0.4 & 70.7$\pm$0.3 & 86.2$\pm$0.1 & 78.5$\pm$0.2 & 78.6$\pm$0.1 & 70.7$\pm$0.3 & 84.2$\pm$0.2 & 78.6$\pm$0.4 & 63.0$\pm$0.3 & 47.6$\pm$1.0 \\
Focal~\citep{Lin2017Focal}  & 78.1$\pm$0.3 & 70.0$\pm$0.2 & 85.1$\pm$0.3 & 78.1$\pm$0.2 & 77.8$\pm$0.2 & 70.0$\pm$0.2 & 83.6$\pm$0.1 & 77.8$\pm$0.2 & 61.8$\pm$0.5 & 45.8$\pm$0.4 \\
CBLoss~\citep{Cui2019CBLoss} & 79.0$\pm$0.2 & 71.3$\pm$0.4 & 85.8$\pm$0.3 & 79.0$\pm$0.1 & 78.8$\pm$0.2 & 71.3$\pm$0.4 & 83.3$\pm$0.3 & 80.0$\pm$0.3 & 65.2$\pm$0.1 & 48.8$\pm$0.8 \\
LDAM~\citep{Cao2019LDAM}    & 79.0$\pm$0.1 & 71.1$\pm$0.1 & 86.2$\pm$0.2 & 78.2$\pm$0.3 & 78.6$\pm$0.1 & 71.1$\pm$0.1 & 83.8$\pm$0.1 & 79.2$\pm$0.1 & 63.2$\pm$1.1 & 49.2$\pm$1.1 \\
BSoftmax~\citep{Ren2020BalancedSoftmax} & 79.9$\pm$0.3 & 71.9$\pm$0.4 & 86.0$\pm$0.1 & 79.9$\pm$0.0 & 79.4$\pm$0.2 & 71.9$\pm$0.4 & 83.2$\pm$0.0 & 80.4$\pm$0.1 & \textbf{68.1}$\pm$0.9 & \textbf{55.0}$\pm$0.8 \\
CRT~\citep{Kang2020cRT}     & 80.2$\pm$0.2 & 72.5$\pm$0.1 & 87.2$\pm$0.1 & 80.8$\pm$0.1 & 80.2$\pm$0.0 & 72.5$\pm$0.1 & 85.3$\pm$0.0 & 80.7$\pm$0.1 & 65.1$\pm$0.4 & 51.0$\pm$0.5 \\
BoDA~\citep{Boda}     & 80.0$\pm$0.2 & 73.1$\pm$0.1 & 86.4$\pm$0.2 & 79.4$\pm$0.2 & 79.7$\pm$0.0 & 73.1$\pm$0.1 & 84.5$\pm$0.1 & 80.4$\pm$0.1 & 65.8$\pm$0.7 & 51.7$\pm$1.2 \\
\midrule
\textbf{Ours} & \textbf{81.0}$\pm$0.2 & \textbf{74.2}$\pm$0.2 & \textbf{88.4}$\pm$0.2 & \textbf{80.9}$\pm$0.1 & \textbf{81.1}$\pm$0.2 & \textbf{74.2}$\pm$0.2 & \textbf{86.5}$\pm$0.1 & \textbf{81.6}$\pm$0.2 & 64.6$\pm$0.4 & 51.0$\pm$0.4 \\
\bottomrule
\end{tabular}}
\end{table}





\clearpage

\section{Limitations and Future Research}

While this work introduces a novel theoretical framework and the \textsc{RC-Align} algorithm for domain generalization under compound distribution shifts, several limitations exist that suggest directions for future research.

\paragraph{Architectural Dependence and Scalability.}
Our experimental evaluation was conducted using a ResNet-50 backbone pre-trained on ImageNet, following standard protocols in the domain generalization literature. While this ensures fair comparison with existing work, it remains unverified whether the benefits of \textsc{RC-Align} are specific to CNN architectures or if they extend to more recent architectures like Vision Transformers (ViTs). Furthermore, its effectiveness in settings where models are trained from scratch has not been explored. Future work should validate the generalizability of the proposed methodology across a diverse range of backbone architectures.

\paragraph{Computational Overhead of Meta-Learning.}
\textsc{RC-Align} employs a MAML-style meta-learning framework involving a nested optimization process. Although the use of the First-Order MAML (FO-MAML) approximation mitigates the cost to a reasonable trade-off (approximately $1.45\times$ that of ERM), the method inevitably demands higher computational resources than standard training. This additional computational burden remains a constraint that should be considered, particularly when scaling to massive datasets or operating in resource-limited environments.

\paragraph{Centroid Estimation in Long-Tail Scenarios.}
The proposed Domain-Class Distribution Alignment (DA) loss relies on the accurate estimation of class-wise centroids from the source domains. In this study, centroids are estimated on-the-fly using mini-batches for computational efficiency within the meta-learning loop. However, this approach can introduce noise and instability in the estimates, particularly for tail classes in the Multi-Domain Long-Tail (MDLT) setting where data is sparse. Inaccurate centroid estimation could undermine the effectiveness of feature alignment, necessitating the exploration of more robust centroid estimation techniques that do not significantly increase computational complexity.

\paragraph{Assumptions in Theoretical Analysis.}
The theoretical analysis of the meta-learning procedure's optimization dynamics (Section~\ref{sec:maml_analysis}) relies on strong assumptions such as L-smoothness and the Polyak--Lojasiewicz (PL) condition to characterize convergence behavior. As acknowledged in the paper, the PL condition may not hold globally for the highly non-convex loss landscapes of deep neural networks. Therefore, the theoretical guarantees regarding optimization should be interpreted as holding under local or idealized settings.

\paragraph{Theoretical Scope on Domain Interpolation.}
Consistent with prevalent assumptions in Domain Generalization literature~\cite{muandet2013domain, blanchard2021mtl_marginal_transfer_learning, wang2022generalizing, rosenfeld2022online}, our theoretical framework primarily addresses scenarios where target domains lie within the support of source domains (i.e., interpolation). This focus reflects a standard characteristic of mixture-based DG analysis. Addressing generalization to extrapolated targets often necessitates modeling causal mechanisms or mitigating negative transfer effects~\cite{IRM, mahajan2021domain, sheth2022domain, lv2022causality}, which presents a challenge orthogonal to our current problem setting. We identify robustness against extrapolation as a promising direction for future research.

\clearpage





\end{document}